\setlist[description]{font=\normalfont\space}
\DeclareMathAlphabet{\pazocal}{OMS}{zplm}{m}{n}
\def\rn{\mathbb{R}}
\def\cn{\mathbb{C}}
\def\nn{\mathbb{N}}
\def\l{\left}
\def\r{\right}
\def\sF{\pazocal{F}}
\def\sG{\pazocal{G}}
\def\sM{\pazocal{M}}
\def\sX{\pazocal{X}}
\def\sD{\pazocal{D}}
\def\sP{\mathscr{P}}
\def\sM{\mathscr{M}}
\def\sQ{\mathscr{Q}}
\def\sL{\pazocal{L}}
\def\bX{\mathbf{X}}
\def\hs{\mathscr{HS}}
\def\pr{\mathbb{P}}
\def\d2{\sD_2}
\def\dd{\Delta\left( \sD \right)}
\def\simiid{\overset{iid}{\sim}}
\def\spn{\operatorname{span}}
\def\cip{\overset{p}{\rightarrow}}
\def\lrd{{\ell^2\left( \rn^d \right)}}
\theoremstyle{plain}
\newtheorem{thm}{Theorem}[section]
\newtheorem{lem}{Lemma}[section]
\newtheorem{cor}{Corollary}[section]
\theoremstyle{defintion}
\newtheorem{defin}{Definition}[section]
\begin{document}

\begin{frontmatter}
	\title{An Operator Theoretic Approach to Nonparametric Mixture Models}
	\runtitle{Operator Theory for Mixture Models}

	\begin{aug}
		\author{\fnms{Robert} \snm{Vandermeulen}\corref{}\thanksref{rob}\ead[label=e1]{rvdm@umich.edu}}
		\and
		\author{\fnms{Clayton} \snm{Scott}\thanksref{clay} \ead[label=e2]{clayscot@umich.edu}}

		\runauthor{Vandermeulen and Scott}

		\affiliation{University of Michigan: Electrical and Computer Engineering\thanksmark{rob}\thanksmark{clay}, Statistics\thanksmark{clay}}

		\address{EECS Building\\
			1301 Beal Avenue\\
			Ann Arbor MI, 48109\\
			\printead{e1}\\
		\phantom{E-mail:\ }\printead*{e2}}
	\end{aug}

	\begin{abstract}
		When estimating finite mixture models, it is common to make assumptions on the mixture components, such as parametric assumptions. In this work, we make no distributional assumptions on the mixture components and instead assume that observations from the mixture model are grouped, such that observations in the same group are known to be drawn from the same mixture component. We precisely characterize the number of observations $n$ per group needed for the mixture model to be identifiable, as a function of the number $m$ of mixture components. In addition to our assumption-free analysis, we also study the settings where the mixture components are either linearly independent or jointly irreducible. Furthermore, our analysis considers two kinds of identifiability -- where the mixture model is the simplest one explaining the data, and where it is the only one. As an application of these results, we precisely characterize identifiability of multinomial mixture models. Our analysis relies on an operator-theoretic framework that associates mixture models in the grouped-sample setting with certain infinite-dimensional tensors. Based on this framework, we introduce general spectral algorithms for recovering the mixture components and illustrate their use on a synthetic data set.
	\end{abstract}



\end{frontmatter}

\section{Introduction}
A finite mixture model $\sP$ is a probability measure over a space of probability measures where $\sP\left( \left\{ \mu_i \right\} \right)=w_i >0$ for some finite collection of probability measures $\mu_1,\ldots,\mu_m$ and $\sum_{i=1}^m w_i = 1$. A realization from this mixture model first randomly selects some mixture component $\mu \sim \sP$ and then draws from $\mu$. Mixture models have seen extensive use in statistics and machine learning.

A central theoretical question concerning mixture models is that of identifiability. A mixture model is said to be {\em identifiable} if there is no other mixture model that defines the same distribution over the observed data. Classically mixture models were concerned with the case where the observed data $X_1,X_2,\ldots$ are iid with $X_i$ distributed according to some unobserved random measure $\mu_i$ with $\mu_i\simiid \sP$. This situation is equivalent to $X_i \simiid \sum_{j=1}^m w_j \mu_j$. If we impose no restrictions on the mixture components $\mu_1,\ldots,\mu_m$ one could easily concoct many choices of $\mu_j$ and $w_j$ which yield an identical distribution on $X_i$. Because of this, most previous work on identifiability assumes some sort of structure on $\mu_1,\ldots,\mu_m$, such as Gaussianity \cite{anderson14,bruni85, yakowitz68}. In this work we consider an alternative scenario where we make no assumptions on $\mu_1,\ldots,\mu_m$ and instead have access to groups of samples that are known to come from the same component. We will call these groups of samples ``random groups.'' Mathematically a random group is a random element $\bX_i$ where $\bX_i = \left( X_{i,1},\ldots,X_{i,n} \right)$ with $X_{i,1},\ldots,X_{i,n}\simiid \mu_i$ and $\mu_i \simiid \sP$.

In this paper we show that every mixture model with $m$ components is $(2m - 1)$-identifiable and $2m$-determined. Furthermore we show that any mixture model with linearly independent components is $3$-identifiable and $4$-determined, and any mixture model with jointly irreducible components is $2$-determined. These results, presented in Section \ref{sec:mainresults}, hold for any mixture model over any space and cannot be improved. The operator theoretic framework underlying our analysis is presented in Section \ref{sec:tensorproducts}, and the proofs our our main results appear in Section \ref{sec:proofsoftheorems}. In Section \ref{sec:multinomial}, we apply our main results to demonstrate some new and old results on the identifiability of multinomial mixture models. Section \ref{sec:algorithm} describes and analyzes a spectral algorithm for the recovery of the mixture components and weights, and experimental results on simulated data are presented in Section \ref{sec:experiments}. Related work, the problem formulation, and a concluding discussion are offered in Sections \ref{sec:previouswork}, \ref{sec:problemsetup}, and \ref{sec:discussion}, respectively.

	This paper contains and greatly expands on the results in our other work \cite{arxiv15}. Consequently there is some amount of overlap with this paper and \cite{arxiv15}.
        \section{Previous Work}\label{sec:previouswork}
	In classical mixture model theory identifiability is achieved by making assumptions about the mixture components. Some assumptions which yield identifiability are Gaussian or binomial mixture components \cite{bruni85,teicher63}. If one makes no assumptions on the mixture components then one must leverage some other type of structure in order to achieve identifiability. An example of such structure exists in the context of multiview models. In a multiview model samples have the form $\bX_i = \left( X_{i,1},\ldots, X_{i,n} \right)$ and the distribution of $\bX_i$ is defined by $\sum_{i=1}^m w_i \prod_{j=1}^n \mu_i^j$. In \cite{allman09} it was shown that if $\mu_i^j$ are probability distributions on $\rn$ with $\mu_1^j,\ldots,\mu_m^j$ linearly independent for all $j$ and $n\ge 3$, then the model is identifiable.

	The setting which we investigate is a special case of the multiview model where $\mu_i^j = \mu_i^{j'}$ for all $i,j,j'$. If the sample space of the $\mu_i$ is finite then this problem is exactly the topic modelling problem with a finite number of topics and one topic for each document. In topic modelling each $\mu_i$ is a ``topic'' and the sample space is a finite collection of words. This setting is well studied and it has been shown that one can recover the true topics provided certain assumptions on the topics are satisfied \cite{allman09, anandkumar14, arora12}. This problem was studied for arbitrary topics in \cite{rabani14}. In this paper the authors introduce an algorithm that recovers any mixture of $m$ topics provided $2m-1$ words per document. They also show, in a result analogous to our own, that this $2m-1$ value cannot be improved. Our proof techniques are quite different than those used in \cite{rabani14}, hold for arbitrary sample spaces, and are less complex. In Lemma \ref{lem:mult} we show that, when restricted to categorical spaces, the grouped sample setting introduced in this paper is equivalent to a multinomial mixture model. Fundamental bounds on the identifiability of multinomial mixture models can be found in \cite{kim1984,elmore2003}. We will reproduce these results (and develop some new results) using techniques developed in this paper. Additional connections to previous work are given later.
        \section{Problem Setup} \label{sec:problemsetup}
	We treat this problem in a general setting. For any measurable space we define $\delta_x$ as the Dirac measure at $x$. For $\Upsilon$ a set, $\sigma$-algebra, or measure, we denote $\Upsilon^{\times a}$ to be the standard $a$-fold product associated with that object. Let $\nn$ be the set of integers greater than or equal to zero and $\nn_+$ be the integers strictly greater than 0. For $k \in \nn_+$, we define $\left[ k \right] \triangleq \mathbb{N_+} \cap \left[ 1,k \right]$.
	Let $\Omega$ be a set containing more than one element. This set is the sample space of our data. Let $\sF$ be a $\sigma$-algebra over $\Omega$. Assume $\sF \neq \left\{ \emptyset, \Omega \right\}$, i.e. $\sF$ contains nontrivial events. We denote the space of probability measures over this space as $\sD\left( \Omega,\sF \right)$, which we will shorten to $\sD$. We will equip $\sD$ with the $\sigma$-algebra $2^\sD$ so that each Dirac measure over $\sD$ is unique. Define $\dd \triangleq \spn \left( \l\{\delta_x: x \in \sD \r\}\right)$. This is the ambient space where our mixtures of probability measures live. Let $\sP = \sum_{i=1}^m  w_i \delta_{\mu_i}$ be a probability measure in $\dd$. Let $\mu\sim \sP$ and $X_1 ,\ldots, X_n \simiid \mu$. Here $\bX$ is a random group sample, which was described in the introduction. We will denote $\bX = \left( X_1,\ldots,X_n \right)$.

	We now derive the probability law of $\bX$. Let $A\in \sF^{\times n}$. Letting $\pr$ reflect both the draw of $\mu\sim \sP$ and $X_1,\ldots,X_n \simiid \mu$, we have
	\begin{eqnarray*}
		\pr\left(\bX \in A \right)
		&=& \sum_{i=1}^m \pr\left( \bX \in A \right|\mu=\mu_i) \pr\left( \mu=\mu_i \right)\\
	 &=&  \sum_{i=1}^m w_i \mu_i^{\times n}\left( A \right).
	\end{eqnarray*}
	The second equality follows from Lemma 3.10 in \cite{fomp}.
	So the probability law of $\bX$ is 
	\begin{eqnarray*}
		\sum_{i=1}^m w_i \mu_i^{\times n}. 
	\end{eqnarray*}
	We want to view the probability law of $\bX$ as a function of $\sP$ in a mathematically rigorous way, which requires a bit of technical buildup.
	Let $\sQ\in \dd$. From the definition of $\dd$ it follows that $\sQ$ admits the representation $$\sQ = \sum_{i=1}^r \alpha_i\delta_{\nu_i} .$$
	From the well-ordering principle there must exist some representation with minimal $r$ and we define this $r$ as the {\it order} of $\sQ$. We can show that the minimal representation of any $\sQ \in \dd$ is unique up to permutation of its indices.

	\begin{lem} \label{lem:represent}
		Let $\sQ\in \dd$ and admit minimal representations $\sQ = \sum_{i=1}^r  \alpha_i \delta_{\nu_i}= \sum_{j=1}^r \alpha_j'\delta_{\nu_j'}$. There exists some permutation $\psi:\left[ r \right] \to \left[ r \right]$ such that $\nu_{\psi\left( i \right)} = \nu'_i$ and $\alpha_{\psi\left( i \right)} = \alpha'_i$ for all $i$.
	\end{lem}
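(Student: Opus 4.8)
The plan is to reduce the lemma to the linear independence of the family $\{\delta_\nu : \nu \in \sD\}$ inside the vector space of finite signed measures on $\left( \sD, 2^\sD \right)$; once that is in hand, uniqueness of the coordinates of $\sQ$ with respect to this family is routine linear algebra.

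First I would unpack what a \emph{minimal} representation entails: if $\sQ = \sum_{i=1}^r \alpha_i \delta_{\nu_i}$ with $r$ minimal, then the $\nu_i$ are pairwise distinct and each $\alpha_i \neq 0$. Indeed, a zero coefficient could be deleted outright, and a repeated point $\nu_i = \nu_j$ with $i \neq j$ could be merged into the single term $\left( \alpha_i + \alpha_j \right)\delta_{\nu_i}$ (then deleted if that coefficient vanishes); either move produces a strictly shorter representation, contradicting minimality. Hence both representations in the statement are already in this reduced form.

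Next I would invoke the choice of $\sigma$-algebra $2^\sD$, which is made precisely so that the Dirac measures on $\sD$ are distinct: every singleton $\{\nu\} \subseteq \sD$ is measurable, so each $\delta_\nu$ is a genuine measure on $\left( \sD, 2^\sD \right)$, and evaluating $\sum_{i} c_i \delta_{\nu_i}$ on $\{\nu_j\}$ returns $c_j$ whenever the $\nu_i$ are pairwise distinct. This gives linear independence of $\{\delta_\nu : \nu \in \sD\}$. Applying it to the equality $\sum_{i=1}^r \alpha_i \delta_{\nu_i} = \sum_{j=1}^r \alpha_j' \delta_{\nu_j'}$, viewed as an identity of measures on $\left( \sD, 2^\sD \right)$: if some $\nu_i$ failed to occur among $\nu_1', \ldots, \nu_r'$, evaluating both sides on $\{\nu_i\}$ would yield $\alpha_i$ on the left and $0$ on the right, contradicting $\alpha_i \neq 0$; by symmetry the two sets of points coincide. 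Since each consists of $r$ distinct points, there is a unique permutation $\psi : [r] \to [r]$ with $\nu_{\psi(i)} = \nu_i'$ for all $i$, and evaluating both representations on $\{\nu_i'\}$ then forces $\alpha_{\psi(i)} = \alpha_i'$.

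I do not expect a genuine obstacle here. The only points needing care are that minimality must first be used to rule out zero coefficients and repeated points before linear independence is applied, and that the entire argument rests on singletons of $\sD$ being measurable --- which is exactly why the paper equips $\sD$ with the power-set $\sigma$-algebra $2^\sD$.
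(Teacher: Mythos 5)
Your proof is correct and follows essentially the same route as the paper's: use minimality to get distinct atoms and nonzero coefficients, then evaluate both representations on singletons $\left\{ \nu_i' \right\}$ (available because $\sD$ carries the power-set $\sigma$-algebra) to match points and coefficients via a permutation. The only difference is cosmetic — you phrase the singleton-evaluation step as linear independence of the Dirac measures, and you spell out the minimality-implies-reduced-form step that the paper simply asserts.
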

	Henceforth when we define an element of $\dd$ with a summation we will assume that the summation is a minimal representation.
	\begin{defin} \label{def:mixmeasure}
		We call $\sP =\sum_{i=1}^m w_i \delta_{\mu_i}$ a {\em mixture of measures} if it is a probability measure in $\dd$. The elements $\mu_1,\ldots,\mu_m$, are called {\em mixture components}.
	\end{defin}

	Any minimal representation of a mixture of measures $\sP$ with $m$ components satisfies $\sP=\sum_{i=1}^m w_i \delta_{\mu_i}$ with $w_i>0$ for all $i$ and $\sum_{i=1}^m w_i = 1$. Hence any mixture of measures is a convex combination of Dirac measures at elements in $\sD$.

	For a measurable space $\left(\Psi, \sG \right)$ we define $\sM \left(\Psi, \sG \right)$ as the space of all finite signed measures over $\left(\Psi, \sG \right)$. We can now introduce the operator $V_n:\dd\to \sM\left( \Omega^{\times n}, \sF^{\times n} \right)$. For a minimal representation $\sQ =\sum_{i=1}^r  \alpha_i\delta_{\nu_i}$, we define $V_n$, with $n \in \nn_+$, as 
	\begin{eqnarray*}
		V_n(\sQ) =\sum_{i=1}^r  \alpha_i\nu_i^{\times n}.
	\end{eqnarray*}
	This mapping is well defined as a consequence of Lemma \ref{lem:represent}.
	From this definition we have that $V_n\left( \sP \right)$ is simply the law of $\bX$ which we derived earlier. In the following definitions, two mixtures of measures are considered equal if they define the same measure.

	\begin{defin}\label{def:ident}
		We call a mixture of measures, $\sP$, \emph{$n$-identifiable} if there does not exist a different mixture of measures $\sP'$, with order no greater than the order of $\sP$, such that $V_n\left( \sP \right) = V_n\left( \sP' \right)$.
	\end{defin}
	\begin{defin}\label{def:det}
		We call a mixture of measures, $\sP$, \emph{$n$-determined} if there exists no other mixture of measures $\sP'$ such that $V_n\left( \sP \right) = V_n\left( \sP' \right)$. 
	\end{defin}

	Definition \ref{def:ident} and \ref{def:det} are central objects of interest in this paper. Given a mixture of measures, $\sP = \sum_{i=1}^m w_i\delta_{\mu_i}$ then $V_n(\sP)$ is equal to $\sum_{i=1}^m w_i \mu_i^{\times n}$, the measure from which $\bX$ is drawn. If $\sP$ is not $n$-identifiable then we know that there exists a different mixture of measures that is no more complex (in terms of number of mixture components) than $\sP$ which induces the same distribution on $\bX$. Practically speaking this means we need more samples in each random group $\bX$ in order for the full richness of $\sP$ to be manifested in $\bX$. A stronger version of $n$-identifiability is $n$-determinedness where we enforce the requirement that our mixture of measures be the {\em only} mixture of measures (of any order) that admits the distribution on $\bX$.
	
	A quick note on terminology. We use the term ``mixture of measures'' rather than ``mixture model'' to emphasize that a mixture of measures should be interpreted a bit differently than a typical mixture model. A ``mixture model'' connotes a probability measure on the sample space of observed data $\Omega$, whereas a ``mixture of measures'' connotes a probability measure on the sample space of the unobserved latent measures $\sD$.

        \section{Main Results} \label{sec:mainresults}
	The first result is a bound on the $n$-identifiability of all mixtures of measures with $m$ or fewer components. This bound cannot be uniformly improved.
	\begin{thm} \label{thm:ident}
		Let $\left( \Omega,\sF \right)$ be a measurable space. Mixtures of measures with $m$ components are $(2m-1)$-identifiable.
	\end{thm}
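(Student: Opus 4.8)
The plan is to argue by contradiction, and to reduce everything to a finite‑dimensional Vandermonde argument. Suppose $\sP=\sum_{i=1}^m w_i\delta_{\mu_i}$ has $m$ components and $\sP'=\sum_{j=1}^{m'}w_j'\delta_{\mu_j'}$ is a \emph{different} mixture of measures with order $m'\le m$ for which $V_{2m-1}(\sP)=V_{2m-1}(\sP')$. Taking both to be minimal representations, the $\mu_i$ are pairwise distinct (likewise the $\mu_j'$) and all weights are strictly positive. List the distinct measures occurring among $\mu_1,\dots,\mu_m,\mu_1',\dots,\mu_{m'}'$ as $\nu_1,\dots,\nu_K$, where $K\le m+m'\le 2m$. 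Then $V_{2m-1}(\sP)=V_{2m-1}(\sP')$ reads
\[
\sum_{k=1}^K c_k\,\nu_k^{\times(2m-1)}=0 ,
\]
where $c_k$ is the $\sP$‑weight of $\nu_k$ minus its $\sP'$‑weight (with the convention that an absent component contributes weight $0$). By Lemma \ref{lem:represent}, $\sP\neq\sP'$ forces the two multisets of weight–component pairs to differ, so some $c_k\neq0$. Hence it suffices to show that $\nu_1^{\times(2m-1)},\dots,\nu_K^{\times(2m-1)}$ are linearly independent in $\sM(\Omega^{\times(2m-1)},\sF^{\times(2m-1)})$, which will follow once we prove: \emph{distinct probability measures $\nu_1,\dots,\nu_K$ have $\nu_1^{\times n},\dots,\nu_K^{\times n}$ linearly independent whenever $n\ge K-1$} (here $n=2m-1\ge 2m-1\ge K-1$).

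Next I would pass to finite dimensions. Since the $\nu_k$ are pairwise distinct, for each pair $k\neq k'$ pick $A_{k,k'}\in\sF$ with $\nu_k(A_{k,k'})\neq\nu_{k'}(A_{k,k'})$; the atoms $B_1,\dots,B_L$ of the finite algebra generated by these sets still separate the $\nu_k$, so the probability vectors $p_k:=(\nu_k(B_1),\dots,\nu_k(B_L))\in\rn^L$ are pairwise distinct. Evaluating $\sum_k c_k\nu_k^{\times n}=0$ on the rectangles $B_{\ell_1}\times\cdots\times B_{\ell_n}$ gives $\sum_k c_k\prod_{j=1}^n\nu_k(B_{\ell_j})=0$ for all $(\ell_1,\dots,\ell_n)$, i.e. $\sum_{k=1}^K c_k\,p_k^{\otimes n}=0$ in $(\rn^L)^{\otimes n}$. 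So it is enough to show $p_1^{\otimes n},\dots,p_K^{\otimes n}$ are linearly independent when $n\ge K-1$, for pairwise distinct probability vectors $p_k$.

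For the last step, choose $v\in\rn^L$ off the finitely many hyperplanes $\{x:\langle p_k-p_{k'},x\rangle=0\}$, so that $t_k:=\langle p_k,v\rangle$ are pairwise distinct; and note $\langle p_k,\mathbf{1}\rangle=1$ for $\mathbf{1}=(1,\dots,1)$, since the $p_k$ are probability vectors. Contracting $\sum_k c_k\,p_k^{\otimes n}=0$ against $v^{\otimes a}\otimes\mathbf{1}^{\otimes(n-a)}$ yields $\sum_{k=1}^K c_k\,t_k^{\,a}=0$ for every $a=0,1,\dots,n$; taking $a=0,\dots,K-1$ (allowed because $K-1\le n$) gives a square Vandermonde system with distinct nodes $t_1,\dots,t_K$, hence $c_1=\cdots=c_K=0$, contradicting the earlier fact that some $c_k\neq0$. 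This contradiction proves the theorem.

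I expect the main obstacle to be the two reductions in the middle: getting from the abstract identity of product measures to a genuine finite‑dimensional tensor identity while preserving distinctness of the measures, and recognizing that it is precisely the normalization $\nu_k(\Omega)=1$ (equivalently $\langle p_k,\mathbf{1}\rangle=1$) that lets the "mixed" tensor contractions collapse to pure powers $t_k^{\,a}$ and thereby produce the Vandermonde structure responsible for the exact threshold $n=2m-1$. The genericity choice of $v$ and the Vandermonde computation itself are routine.
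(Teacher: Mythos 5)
Your proof is correct, but it takes a genuinely different route from the paper's. The paper first runs a minimality/case analysis to force $\mu_i\neq\nu_j$ for all $i,j$, then embeds all components as square-integrable densities with respect to a common dominating measure (Lemma \ref{lem:himbed}), passes to the Hilbert-space tensor power via the unitary of Lemma \ref{lem:l2prod}, and concludes from Lemma \ref{lem:linind}: at most $2m$ pairwise non-collinear densities have linearly independent $(2m-1)$-fold tensor powers. You sidestep both the case analysis and the $L^2$ machinery: by working with the net coefficients $c_k$ of the at most $K\le 2m$ distinct components (so coinciding components between $\sP$ and $\sP'$ cause no trouble, with $c_k\not\equiv 0$ guaranteed by uniqueness of minimal representations), reducing to $\rn^L$ through the atoms of a finite separating algebra evaluated on rectangles, and then proving the needed linear independence of $p_1^{\otimes n},\ldots,p_K^{\otimes n}$ for $n\ge K-1$ by contracting against $v^{\otimes a}\otimes\mathbf{1}^{\otimes(n-a)}$ and invoking an invertible Vandermonde system — the normalization $\langle p_k,\mathbf{1}\rangle=1$ being exactly what collapses mixed contractions to pure powers $t_k^a$. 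Your argument is more elementary and self-contained, and it makes transparent why the threshold is $2m-1$ (you need moments of orders $0,\ldots,K-1$ with $K\le 2m$); it is also close in spirit to the classical Vandermonde/moment proofs for binomial and multinomial mixtures cited in the paper. What the paper's heavier route buys is reusability: Lemmas \ref{lem:himbed}, \ref{lem:l2prod}, \ref{lem:linind}, and \ref{lem:tensrank} apply to arbitrary pairwise non-collinear (not necessarily normalized) vectors and are the workhorses for the determinedness, linear-independence, and joint-irreducibility theorems and for the spectral algorithms, whereas your Vandermonde trick leans on the probability normalization and would need modification in those settings.
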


	\begin{thm} \label{thm:noident}
		Let $\left( \Omega, \sF \right)$ be a measurable space with $\sF \neq \left\{ \emptyset,\Omega \right\}$. For all $m\ge 2$, there exists a mixture of measures with $m$ components that is not $(2m-2)$-identifiable.
	\end{thm}
	The following lemmas convey the unsurprising fact that $n$-identifiability is, in some sense, monotonic.
	\begin{lem}\label{lem:ident} \sloppy
		If a mixture of measures is $n$-identifiable then it is $q$-identifiable for all $q>n$.
	\end{lem}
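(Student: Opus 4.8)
The plan is to prove the contrapositive: if $\sP$ is not $q$-identifiable for some $q>n$, then $\sP$ is not $n$-identifiable. The engine of the argument is the observation that $V_n$ is, in a precise sense, a marginal of $V_q$; consequently $V_q$ separating $\sP$ from a competitor is a stronger condition than $V_n$ separating them, and any witness to non-$q$-identifiability is automatically a witness to non-$n$-identifiability.

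Concretely, for $q>n$ I would introduce the marginalization operator $M_{q,n}:\sM\left( \Omega^{\times q},\sF^{\times q} \right)\to \sM\left( \Omega^{\times n},\sF^{\times n} \right)$ defined as the pushforward under the coordinate projection $p:\Omega^{\times q}\to\Omega^{\times n}$, $p\left( x_1,\ldots,x_q \right)=\left( x_1,\ldots,x_n \right)$; explicitly $M_{q,n}\left( \lambda \right)\left( A \right)=\lambda\left( A\times\Omega^{\times\left( q-n \right)} \right)$ for $A\in\sF^{\times n}$. One checks routinely that $M_{q,n}$ maps finite signed measures to finite signed measures and is linear. The key identity is $M_{q,n}\circ V_q=V_n$. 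It suffices to verify this on the generating Dirac measures $\delta_\nu$, $\nu\in\sD$: since $\nu$ is a probability measure, $M_{q,n}\left( \nu^{\times q} \right)=\nu^{\times n}$, because on a measurable rectangle $\nu^{\times q}\left( A_1\times\cdots\times A_n\times\Omega\times\cdots\times\Omega \right)=\prod_{i\le n}\nu\left( A_i \right)=\nu^{\times n}\left( A_1\times\cdots\times A_n \right)$, and measurable rectangles form a $\pi$-system generating $\sF^{\times n}$, so the two finite measures agree. For a minimal representation $\sQ=\sum_i\alpha_i\delta_{\nu_i}$, linearity then gives $M_{q,n}\left( V_q\left( \sQ \right) \right)=\sum_i\alpha_i\nu_i^{\times n}=V_n\left( \sQ \right)$, with Lemma \ref{lem:represent} ensuring this is independent of the chosen minimal representation.

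With the factorization in hand the conclusion is immediate: assume $\sP$ is not $q$-identifiable, so there is a mixture of measures $\sP'\neq\sP$ whose order does not exceed that of $\sP$ and with $V_q\left( \sP \right)=V_q\left( \sP' \right)$. Applying $M_{q,n}$ to both sides and using $M_{q,n}\circ V_q=V_n$ yields $V_n\left( \sP \right)=V_n\left( \sP' \right)$; since $\sP'$ is still a distinct mixture of measures of order at most that of $\sP$, this shows $\sP$ is not $n$-identifiable. The only mildly delicate step is the measure-theoretic bookkeeping for $M_{q,n}$ — that it is well defined on all of $\sM\left( \Omega^{\times q},\sF^{\times q} \right)$ and that $M_{q,n}\left( \nu^{\times q} \right)=\nu^{\times n}$ — but this is a standard $\pi$-system and uniqueness-of-extension argument, so I do not anticipate a genuine obstacle; the rest is formal.
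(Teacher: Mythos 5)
Your proposal is correct and follows essentially the same route as the paper: the paper also deduces $V_n\left( \sP \right) = V_n\left( \sP' \right)$ from $V_q\left( \sP \right) = V_q\left( \sP' \right)$ by evaluating both measures on sets of the form $A \times \Omega^{\times \left( q-n \right)}$ with $A \in \sF^{\times n}$, which is exactly your marginalization map $M_{q,n}$ written out pointwise (the paper argues by contradiction rather than contrapositive, an immaterial difference). Your extra $\pi$-system bookkeeping is fine but not needed, since $\nu^{\times q}\left( A \times \Omega^{\times \left( q-n \right)} \right) = \nu^{\times n}\left( A \right)$ holds directly for every $A \in \sF^{\times n}$ by the definition of the product measure.
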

	\begin{lem} \label{lem:noident}
		If a mixture of measures is not $n$-identifiable then it is not $q$-identifiable for any $q<n$.
	\end{lem}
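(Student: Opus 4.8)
The plan is to observe that Lemma~\ref{lem:noident} is precisely the contrapositive of Lemma~\ref{lem:ident}, and hence follows from it immediately. Suppose a mixture of measures $\sP$ is not $n$-identifiable, and fix any $q < n$. If $\sP$ were $q$-identifiable, then applying Lemma~\ref{lem:ident} with the pair $(q,n)$ in place of $(n,q)$ — legitimate since $n > q$ — would force $\sP$ to be $n$-identifiable, a contradiction. Hence $\sP$ is not $q$-identifiable, and since $q < n$ was arbitrary, we are done. No step here is an obstacle; it is a one-line deduction.

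For completeness I would also record a direct argument that does not invoke Lemma~\ref{lem:ident}, in case one prefers the two monotonicity lemmas to be proved independently (read contrapositively, this same argument reproves Lemma~\ref{lem:ident} as well). Suppose $\sP$ is not $n$-identifiable, so there is a mixture of measures $\sP' \ne \sP$ whose order does not exceed that of $\sP$ with $V_n(\sP) = V_n(\sP')$. Write $V_n(\sP) = \sum_i w_i \mu_i^{\times n}$ and $V_n(\sP') = \sum_j w_j' \nu_j^{\times n}$, and fix $q < n$. I would show $V_q(\sP) = V_q(\sP')$ by testing against cylinder sets: for $A \in \sF^{\times q}$, the set $A \times \Omega^{\times(n-q)}$ lies in $\sF^{\times n}$, and for any probability measure $\nu$ on $(\Omega,\sF)$ one has $\nu^{\times n}\!\left(A \times \Omega^{\times(n-q)}\right) = \nu^{\times q}(A)\cdot \nu(\Omega)^{\,n-q} = \nu^{\times q}(A)$. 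Applying this termwise to both representations gives
\[
V_q(\sP)(A) \;=\; V_n(\sP)\!\left(A\times\Omega^{\times(n-q)}\right) \;=\; V_n(\sP')\!\left(A\times\Omega^{\times(n-q)}\right) \;=\; V_q(\sP')(A)
\]
for every $A \in \sF^{\times q}$, so $V_q(\sP) = V_q(\sP')$ as measures on $\Omega^{\times q}$. Since $\sP' \ne \sP$ still holds and its order is still no greater than that of $\sP$, the measure $\sP'$ witnesses that $\sP$ is not $q$-identifiable.

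The only point requiring care is the marginalization identity $\nu^{\times n}\!\left(A \times \Omega^{\times(n-q)}\right) = \nu^{\times q}(A)$, which is immediate from the definition of product measures together with $\nu(\Omega) = 1$; one checks it first on measurable rectangles $A = A_1 \times \cdots \times A_q$ and extends by uniqueness of the product-measure extension. There is no genuine obstacle here — the content of the lemma is entirely the observation that a counterexample to identifiability at group size $n$ restricts to a counterexample at every smaller group size.
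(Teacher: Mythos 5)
Your proposal is correct and takes essentially the same approach as the paper: the direct argument you record---evaluating $V_n(\sP)=V_n(\sP')$ on cylinder sets $A\times\Omega^{\times(n-q)}$ and using that each component is a probability measure to conclude $V_q(\sP)=V_q(\sP')$ with the same witness $\sP'$ of no greater order---is precisely the paper's proof of Lemma~\ref{lem:noident}. Your alternative one-line deduction by contraposition from Lemma~\ref{lem:ident} is also valid, and is only a relabeling since the paper proves that lemma by the identical marginalization.
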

	Viewed alternatively these results say that $n=2m-1$ is the smallest value for which $V_{n}$ is injective over the set of mixtures of measures with $m$ or fewer components.

	We also present an analogous bound for $n$-determinedness. This bound also cannot be improved.
	\begin{thm}\label{thm:det}
		Let $\left( \Omega,\sF \right)$ be a measurable space. Mixtures of measures with $m$ components are $2m$-determined.
	\end{thm}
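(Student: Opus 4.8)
\emph{Plan.} The strategy is to split the argument at the size of a hypothetical competing mixture: for competitors with at most $m$ components everything will already follow from Theorem~\ref{thm:ident}, and the only genuinely new work is to rule out a competitor with \emph{more} than $m$ components, which I would do by a rank computation in the operator framework of Section~\ref{sec:tensorproducts}.

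So let $\sP=\sum_{i=1}^m w_i\delta_{\mu_i}$ be a mixture of measures with $m$ components (the representation being minimal, so the $\mu_i$ are distinct and $w_i>0$), and suppose $\sP'=\sum_{j=1}^r \alpha_j\delta_{\nu_j}$ satisfies $V_{2m}(\sP)=V_{2m}(\sP')$. By Theorem~\ref{thm:ident} the mixture $\sP$ is $(2m-1)$-identifiable, hence $2m$-identifiable by Lemma~\ref{lem:ident}; consequently, if $r\le m$ then $\sP'=\sP$ and we are done. Thus the whole content of the theorem is the claim that $r\ge m+1$ is impossible.

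To establish this I would group the $2m$ coordinates of $\Omega^{\times 2m}$ as $m+m$ and, via the framework of Section~\ref{sec:tensorproducts}, view $V_{2m}(\sP)$ as a single operator
$M=\sum_{i=1}^m w_i\,\mu_i^{\times m}\otimes\mu_i^{\times m}=\sum_{j=1}^r \alpha_j\,\nu_j^{\times m}\otimes\nu_j^{\times m}$
on the corresponding tensor space. Two facts then do the work. First, the linear-independence result of Section~\ref{sec:tensorproducts}: the $k$-fold tensor powers of any $k'$ distinct probability measures are linearly independent as soon as $k\ge k'-1$; in particular $\mu_1^{\times m},\dots,\mu_m^{\times m}$ are linearly independent, and so are $\nu_1^{\times m},\dots,\nu_{m+1}^{\times m}$ whenever $r\ge m+1$. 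Second, $M$ is a positive self-adjoint operator, and for any positive operator of the form $\sum_\ell c_\ell\,v_\ell\otimes v_\ell$ with every $c_\ell>0$ one has $\operatorname{rank}=\dim\spn\{v_\ell\}$ (the inclusion $\operatorname{range}\subseteq\spn\{v_\ell\}$ is immediate, and $u\perp\operatorname{range}$ gives $0=\langle Mu,u\rangle=\sum_\ell c_\ell|\langle v_\ell,u\rangle|^2$, forcing $u\perp\spn\{v_\ell\}$). Applying the second fact to the $\mu$-form of $M$ yields $\operatorname{rank}(M)=\dim\spn\{\mu_i^{\times m}\}=m$; applying it to the $\nu$-form yields $\operatorname{rank}(M)=\dim\spn\{\nu_j^{\times m}:j\in[r]\}$. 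Hence that span is $m$-dimensional. But if $r\ge m+1$ the first fact makes it at least $(m+1)$-dimensional — a contradiction. Therefore $r\le m$, and the reduction above completes the proof.

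\emph{Main obstacle.} The real content is not the combinatorial skeleton above but the operator-theoretic setup of Section~\ref{sec:tensorproducts}: identifying $V_{2m}(\sP)$ with a bona fide positive operator on an infinite-dimensional tensor space, and proving the linear-independence statement in that generality. The positivity is indispensable — for signed combinations $\sum_\ell c_\ell v_\ell\otimes v_\ell$ the rank can fall strictly below $\dim\spn\{v_\ell\}$ — and this is precisely why an \emph{even} value $2m$, which allows the balanced split $m+m$, is needed here, whereas at $2m-1$ only the unbalanced split $m+(m-1)$ is available and one can pin down competitors of order at most $m$, which is exactly Theorem~\ref{thm:ident}.
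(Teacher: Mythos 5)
Your proposal is correct and follows essentially the same route as the paper's proof: reduce the case of a competitor with at most $m$ components to Theorem \ref{thm:ident} via Lemma \ref{lem:ident}, then split the $2m$ coordinates as $m+m$, realize the mixture as an order-two tensor built from $L^2$ densities (Lemmas \ref{lem:himbed}, \ref{lem:radprod}, \ref{lem:inteq}, \ref{lem:l2prod}), and derive the contradiction $m = \operatorname{rank} \ge m+1$ from Lemmas \ref{lem:linind} and \ref{lem:tensrank}. The one step you flag as the ``main obstacle''---passing from measures to a genuine positive operator on a tensor product of Hilbert spaces---is exactly what the paper's embedding lemmas supply, so no gap remains.
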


	\begin{thm} \label{thm:nodet}
		Let $\left( \Omega, \sF \right)$ be a measurable space with $\sF \neq \left\{ \emptyset,\Omega \right\}$. For all $m$, there exists a mixture of measures with $m$ components that is not $(2m-1)$-determined.
	\end{thm}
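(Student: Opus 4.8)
The plan is to reduce to a two-point sample space, where the grouped-sample model becomes a binomial mixture, and then produce the required pair of mixtures from a classical Vandermonde / divided-difference obstruction. Since $\sF \neq \{\emptyset,\Omega\}$, fix $E \in \sF$ with $\emptyset \neq E \neq \Omega$ and choose $x_1 \in E$ and $x_0 \in \Omega\setminus E$ (so $x_0 \neq x_1$). For $p \in [0,1]$ put $\mu_p := p\,\delta_{x_1} + (1-p)\,\delta_{x_0}$, a probability measure on $(\Omega,\sF)$; since $\mu_p(E) = p$, distinct values of $p$ give distinct measures, so any finite family $\{\mu_{p_i}\}$ with the $p_i$ distinct is a valid set of mixture components.

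The first step is the claim that if $\sP = \sum_{i=1}^{m} w_i\,\delta_{\mu_{p_i}}$ and $\sP' = \sum_{j} w'_j\,\delta_{\mu_{q_j}}$ satisfy $\sum_i w_i p_i^{k} = \sum_j w'_j q_j^{k}$ for $k = 0,1,\ldots,n$, then $V_n(\sP) = V_n(\sP')$. To see this, for $S\subseteq[n]$ let $z^{S}\in\Omega^{\times n}$ be the point whose $j$th coordinate is $x_1$ when $j\in S$ and $x_0$ otherwise. Expanding the product measure on a measurable rectangle and extending to $\sF^{\times n}$ (two probability measures agreeing on the generating $\pi$-system of measurable rectangles coincide), one obtains
\begin{equation*}
	\mu_p^{\times n}(B) \;=\; \sum_{S\subseteq[n]} p^{|S|}(1-p)^{\,n-|S|}\,\delta_{z^{S}}(B), \qquad B \in \sF^{\times n}.
\end{equation*}
Hence $V_n(\sP)(B) = \sum_{S\subseteq[n]} \left( \sum_i w_i p_i^{|S|}(1-p_i)^{\,n-|S|} \right)\delta_{z^{S}}(B)$; since $p\mapsto p^{\ell}(1-p)^{n-\ell}$ is a polynomial of degree at most $n$, the coefficient in parentheses is a fixed linear combination (the same for every mixture) of $m_0,\ldots,m_n$, where $m_k := \sum_i w_i p_i^{k}$. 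The same expression holds for $\sP'$ with the same values $m_0,\ldots,m_n$, so $V_n(\sP)(B) = V_n(\sP')(B)$ for every $B$.

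For the construction, take $n = 2m-1$, fix $2m+1$ distinct points $0\le t_1 < t_2 < \cdots < t_{2m+1}\le 1$, and set $v_\ell := \prod_{\ell'\neq\ell}(t_\ell - t_{\ell'})^{-1}$. For each $k$, $\sum_{\ell} v_\ell t_\ell^{k}$ is the order-$2m$ divided difference of $x\mapsto x^{k}$ on these nodes, which vanishes whenever $k\le 2m-1$; in particular $\sum_\ell v_\ell = 0$. Moreover $\operatorname{sign}(v_\ell) = (-1)^{\ell+1}$, so $v$ has $m+1$ positive entries (the odd $\ell$) and $m$ negative entries (the even $\ell$), and $c := \sum_{\ell\ \mathrm{even}}(-v_\ell) = \sum_{\ell\ \mathrm{odd}} v_\ell > 0$. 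Put
\begin{equation*}
	\sP := \sum_{\ell\ \mathrm{even}} \frac{-v_\ell}{c}\,\delta_{\mu_{t_\ell}}, \qquad\qquad \sP' := \sum_{\ell\ \mathrm{odd}} \frac{v_\ell}{c}\,\delta_{\mu_{t_\ell}} .
\end{equation*}
Both are mixtures of measures (positive weights summing to $1$), of orders $m$ and $m+1$ respectively; their atom sets in $\sD$ are disjoint and of different cardinality, so $\sP\neq\sP'$. Since $\sum_\ell v_\ell t_\ell^{k} = 0$ for $k = 0,\ldots,2m-1$, the first step gives $V_{2m-1}(\sP) = V_{2m-1}(\sP')$. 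Hence $\sP$ is a mixture of measures with $m$ components that is not $(2m-1)$-determined.

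The step I expect to require the most care is the displayed identity in the second paragraph: a priori $V_n(\sP)$ is a measure on the full product $\sigma$-algebra $\sF^{\times n}$, and one must show that, for these product measures, it is completely pinned down by the $n+1$ numbers $\sum_i w_i p_i^{\ell}(1-p_i)^{n-\ell}$ — equivalently, that all of its mass sits on the $2^{n}$ ``corner'' points $z^{S}$. Once that identity is established by extension from rectangles (it amounts to the observation that the grouped-sample model over a two-point space is a binomial mixture model), the remaining ingredients — injectivity of $p\mapsto\mu_p$, vanishing of high-order divided differences of low-degree polynomials, and the sign pattern of $v$ — are routine.
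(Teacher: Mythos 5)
Your proposal is correct, and it reaches the conclusion by a genuinely different route than the paper. The paper also works over a two-point subspace (taking components of the form $\varepsilon\,\delta_x + (1-\varepsilon)\,\delta_{x'}$), but it obtains the required cancellation abstractly via Lemma \ref{lem:test}: the dimension of the space of symmetric order-$(t-2)$ tensors over $\rn^2$ forces $f_1^{\otimes t-2},\ldots,f_t^{\otimes t-2}$ to be linearly dependent, Lemma \ref{lem:linind} forces every coefficient to be nonzero, and rank arguments (Lemma \ref{lem:tensrank}) force the positive/negative split to be balanced, giving an $m$ versus $m+1$ split for $t=2m+1$ without exhibiting the weights. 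You instead make everything explicit and elementary: you observe that over a two-point space the grouped-sample law is a binomial-type mixture supported on the $2^n$ corner points, so $V_n$ is determined by the moments $\sum_i w_i p_i^k$, $k\le n$ (your $\pi$-system extension argument for that identity is sound), and then you take the order-$2m$ divided-difference (Lagrange/Vandermonde) coefficients $v_\ell$ at $2m+1$ nodes, whose alternating signs hand you the exact $m$ / $(m+1)$ split and whose annihilation of polynomials of degree $\le 2m-1$ gives the moment matching. What the paper's route buys is reusability and uniformity: Lemma \ref{lem:test} simultaneously drives Theorem \ref{thm:noident}, and the tensor-rank tools are the backbone of the rest of the paper. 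What your route buys is a short, self-contained, fully constructive proof with closed-form weights, no Hilbert-space machinery, and a transparent link to classical binomial/multinomial moment arguments (in the spirit of the paper's own Lemma \ref{lem:mult} and its discussion of binomial mixtures); the only step needing care, as you note, is pinning $V_n$ down on all of $\sF^{\times n}$ from the rectangles, which your extension argument handles correctly.
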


	Again $n$-determinedness is monotonic in the number of samples per group.

	\begin{lem}\label{lem:det}
		If a mixture of measures is $n$-determined then it is $q$-determined for all $q>n$. 
	\end{lem}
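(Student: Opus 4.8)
The plan is to prove the contrapositive by exhibiting a \emph{marginalization} operator that intertwines $V_q$ and $V_n$ for any $q > n$. Fix $q > n$ and define $M \colon \sM(\Omega^{\times q}, \sF^{\times q}) \to \sM(\Omega^{\times n}, \sF^{\times n})$ by $(M\eta)(A) = \eta(A \times \Omega^{\times(q-n)})$ for $A \in \sF^{\times n}$. The first step is to check that $M$ is well defined and linear: for pairwise disjoint $A_1, A_2, \ldots \in \sF^{\times n}$ the sets $A_k \times \Omega^{\times(q-n)}$ are pairwise disjoint with union $(\bigcup_k A_k) \times \Omega^{\times(q-n)}$, so $M\eta$ inherits countable additivity from $\eta$, and $(M\eta)(\Omega^{\times n}) = \eta(\Omega^{\times q})$ is finite, so $M\eta$ is again a finite signed measure; linearity in $\eta$ is immediate.

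The second step is to show that $M(\nu^{\times q}) = \nu^{\times n}$ for every $\nu \in \sD$. On a measurable rectangle we have $(M(\nu^{\times q}))(A_1 \times \cdots \times A_n) = \nu(A_1)\cdots\nu(A_n)\,\nu(\Omega)^{q-n} = \nu^{\times n}(A_1 \times \cdots \times A_n)$, and since the measurable rectangles form a $\pi$-system generating $\sF^{\times n}$ on which the two finite measures $M(\nu^{\times q})$ and $\nu^{\times n}$ agree, Dynkin's $\pi$--$\lambda$ theorem gives $M(\nu^{\times q}) = \nu^{\times n}$. Combining this with the definition of $V_q$ and the linearity of $M$, for any $\sQ = \sum_i \alpha_i \delta_{\nu_i} \in \dd$ written in minimal form we obtain $M(V_q(\sQ)) = \sum_i \alpha_i M(\nu_i^{\times q}) = \sum_i \alpha_i \nu_i^{\times n} = V_n(\sQ)$; that is, $M \circ V_q = V_n$.

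Given this intertwining identity, the conclusion follows immediately. Suppose $\sP$ is not $q$-determined for some $q > n$; then there is a mixture of measures $\sP' \neq \sP$ with $V_q(\sP) = V_q(\sP')$, and applying $M$ yields $V_n(\sP) = M(V_q(\sP)) = M(V_q(\sP')) = V_n(\sP')$ with $\sP' \neq \sP$, so $\sP$ is not $n$-determined. This is the contrapositive of the claim.

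I do not expect a genuine obstacle here: the argument is short once $M$ is in place, and the only parts requiring care are verifying that $M$ maps finite signed measures to finite signed measures and that $M(\nu^{\times q}) = \nu^{\times n}$, the latter being the single spot where a uniqueness-of-measures ($\pi$--$\lambda$) argument is needed. The same marginalization construction, used in the appropriate direction, is also the natural tool for the monotonicity statements in Lemmas \ref{lem:ident} and \ref{lem:noident}.
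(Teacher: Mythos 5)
Your argument is correct and is essentially the paper's own proof: the paper also marginalizes out the extra $q-n$ coordinates by evaluating both sides of $V_q(\sP)=V_q(\sP')$ on cylinder sets $A\times\Omega^{\times(q-n)}$, just phrased as a direct contradiction rather than via an explicit operator $M$ with a $\pi$--$\lambda$ verification. The extra packaging you add (linearity of $M$ and uniqueness on rectangles) is fine but not a different method.
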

	\begin{lem} \label{lem:nodet}
		If a mixture of measures is not $n$-determined then it is not $q$-determined for any $q<n$. 
	\end{lem}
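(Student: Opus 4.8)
The plan is to observe that Lemma \ref{lem:nodet} is precisely the contrapositive of Lemma \ref{lem:det}, so it suffices to establish a single ``marginalization'' property of the operators $V_n$ and read off both statements. Fix $q<n$ and let $\pi:\Omega^{\times n}\to\Omega^{\times q}$ be the projection onto the first $q$ coordinates. I claim that for every $\sQ\in\dd$ we have $V_q(\sQ)=V_n(\sQ)\circ\pi^{-1}$; that is, $V_q(\sQ)$ is the pushforward of $V_n(\sQ)$ under $\pi$. Granting this, suppose $\sP$ is not $n$-determined. By Definition \ref{def:det} there is a mixture of measures $\sP'\neq\sP$ with $V_n(\sP)=V_n(\sP')$; applying $\,\cdot\circ\pi^{-1}$ to both sides and invoking the claim yields $V_q(\sP)=V_q(\sP')$, so $\sP'$ witnesses that $\sP$ is not $q$-determined.

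To prove the claim, write the (minimal) representation $\sQ=\sum_{i=1}^r\alpha_i\delta_{\nu_i}$, so that by definition $V_n(\sQ)=\sum_{i=1}^r\alpha_i\nu_i^{\times n}$ and $V_q(\sQ)=\sum_{i=1}^r\alpha_i\nu_i^{\times q}$. Since pushforward is linear in the measure, it is enough to check $\nu_i^{\times n}\circ\pi^{-1}=\nu_i^{\times q}$ for each $i$. For a measurable rectangle $A_1\times\cdots\times A_q$ with each $A_k\in\sF$ we have $\pi^{-1}(A_1\times\cdots\times A_q)=A_1\times\cdots\times A_q\times\Omega^{\times(n-q)}$, hence $\nu_i^{\times n}(\pi^{-1}(A_1\times\cdots\times A_q))=\prod_{k=1}^q\nu_i(A_k)=\nu_i^{\times q}(A_1\times\cdots\times A_q)$, where we used that each $\nu_i$ is a probability measure so the extra $n-q$ factors contribute total mass $1$. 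The measurable rectangles form a $\pi$-system generating $\sF^{\times q}$, and both $\nu_i^{\times n}\circ\pi^{-1}$ and $\nu_i^{\times q}$ are probability measures, so they coincide on all of $\sF^{\times q}$ by the $\pi$--$\lambda$ theorem. This proves the claim, and with it Lemma \ref{lem:nodet} (equivalently, the claim shows $V_q$ factors through $V_n$, which is Lemma \ref{lem:det}).

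I do not expect a genuine obstacle here: the argument is routine marginalization. The only step meriting care is the identity $\nu_i^{\times n}\circ\pi^{-1}=\nu_i^{\times q}$, which genuinely uses that the $\nu_i$ are probability (not merely finite signed) measures --- this holds because $\sP$ and $\sP'$ are mixtures of measures in the sense of Definition \ref{def:mixmeasure} --- together with the standard uniqueness-of-measures extension from a generating $\pi$-system. One should also note in passing that no constraint on the order of $\sP'$ enters, so there is no bookkeeping of orders as there would be in the $n$-identifiability analogue (Lemma \ref{lem:noident}).
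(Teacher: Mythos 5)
Your proof is correct and follows essentially the same route as the paper's: the paper also deduces non-$q$-determinedness by marginalizing the equality $V_n(\sP)=V_n(\sP')$ over the last $n-q$ coordinates, evaluating both sides on sets of the form $A\times\Omega^{\times(n-q)}$ with $A\in\sF^{\times q}$. Your pushforward-under-projection phrasing, with the rectangle/$\pi$--$\lambda$ verification, is just a more explicitly justified version of that same marginalization step (and the observation that the components $\nu_i$ lie in $\sD$, hence are probability measures, already covers every $\sQ\in\dd$).
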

	This collection of results can be interpreted in an alternative way. Consider some pair of mixtures of measures $\sP, \sP'$. If $n\ge2m$ and either mixture of measures is of order $m$ or less, then $V_n\left( \sP \right) = V_n\left( \sP' \right)$ implies $\sP = \sP'$. Furthermore $n=2m$ is the smallest value of $n$ for which the previous statement is true for all pairs of mixtures of measures.

	Our definitions of $n$-identifiability, $n$-determinedness, and their relation to previous works on identifiability deserve a bit of discussion. Some previous works on identifiability contain results related to what we call ``identifiability'' and others contain results related what we call ``determinedness.'' Both of these are simply called ``identifiability'' in these works. For example in \cite{yakowitz68} it is shown that different finite mixtures of multivariate Gaussian distributions will always yield different distributions, a result which we could call ``determinedness.'' Alternatively, in \cite{teicher63} it is demonstrated that mixtures of binomial distributions, with a fixed number of trials $n$ for every mixture component, are identifiable provided we only consider mixtures with $m$ mixture components and $n \ge 2m-1$. In this result allowing for more mixture components may destroy identifiability and thus this is what {\em we} would call an ``identifiability'' result. The fact that the value $2m-1$ occurs in both the previous binomial mixture model result and Theorem \ref{thm:ident} is not a coincidence. We will demonstrate a new determinedness result for multinomial mixtures models later in the paper, under the assumption that $n\ge 2m$. We will prove these results using Theorems \ref{thm:ident} and \ref{thm:det}. To our knowledge our work is the first to consider both identifiability and determinedness.

	Finally we also include results that are analogous to previously shown results for the discrete setting. We note that our proof techniques are markedly different than the previous proofs for the discrete case.
	\begin{thm} \label{thm:liident}
		If $\sP = \sum_{i=1}^m  w_i\delta_{\mu_i}$ is a mixture of measures where $\mu_1,\ldots,\mu_m$ are linearly independent then $\sP$ is $3$-identifiable.
	\end{thm}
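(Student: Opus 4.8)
The plan is to exploit the operator-theoretic machinery: associate to the mixture $\sP = \sum_{i=1}^m w_i \delta_{\mu_i}$ the symmetric tensor $V_n(\sP) = \sum_{i=1}^m w_i \mu_i^{\times n}$, and argue that when $n = 3$ and the $\mu_i$ are linearly independent, this data determines the $\mu_i$ (up to relabeling) and the $w_i$ among all mixtures of order $\le m$. Suppose $\sP' = \sum_{j=1}^{m'} w_j' \delta_{\nu_j}$ is another mixture of measures with $m' \le m$ and $V_3(\sP) = V_3(\sP')$. The goal is to show $\sP = \sP'$.

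First I would reduce to a finite-dimensional linear-algebra statement. The measures $\mu_1,\ldots,\mu_m$ span a finite-dimensional subspace $U \subseteq \sM(\Omega,\sF)$ of dimension $m$. The key preliminary step is to show that every $\nu_j$ also lies in $U$ (or at least in the span of the $\mu_i$ together with the $\nu_j$, which one then controls): this is where one uses that $V_3$ "sees" enough. Concretely, integrating $\mu_i^{\times 3}$ against test functions in two of the three coordinates produces elements of $\spn\{\mu_i\}$, and matching these for $\sP$ and $\sP'$ forces the $\nu_j$ into a controlled subspace; a clean way to organize this is via the "two-slice" contraction of the symmetric 3-tensor, reducing to the $n=2$ (matrix) picture. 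Once all components live in a common finite-dimensional space $W$ of dimension $d$, pick a basis and write $\mu_i, \nu_j$ as coordinate vectors in $\rn^d$; then $V_3(\sP) = V_3(\sP')$ becomes an identity of symmetric 3-tensors over $\rn^d$:
\begin{eqnarray*}
\sum_{i=1}^m w_i\, \mu_i \otimes \mu_i \otimes \mu_i = \sum_{j=1}^{m'} w_j'\, \nu_j \otimes \nu_j \otimes \nu_j.
\end{eqnarray*}

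Next I would invoke Kruskal-type uniqueness for symmetric tensor decompositions. Since $\mu_1,\ldots,\mu_m$ are linearly independent, the Kruskal rank of the factor matrix $[\mu_1 \cdots \mu_m]$ is $m$. Kruskal's condition for a third-order tensor $\sum_i \mu_i \otimes \mu_i \otimes \mu_i$ with all three factor matrices equal reads $3\,k \ge 2R + 2$ where $k$ is the Kruskal rank and $R$ the number of terms; with $k = m$ and $R = m$ this is $3m \ge 2m+2$, i.e. $m \ge 2$, and the degenerate cases $m=1$ (trivial) are handled directly. Kruskal's theorem then gives that the decomposition on the left is the unique rank-$m$ CP decomposition, so the right-hand decomposition, having $m' \le m$ terms, must coincide with it up to permutation and scaling; absorbing scalars into the weights and using that both the $w_i$ and $w_j'$ sum to one (and are positive) pins down $w_{\psi(j)} = w_j'$ and $\mu_{\psi(j)} = \nu_j$. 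Translating back through the chosen basis yields $\sP = \sP'$, which is exactly $3$-identifiability by Definition \ref{def:ident}. (An alternative to citing Kruskal is a self-contained argument: linear independence lets one build, for each $i$, a function $g_i$ on $\Omega$ with $\int g_i \, d\mu_{i'} = \delta_{i i'}$; contracting one slot of the 3-tensor against $g_i$ isolates $w_i \mu_i \otimes \mu_i$, a rank-one matrix whose normalization recovers $\mu_i$ and whose scale recovers $w_i$ — and one checks this forces the same on the primed side.)

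The main obstacle I anticipate is the first step — controlling the ambient subspace. Nothing a priori forces $\nu_j \in \spn\{\mu_i\}$, and $\Omega$ is an arbitrary measurable space with no topology or reference measure, so one cannot casually integrate against arbitrary test functions or invoke finite-dimensional duality. The fix is to work only with indicator functions of sets in $\sF$, noting that $V_n$ is determined by its values on products of such sets, and to argue that the collection of "marginalized/contracted" measures obtained from $V_3(\sP) = V_3(\sP')$ spans a finite-dimensional space containing all components on both sides; equivalently, one shows the $\nu_j$ satisfy the same linear relations that confine them to the $m$-dimensional span, because any linear functional annihilating all $\mu_i$ must, via the tensor identity, annihilate all $\nu_j$ with nonzero weight. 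Making this rigorous without topology — purely through the measure-algebra and the definition of the product measures — is the delicate part; everything after it is standard multilinear algebra.
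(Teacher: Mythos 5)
Your route is genuinely different from the paper's. The paper deliberately avoids Kruskal's theorem: after embedding all components as $L^2$ densities it argues directly, using Lemma \ref{lem:tensrank} on the order-2 identity to force $l=m$ and linear independence of the competing densities, then contracting the order-3 identity against dual vectors $z_k$ (via the Hilbert--Schmidt identification of Lemma \ref{lem:hstens}) so that $a_k p_k^{\otimes 2}$ appears as a rank-one operator, which matches each $\nu_{\varphi(k)}$ to $\mu_k$ and then recovers the weights from linear independence. Your plan --- reduce to a finite-dimensional symmetric 3-tensor identity and invoke Kruskal with $k$-rank $m$ and $3m\ge 2m+2$ --- is essentially the argument of Allman, Matias and Rhodes that the paper cites and explicitly sets out not to use; it is viable, and the multilinear algebra you describe (including the $m=1$ case and using total mass one to kill the scaling ambiguity) is fine once the reduction is in hand. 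The one place your proposal falls short of a proof is exactly the step you flag: passing from the identity of measures $\sum_i w_i\mu_i^{\times 3}=\sum_j w_j'\nu_j^{\times 3}$ to an identity of tensors over a finite-dimensional space requires knowing that a nonzero tensor over $\spn\{\mu_i,\nu_j\}$ cannot induce the zero product measure, and your sketched fix via indicator contractions and ``functionals annihilating the $\mu_i$'' presupposes a duality you have not constructed on an abstract measurable space. The paper supplies precisely this missing ingredient: Lemma \ref{lem:himbed} gives a common finite dominating measure and square-integrable densities, and Lemmas \ref{lem:radprod}, \ref{lem:inteq} and \ref{lem:l2prod} convert the measure identity into a tensor identity in $L^2(\xi)^{\otimes 3}$. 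With that embedding your Kruskal argument goes through, and in fact your preliminary step of forcing $\nu_j\in\spn\{\mu_1,\ldots,\mu_m\}$ becomes unnecessary, since Kruskal can be applied inside the span of all components at once. What each approach buys: yours is short modulo citing Kruskal; the paper's is self-contained, works directly in the Hilbert-space setting, and exposes the contraction-plus-rank mechanism that it then reuses for Theorems \ref{thm:lidet} and \ref{thm:ji}.
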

	This bound is tight as a consequence of Theorem \ref{thm:noident} with $m=2$ since any pair of distinct measures must be linearly independent.

	A version of this theorem was first proven in \cite{allman09} by making use of Kruskal's Theorem \cite{kruskal77}. Kruskal's Theorem demonstrates that order 3 tensors over $\rn^d$ admit unique decompositions (up to scaling and permutation) given certain linear independence assumptions. Our proof makes no use of Kruskal's Theorem and demonstrates that $n$-identifiability for linearly independent mixture components need not be attached to the discrete version in any way. An efficient algorithm for recovering linearly independent mixture components for discrete sample spaces with 3 samples per random group is described in \cite{anandkumar14}. Interestingly, with one more sample per group, these mixtures of measures become determined.
	\begin{thm} \label{thm:lidet}
		\sloppy If $\sP = \sum_{i=1}^m  w_i\delta_{\mu_i}$ is a mixture of measures where $\mu_1,\ldots,\mu_m$ are linearly independent then $\sP$ is $4$-determined.
	\end{thm}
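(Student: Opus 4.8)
The plan is to take an arbitrary mixture of measures $\sP' = \sum_{j=1}^{k} v_j \delta_{\eta_j}$ with $V_4(\sP) = V_4(\sP')$ and show $\sP = \sP'$. By our standing convention this is a minimal representation, so the $\eta_j$ are distinct and every $v_j > 0$. Since each $\mu_i$ and each $\eta_j$ is a probability measure, pushing $V_4(\sP) = V_4(\sP')$ forward under the projection that forgets one coordinate sends $\mu_i^{\times 4}$ to $\mu_i^{\times 3}$ (and likewise for the $\eta_j$), so $V_4(\sP) = V_4(\sP')$ forces $V_n(\sP) = V_n(\sP')$ for all $n \le 4$; in particular $V_3(\sP) = V_3(\sP')$. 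Theorem~\ref{thm:liident} tells us $\sP$ is $3$-identifiable, i.e. it is the unique mixture of measures of order at most $m$ with that value of $V_3$. Hence, as soon as we know the order $k$ of $\sP'$ is at most $m$, we conclude $\sP = \sP'$. The entire content of the theorem is therefore the bound $k \le m$, and everything below is devoted to it.

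The first step toward $k \le m$ is to show every $\eta_j$ lies in $U \triangleq \spn\{\mu_1,\dots,\mu_m\}$. Marginalizing to two coordinates gives $\sum_{i=1}^m w_i\,\mu_i \times \mu_i = \sum_{j=1}^k v_j\,\eta_j \times \eta_j$, and I would read each side as the operator it induces on bounded measurable functions: $g \mapsto \sum_i w_i \big(\int g\,d\mu_i\big)\mu_i$ on the one hand, $g \mapsto \sum_j v_j \big(\int g\,d\eta_j\big)\eta_j$ on the other. Using bounded measurable functions biorthogonal to the linearly independent family $\mu_1,\dots,\mu_m$ -- whose existence is precisely what linear independence of finitely many measures provides, and which is the workhorse behind Theorem~\ref{thm:liident} -- the first operator has range exactly $U$. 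For the second, if a bounded $g$ annihilates its range then in particular $\sum_j v_j\big(\int g\,d\eta_j\big)^2 = 0$, so, since each $v_j > 0$, $\int g\,d\eta_j = 0$ for all $j$; as the range is a subspace of the finite-dimensional space $\spn\{\eta_1,\dots,\eta_k\}$, this forces the range to be all of $\spn\{\eta_1,\dots,\eta_k\}$. Equality of the two operators then yields $\spn\{\eta_1,\dots,\eta_k\} = U$, so we may write $\eta_j = \sum_{i=1}^m a_{ji}\mu_i$, and evaluating total mass gives $\sum_{i=1}^m a_{ji} = 1$ for every $j$.

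The second step substitutes these expansions into $V_4(\sP) = V_4(\sP')$. Expanding $\eta_j^{\times 4} = \sum_{i_1,\dots,i_4} a_{j i_1}a_{j i_2}a_{j i_3}a_{j i_4}\,\mu_{i_1}\times\mu_{i_2}\times\mu_{i_3}\times\mu_{i_4}$ and matching coefficients against the family $\{\mu_{i_1}\times\mu_{i_2}\times\mu_{i_3}\times\mu_{i_4}\}$ (linearly independent, again via the biorthogonal functions) yields
\[
\sum_{j=1}^{k} v_j\, a_{j i_1} a_{j i_2} a_{j i_3} a_{j i_4} \;=\;
\begin{cases}
w_{i_1}, & i_1 = i_2 = i_3 = i_4,\\[2pt]
0, & \text{otherwise,}
\end{cases}
\qquad (i_1,i_2,i_3,i_4)\in[m]^4 .
\]
For a fixed pair $i \ne i'$, the identity for the index $(i,i,i',i')$ reads $\sum_{j} v_j\, a_{ji}^2 a_{j i'}^2 = 0$, a sum of nonnegative terms; since every $v_j > 0$ we get $a_{ji}^2 a_{j i'}^2 = 0$ for each $j$. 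Hence for each $j$ at most one of $a_{j1},\dots,a_{jm}$ is nonzero, and as $\sum_i a_{ji} = 1$ and $\eta_j \ne 0$, exactly one is nonzero and equals $1$; that is, $\eta_j = \mu_{\ell(j)}$ for some $\ell(j) \in [m]$. Because the $\eta_j$ are distinct, $j \mapsto \ell(j)$ is injective, so $k \le m$, which finishes the argument by the reduction in the first paragraph.

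The main obstacle is exactly the sum-of-squares observation in the last paragraph: it is the \emph{fourth} (and hence even) power that forces each $a_{ji}^2 a_{j i'}^2$ individually to vanish, whereas the analogous order-$3$ relation $\sum_j v_j a_{ji}^2 a_{j i'} = 0$ permits cancellation and carries no such conclusion. Everything else is routine bookkeeping inside the operator-theoretic framework of Section~\ref{sec:tensorproducts} -- biorthogonal bounded functions, the passage between product-measure identities and operator identities, and the range computation for positively weighted symmetric tensors -- combined with the already-established $3$-identifiability of Theorem~\ref{thm:liident}.
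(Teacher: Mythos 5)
Your proof is correct, and the decisive trick is the same one the paper uses: linear independence of $\mu_1,\dots,\mu_m$ supplies dual (biorthogonal) test functions, and testing the order-$4$ identity against a ``square'' pattern in two distinct dual directions produces a vanishing sum of nonnegative terms, which kills all cross terms precisely because the exponent is even. Your coefficient equation $\sum_j v_j a_{ji}^2 a_{ji'}^2 = 0$ is exactly the paper's pairing of $\sum_i a_i p_i^{\otimes 4} = \sum_j b_j q_j^{\otimes 4}$ with $z_1^{\otimes 2}\otimes z_2^{\otimes 2}$. The scaffolding differs, though: the paper argues by contradiction, first invoking $3$-identifiability to force the competing mixture to have order $l>m$, embeds everything as densities in $L^2(\xi)$ via Lemma \ref{lem:himbed} and Lemma \ref{lem:l2prod}, gets span membership of the $q_j$ by pairing with $z^{\otimes 4}$, and then needs only one offending component (guaranteed by $l>m$ and non-collinearity) to reach a contradiction. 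You argue directly, never pass to a Hilbert space, obtain span membership from the order-$2$ marginal via an operator-range computation, and then show the stronger fact that \emph{every} $\eta_j$ equals some $\mu_i$, whence $k\le m$ and Theorem \ref{thm:liident} (with the marginalization Lemma \ref{lem:noidentdown}, which you rederive) finishes. Two small remarks: your range-equals-span step silently uses the standard duality fact that bounded measurable functions realize every linear functional on a finite-dimensional space of signed measures (if a nontrivial combination integrated to zero against every bounded $g$, taking indicators would make it the zero measure); this is fine but deserves a sentence. And that whole order-$2$ detour can be shortcut in the paper's style: pairing the order-$4$ identity with $g^{\otimes 4}$ for $g$ annihilating $\spn\{\mu_1,\dots,\mu_m\}$ gives $\sum_j v_j(\int g\,d\eta_j)^4=0$ and hence $\eta_j\in\spn\{\mu_1,\dots,\mu_m\}$ immediately. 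What your route buys is a purely measure-theoretic argument (no $L^2$ embedding) and a sharper intermediate conclusion about the competing components; what the paper's buys is brevity, since a single contradiction suffices.
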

	This bound is tight as a result of Theorem \ref{thm:nodet} with $m=2$.

	Our final result is related to the ``separability condition'' found in \cite{donoho03}. The separability condition in the discrete case requires that, for each mixture component $\mu_i$, there exists $B_i\in \sF$ such that $\mu_i\left( B_i \right)>0$ and $\mu_j\left( B_i \right) = 0$ for all $i\neq j$. There exists a generalization of the separability condition, known as {\em joint irreducibility}.
	\begin{defin}
		A collection of probability measures $\mu_1,\ldots,\mu_m$ are said to be {\em jointly irreducible} if $\sum_{i=1}^{m} w_i \mu_i$ being a probability measure implies $w_i\ge0$. 
	\end{defin}
	In other words, any probability measure in the span of $\mu_1,\ldots,\mu_m$ must be a convex combination of those measures. It was shown in \cite{blanchard14} that separability implies joint irreducibility, but not visa-versa. In that paper it was also shown that joint irreducibility implies linear independence, but the converse does not hold. 
	\begin{thm} \label{thm:ji}
		\sloppy If $\sP= \sum_{i=1}^m w_i \delta_{\mu_i} $ is a mixture of measures where $\mu_1,\ldots, \mu_m$ are jointly irreducible then $\sP$ is $2$-determined.
	\end{thm}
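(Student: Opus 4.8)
The plan is to take an arbitrary mixture of measures $\sP'=\sum_{j=1}^r\alpha_j\delta_{\nu_j}$ (of any order $r$, with the $\nu_j$ distinct, $\alpha_j>0$, $\sum_j\alpha_j=1$) satisfying $V_2(\sP')=V_2(\sP)$, and deduce $\sP'=\sP$. Set $\rho:=\sum_{i=1}^m w_i\mu_i$; marginalizing one coordinate in $V_2(\sP)=V_2(\sP')$ gives $\rho=\sum_{j=1}^r\alpha_j\nu_j$ as well. Since $w_i\mu_i\le\rho$ and $\alpha_j\nu_j\le\rho$ as measures, every $\mu_i$ and $\nu_j$ is absolutely continuous with respect to $\rho$ with density bounded by $1/w_i$, resp.\ $1/\alpha_j$; write $f_i:=d\mu_i/d\rho$ and $g_j:=d\nu_j/d\rho$, which lie in $L^2(\rho)$. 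Then $V_2(\sP)$ and $V_2(\sP')$ are absolutely continuous with respect to $\rho^{\times 2}$ with densities $\sum_i w_i f_i(x)f_i(y)$ and $\sum_j\alpha_j g_j(x)g_j(y)$, so these agree $\rho^{\times 2}$-a.e.\ and define the same positive semidefinite self-adjoint Hilbert--Schmidt operator $A=\sum_i w_i\langle f_i,\cdot\rangle f_i=\sum_j\alpha_j\langle g_j,\cdot\rangle g_j$ on $L^2(\rho)$.

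First I would pin down $\operatorname{range}(A)$. By the cited fact (from \cite{blanchard14}) that joint irreducibility implies linear independence, $f_1,\dots,f_m$ are linearly independent, so $\operatorname{range}(A)=\spn\{f_1,\dots,f_m\}=:W_0$ has dimension $m$. From the second representation, $\langle Au,u\rangle=\sum_j\alpha_j\langle g_j,u\rangle^2$, so $\ker A=\spn\{g_1,\dots,g_r\}^{\perp}$ and hence $\operatorname{range}(A)=\spn\{g_1,\dots,g_r\}$; in particular each $g_j$ lies in $W_0$. Now I bring in joint irreducibility itself: translated to densities it says every nonnegative element of $W_0$ is a nonnegative combination of $f_1,\dots,f_m$ (the only subtlety being nonnegative combinations of zero total $\rho$-mass, which vanish). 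Since each $g_j$ is nonnegative and lies in $W_0$, we get $g_j=\sum_{i=1}^m c_{ji}f_i$ with $c_{ji}\ge 0$; integrating against $\rho$ and using $\int f_i\,d\rho=\int g_j\,d\rho=1$ gives $\sum_i c_{ji}=1$. Equivalently, $\nu_j=\sum_i c_{ji}\mu_i$ is a convex combination of the $\mu_i$.

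The main step is the final rigidity argument. Writing $\mu_i\times\mu_k$ for the product measure on $\Omega^{\times 2}$ (so $\mu_i^{\times 2}=\mu_i\times\mu_i$) and substituting $\nu_j=\sum_i c_{ji}\mu_i$ into $\sum_j\alpha_j\,\nu_j^{\times 2}=\sum_i w_i\,\mu_i^{\times 2}$, expansion gives $\sum_{i,k}\bigl(\sum_j\alpha_j c_{ji}c_{jk}\bigr)(\mu_i\times\mu_k)=\sum_i w_i\,\mu_i^{\times 2}$. Linear independence of $\mu_1,\dots,\mu_m$ makes the products $\{\mu_i\times\mu_k\}_{i,k}$ linearly independent in $\sM(\Omega^{\times 2},\sF^{\times 2})$, so $\sum_j\alpha_j c_{ji}c_{jk}=w_i\ind\{i=k\}$ for all $i,k$. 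Taking $k=i$ and summing over $i$ yields $\sum_j\alpha_j\sum_i c_{ji}^2=\sum_i w_i=1=\sum_j\alpha_j$; but $\sum_i c_{ji}^2\le(\sum_i c_{ji})^2=1$ for each $j$ since $c_{ji}\ge 0$, so equality throughout forces $\sum_i c_{ji}^2=1$, i.e.\ each row $(c_{ji})_i$ is a standard basis vector and $\nu_j=\mu_{\pi(j)}$ for some $\pi(j)\in[m]$. As the $\nu_j$ are distinct, $\pi$ is injective, and the relations $\sum_{j:\pi(j)=i}\alpha_j=w_i>0$ then force $\pi$ to be a bijection of $[m]$ with $\alpha_{\pi^{-1}(i)}=w_i$; hence $r=m$ and $\sP'=\sP$. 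The only places needing care are the density reformulation of joint irreducibility and the identity $\ker A=\spn\{g_j\}^{\perp}$ (which uses that $A$ is finite rank and self-adjoint); both are routine, and I do not anticipate a genuine obstacle, the argument being essentially clean bookkeeping once the operator/density picture is in place.
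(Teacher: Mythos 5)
Your proof is correct, and while it shares the paper's overall framework---pass to densities in an $L^2$ space, reduce $V_2(\sP)=V_2(\sP')$ to an order-two kernel/operator identity, and use joint irreducibility (via the implied linear independence) to show each $\nu_j$ is a convex combination $\nu_j=\sum_i c_{ji}\mu_i$---your endgame is genuinely different. The paper, at the analogous point, contracts the Hilbert--Schmidt operator of Lemma \ref{lem:hstens} against dual vectors $z$ of the $p_i$'s, derives a contradiction with linear independence if some row $(c_{ji})_i$ has two positive entries, and then needs two further steps (minimality of the representation to get $l=m$, and linear independence of $p_1\otimes p_1,\ldots,p_m\otimes p_m$ to match the weights). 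You instead expand the second-moment identity in the product basis, use linear independence of the $m^2$ measures $\{\mu_i\times\mu_k\}$ to read off $\sum_j\alpha_j c_{ji}c_{jk}=w_i\ind\{i=k\}$, and close with the elementary observation $\sum_i c_{ji}^2\le\bigl(\sum_i c_{ji}\bigr)^2$ for nonnegative rows summing to one; equality forces each row to be a standard basis vector, and the same coefficient identities deliver injectivity/surjectivity of the induced map and $\alpha_{\pi^{-1}(i)}=w_i$ in one pass. What each buys: your route is direct rather than by contradiction and recovers components and weights simultaneously, at the price of two routine facts you should record explicitly---bilinearity of the product measure to justify $\nu_j^{\times 2}=\sum_{i,k}c_{ji}c_{jk}\,\mu_i\times\mu_k$ (or argue at the density level), and linear independence of $\{\mu_i\times\mu_k\}_{i,k}$, which follows by evaluating a vanishing combination on rectangles $A\times B$ and applying linear independence of the $\mu_i$ twice. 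Minor cosmetic differences (using $\rho=\sum_i w_i\mu_i$ as dominating measure instead of Lemma \ref{lem:himbed}, and obtaining $g_j\in\spn\{f_1,\ldots,f_m\}$ from the range/kernel of the PSD operator rather than the paper's test-vector argument) are equivalent in substance; the paper's test-vector machinery has the side benefit of being the same device reused in Theorems \ref{thm:liident} and \ref{thm:lidet}.
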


	A straightforward consequence of the corollary of Theorem 1 in \cite{donoho03} is that any mixture of measures on a finite discrete space with jointly irreducible components is $2$-identifiable. The result in \cite{donoho03} is concerned with the uniqueness of nonnegative matrix factorizations and Theorem \ref{thm:ji}, when applied to a finite discrete space, can be posed as a special case of the result in \cite{donoho03}. In the context of nonnegative matrix factorization the result in \cite{donoho03} is significantly more general than our result. In another sense our result is more general since it applies to spaces where joint irreducibility and the separability condition are not equivalent. Furthermore \cite{donoho03} only implies that the mixture of measures in Theorem \ref{thm:ji} are identifiable. The determinedness result is, as far as we know, totally new.

        \section{Tensor Products of Hilbert Spaces}\label{sec:tensorproducts}
	Our proofs will rely heavily on the geometry of tensor products of Hilbert spaces which we will introduce in this section.

	\subsection{Overview of Tensor Products}
	First we introduce tensor products of Hilbert spaces. To our knowledge there does not exist a rigorous construction of the tensor product Hilbert space which is both succinct and intuitive. Because of this we will simply state some basic facts about tensor products of Hilbert spaces and hopefully instill some intuition for the uninitiated by way of example. A thorough treatment of tensor products of Hilbert spaces can be found in \cite{kadison83}.

	Let $H$ and $H'$ be Hilbert spaces. From these two Hilbert spaces the ``simple tensors'' are elements of the form $h\otimes h'$ with $h\in H$ and $h' \in H'$. We can treat the simple tensors as being the basis for some inner product space $H_0$, with the inner product of simple tensors satisfying
	\begin{eqnarray*}
		\l<h_1 \otimes h_1', h_2 \otimes h_2'\r> = \l<h_1,h_2\r>\l<h_1',h_2'\r>.
	\end{eqnarray*}
	The tensor product of $H$ and $H'$ is the completion of $H_0$ and is denoted $H\otimes H'$. To avoid potential confusion we note that the notation just described is standard in operator theory literature. In some literature our definition of $H_0$ is denoted as $H\otimes H'$ and our definition of $H \otimes H'$ is denoted $H \widehat{\otimes} H'$.

	As an illustrative example we consider the tensor product $L^2\left( \rn \right) \otimes L^2\left( \rn \right)$. It can be shown that there exists an isomorphism between $L^2\left( \rn \right) \otimes L^2\left( \rn \right)$ and $L^2(\rn^2)$ that maps the simple tensors to separable functions \cite{kadison83}, $f \otimes f' \mapsto f(\cdot)f'(\cdot)$. We can demonstrate this isomorphism with a simple example. Let $f,g,f',g'\in L^2\left( \rn \right)$. Taking the $L^2(\rn^2)$ inner product of $f(\cdot)f'(\cdot)$ and $g(\cdot)g'(\cdot)$ gives us 

	\begin{eqnarray*}
		\int\int \l(f(x)f'(y)\r)\l(g(x)g'(y\r)) dx dy 
		&=& \int f(x)g(x) dx \int f'(y)g'(y) dy\\
	 &=& \l<f,g\r>  \l<f',g'\r>\\
	 &=& \l<f\otimes f', g \otimes g'\r>.
	\end{eqnarray*}

	Beyond tensor product we will need to define tensor power. To begin we will first show that tensor products are, in a certain sense, associative. Let $H_1, H_2, H_3$ be Hilbert spaces. Proposition 2.6.5 in \cite{kadison83} states that there is a unique unitary operator, $U: (H_1 \otimes H_2)\otimes H_3 \to H_1 \otimes (H_2 \otimes H_3)$, that satisfies the following for all $h_1 \in H_1, h_2 \in H_2, h_3 \in H_3$,
	\begin{eqnarray*}
		U\left( \left( h_1 \otimes h_2 \right)\otimes h_3 \right) = h_1 \otimes \left( h_2 \otimes h_3 \right).
	\end{eqnarray*}
        This implies that for any collection of Hilbert spaces, $H_1,\ldots , H_n$, the Hilbert space $H_1 \otimes \cdots \otimes H_n$ is defined unambiguously regardless of how we decide to associate the products. In the space $H_1 \otimes \cdots \otimes H_n$ we define a {\it simple tensor} as a vector of the form $h_1 \otimes\cdots\otimes h_n$ with $h_i \in H_i$. In \cite{kadison83} it is shown that $H_1 \otimes\cdots \otimes H_n$ is the closure of the span of these simple tensors. To conclude this primer on tensor products we introduce the following notation. For a Hilbert space $H$ we denote $H^{\otimes n}= \underbrace{H\otimes H \otimes \dots \otimes H}_\text{n times}$ and for $h \in H$, $h^{\otimes n}= \underbrace{h\otimes h \otimes \dots \otimes h}_\text{n times}$.
	\subsection{Tensor Rank}
	A tool we will use frequently in our proofs is {\em tensor rank}, which is similar to matrix rank.
	\begin{defin} \label{def:tensrank}
		Let $h\in H^{\otimes n}$ where $H$ is a Hilbert space. The {\em rank} of $h$ is the smallest natural number $r$ such that $h =\sum_{i=1}^r h_i$ where $h_i$ are simple tensors.
	\end{defin}
	In an infinite dimensional Hilbert space it is possible for a tensor to have infinite rank. We will only be concerned with finite rank tensors.

	\subsection{Some Results for Tensor Product Spaces}
	We derive some technical results concerning tensor product spaces that will be useful for the rest of the paper. These lemmas are similar to or  are straightforward extensions of previous results which we needed to modify for our particular purposes. Let $\left( \Psi, \sG, \gamma \right)$ be a $\sigma$-finite measure space. We have the following lemma that connects tensor power of a $L^2$ space to the $L^2$ space of the product measure. Proofs of many of the lemmas in this paper are deferred to the appendix. 
	\begin{lem}
		\label{lem:l2prod}
		There exists a unitary transform $U:L^2\left( \Psi, \sG, \gamma \right)^{\otimes n} \to L^2\left( \Psi^{\times n}, \sG^{\times n}, \gamma^{\times n} \right)$ such that, for all $f_1,\ldots, f_n \in L^2\left( \Psi, \sG, \gamma \right)$,
		\begin{align*}
			U\left( f_1\otimes \cdots \otimes f_n \right) = f_1(\cdot)\cdots f_n(\cdot).
		\end{align*}
	\end{lem}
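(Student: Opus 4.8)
The plan is to build $U$ first on simple tensors, extend it by bilinearity (more precisely, via the universal property of the algebraic tensor product), check that the resulting map is isometric, and finally show its range is dense; an isometry with dense range between Hilbert spaces is unitary. I would carry out the $n=2$ case carefully and reduce general $n$ to it by iteration, invoking associativity of the tensor product (Proposition 2.6.5 in \cite{kadison83}) to identify $L^2(\gamma)^{\otimes n}$ with the iterated product, and using that a finite product of $\sigma$-finite measure spaces is again $\sigma$-finite so that Tonelli's and Fubini's theorems apply at every stage.

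For $n=2$: given $f,g \in L^2(\Psi,\sG,\gamma)$, the function $(x,y)\mapsto f(x)g(y)$ is $\sG^{\times 2}$-measurable, and Tonelli gives $\int |f(x)g(y)|^2 \, d\gamma^{\times 2} = \|f\|^2\|g\|^2 < \infty$, so it lies in $L^2(\Psi^{\times 2},\sG^{\times 2},\gamma^{\times 2})$. Since $(f,g)\mapsto f(\cdot)g(\cdot)$ is bilinear into this space, it induces a well-defined linear map $U_0$ on the algebraic tensor product (the span $H_0$ of simple tensors) with $U_0(f\otimes g) = f(\cdot)g(\cdot)$, independent of how an element of $H_0$ is written as a finite sum of simple tensors. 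Applying Fubini to $\int f(x)g(y)\overline{f'(x)g'(y)}\, d\gamma^{\times 2} = \langle f,f'\rangle \langle g,g'\rangle$, and using bilinearity, shows $U_0$ preserves the inner product on $H_0$; hence it is isometric and extends uniquely to an isometry $U: L^2(\gamma)^{\otimes 2} \to L^2(\gamma^{\times 2})$.

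It then remains to prove $U$ is onto, i.e. that the span of product functions $f(\cdot)g(\cdot)$ is dense in $L^2(\gamma^{\times 2})$. By $\sigma$-finiteness it suffices to approximate indicators $\ind_E$ of sets $E$ of finite product measure, and such $E$ can in turn be approximated (in measure, hence in $L^2$) by finite disjoint unions of measurable rectangles $A \times B$ with $\gamma(A),\gamma(B) < \infty$, via a monotone-class argument together with the fact that $\sG^{\times 2}$ is generated by rectangles; each $\ind_{A\times B} = \ind_A(\cdot)\ind_B(\cdot)$ lies in the range of $U_0$. Thus the range of $U$ is dense, and being the image of an isometry on a complete space it is closed, so it is all of $L^2(\gamma^{\times 2})$; hence $U$ is unitary. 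General $n$ follows by applying this $n-1$ times and tracking the action on simple tensors through the associativity isomorphisms.

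The \emph{main obstacle} is the density step: reducing an arbitrary $L^2$ function on the product space to finite combinations of indicators of rectangles of finite measure requires the monotone class theorem and a careful use of $\sigma$-finiteness so that only finite-measure sets ever appear. Everything else — measurability of products, and the factorizations of norms and inner products — is routine Tonelli/Fubini bookkeeping, and the extension from $H_0$ to the completion and the passage from dense-range isometry to unitary are standard Hilbert space facts.
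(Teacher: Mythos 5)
Your proof is correct, and its overall skeleton --- establish the two-factor case, then iterate to reach $n$ factors --- is the same as the paper's. The difference is where the analytic work goes: the paper simply cites Example 2.6.11 of \cite{kadison83} for the statement that $L^2\left( S,\mathscr{S},m \right)\otimes L^2\left( S',\mathscr{S}',m' \right)$ is unitarily identified with $L^2$ of the product of two $\sigma$-finite measure spaces, and then spends its effort on the induction, using Lemma \ref{lem:unitprod} to tensor the inductive unitary $U_{n-1}$ with the identity before applying the cited two-factor unitary. You instead prove that two-factor statement from first principles: well-definedness on the algebraic tensor product via the universal property, preservation of inner products via Tonelli/Fubini, isometric extension to the completion, and surjectivity via density of finite linear combinations of indicators of finite-measure rectangles (monotone class plus $\sigma$-finiteness, with closedness of the range of an isometry finishing the argument). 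What your route buys is self-containedness --- the rectangle-density step is exactly the content hidden inside the paper's citation --- at the cost of length; what the paper's route buys is brevity. One piece of bookkeeping you pass over quickly: in the iteration, applying the $(n-1)$-factor unitary on the first $n-1$ slots means forming $U_{n-1}\otimes \mathrm{id}$ and knowing that this is again unitary, which is precisely the paper's Lemma \ref{lem:unitprod}; this is standard, but it should be invoked explicitly alongside the associativity identification rather than folded silently into ``applying this $n-1$ times.''
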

	The following lemma is used in the proof of Lemma \ref{lem:l2prod} as well as the proof of Theorem \ref{thm:noident}.
	\begin{lem} \label{lem:unitprod}
		Let $H_1,\ldots, H_n, H_1',\ldots, H_n'$ be a collection of Hilbert spaces and $U_1,\ldots,U_n$ a collection of unitary operators with $U_i:H_i \to H_i'$ for all $i$. There exists a unitary operator $U:H_1 \otimes \cdots \otimes H_n \to H_1' \otimes \cdots \otimes H_n'$ satisfying $U\left( h_1 \otimes\cdots \otimes h_n \right) = U_1(h_1) \otimes \cdots \otimes U_n(h_n)$ for all $h_1 \in H_1 ,\ldots, h_n \in H_n$.
	\end{lem}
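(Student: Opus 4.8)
The plan is the standard ``define on simple tensors, then extend by density'' construction. First I would define a linear map $U_0$ on the (dense) subspace $H_0 \subseteq H_1 \otimes \cdots \otimes H_n$ spanned by the simple tensors via the rule $h_1 \otimes \cdots \otimes h_n \mapsto U_1 h_1 \otimes \cdots \otimes U_n h_n$, extended linearly. The one point that needs care is well-definedness: because simple tensors are not linearly independent, this prescription is only meaningful once one checks it respects the multilinear identities cutting out $H_0$. This holds because the map $(h_1,\dots,h_n) \mapsto U_1 h_1 \otimes \cdots \otimes U_n h_n$ is multilinear (each $U_i$ is linear), so by the universal property of the algebraic tensor product it factors uniquely through $H_0$; equivalently, one may first establish the $n=2$ case and then induct on $n$, transporting along the associativity unitary of Proposition 2.6.5 in \cite{kadison83} recalled above.

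Next I would check that $U_0$ preserves inner products on $H_0$. On simple tensors,
\begin{align*}
\langle U_1 h_1 \otimes \cdots \otimes U_n h_n,\, U_1 g_1 \otimes \cdots \otimes U_n g_n \rangle &= \prod_{i=1}^n \langle U_i h_i, U_i g_i \rangle \\ &= \prod_{i=1}^n \langle h_i, g_i \rangle = \langle h_1 \otimes \cdots \otimes h_n,\, g_1 \otimes \cdots \otimes g_n \rangle,
\end{align*}
where the first and last equalities are the definition of the tensor inner product and the middle one uses that each $U_i$, being unitary, preserves inner products. By sesquilinearity this identity propagates to all of $H_0$, so $U_0$ is an isometry on a dense subspace and hence extends uniquely to an isometry $U : H_1 \otimes \cdots \otimes H_n \to H_1' \otimes \cdots \otimes H_n'$, which still satisfies $U(h_1 \otimes \cdots \otimes h_n) = U_1 h_1 \otimes \cdots \otimes U_n h_n$ by continuity.

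Finally I would verify surjectivity. Since each $U_i$ is onto, every simple tensor $h_1' \otimes \cdots \otimes h_n'$ in $H_1' \otimes \cdots \otimes H_n'$ lies in the range of $U$ (apply $U$ to the tensor of preimages $U_i^{-1} h_i'$), so the range of $U$ contains the span of simple tensors, which is dense; as the image of a complete space under an isometry, the range is also closed, hence equals all of $H_1' \otimes \cdots \otimes H_n'$. Therefore $U$ is unitary with the asserted action on simple tensors. The only genuinely delicate step is the well-definedness/multilinearity point in the first paragraph; everything else is a routine density-and-completion argument, and the surjectivity and isometry claims together are exactly what ``unitary'' requires.
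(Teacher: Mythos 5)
Your proof is correct, but it takes a somewhat different route from the paper's. The paper does not construct the operator by hand: it invokes Lemma \ref{lem:hsprod} (Proposition 2.6.12 of \cite{kadison83}, the tensor product of bounded operators) to obtain at once a continuous linear operator $\tilde{U}$ defined on all of $H_1\otimes\cdots\otimes H_n$ with the prescribed action on simple tensors --- this outsources exactly the well-definedness and boundedness issues that you handle yourself via the universal property of the algebraic tensor product and the density-extension argument. After that point the two arguments converge: both verify inner-product preservation by the same factorization computation on simple tensors, propagated by (sesqui)linearity and continuity, and both obtain surjectivity from surjectivity of each $U_i$ together with density of $\spn$ of the simple tensors in the target. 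Your ordering is slightly tighter on one point: you establish the isometry before surjectivity, so the range is closed (the isometric image of a complete space) and the density argument closes cleanly; the paper argues surjectivity first, from linearity and continuity alone, which tacitly relies on this closedness that only becomes available once inner-product preservation is proved. What your self-contained construction buys is independence from the cited operator-tensor-product machinery; what the paper's route buys is brevity, since existence and continuity of the map come for free from the citation.
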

	A statement of the following lemma for $\rn^d$ can be found in \cite{symtensorrank}. We present our own proof for the Hilbert space setting in the appendix.
	\begin{lem}\label{lem:linind}
		Let $n>1$ and let $h_1,\ldots, h_n$ be elements of a Hilbert space such that no elements are zero and no pairs of elements are collinear. Then $h_1^{\otimes n-1},\ldots, h_n^{\otimes n-1}$ are linearly independent.
	\end{lem}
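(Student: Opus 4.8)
The plan is to reduce to a finite-dimensional Hilbert space, isolate a ``separating functional'' fact driven by non-collinearity, and then induct on $n$ using a partial-contraction operator.

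First I would reduce to the case $\dim H<\infty$. Put $H_0=\spn\{h_1,\ldots,h_n\}$, a subspace of dimension at most $n$. The isometric inclusion $H_0\hookrightarrow H$ induces an isometric — hence injective — linear map $H_0^{\otimes(n-1)}\to H^{\otimes(n-1)}$ carrying each $h_i^{\otimes(n-1)}$ to itself, so it suffices to prove the claim inside $H_0^{\otimes(n-1)}$. Thus I may assume $H$ is finite dimensional, in which case every tensor power below is finite dimensional and coincides with the algebraic tensor power, so the linear maps used in the argument are unambiguously defined with no analytic subtleties. Next I would record the key sublemma: for each fixed $j$ there exists $g\in H$ with $\langle h_j,g\rangle=0$ and $\langle h_i,g\rangle\neq 0$ for every $i\neq j$. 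Indeed, since $h_i$ and $h_j$ are nonzero and non-collinear, $h_i\notin\spn\{h_j\}$, so $\{g:\langle h_j,g\rangle=\langle h_i,g\rangle=0\}$ is a \emph{proper} subspace of the hyperplane $\{g:\langle h_j,g\rangle=0\}$; since a vector space over $\rn$ or $\cn$ is never a union of finitely many proper subspaces, one can choose $g$ in that hyperplane avoiding all of $\{g:\langle h_i,g\rangle=0\}$, $i\neq j$.

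Then I would argue by induction on $n$. For $n=2$ the statement is that two nonzero non-collinear vectors $h_1,h_2$ are linearly independent, which is immediate. For the inductive step, suppose $\sum_{i=1}^n c_i h_i^{\otimes(n-1)}=0$, choose $g$ as in the sublemma with $j=n$, and apply the contraction $C_g\triangleq \mathrm{id}_H^{\otimes(n-2)}\otimes\langle\,\cdot\,,g\rangle:H^{\otimes(n-1)}\to H^{\otimes(n-2)}$, which sends a simple tensor $v_1\otimes\cdots\otimes v_{n-1}$ to $\langle v_{n-1},g\rangle\,v_1\otimes\cdots\otimes v_{n-2}$. This annihilates the $i=n$ term and yields $\sum_{i=1}^{n-1}c_i\langle h_i,g\rangle\,h_i^{\otimes(n-2)}=0$. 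Since $h_1,\ldots,h_{n-1}$ are again nonzero and pairwise non-collinear, the inductive hypothesis gives that $h_1^{\otimes(n-2)},\ldots,h_{n-1}^{\otimes(n-2)}$ are linearly independent, so $c_i\langle h_i,g\rangle=0$, and as $\langle h_i,g\rangle\neq 0$ we conclude $c_i=0$ for all $i<n$. The original relation then reduces to $c_n h_n^{\otimes(n-1)}=0$, and since $\|h_n^{\otimes(n-1)}\|=\|h_n\|^{n-1}>0$ this forces $c_n=0$.

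The only places that need any care are the bookkeeping around the contraction operator $C_g$ and the passage between abstract tensor powers and concrete $L^2$-type spaces; performing the reduction to the finite-dimensional span $\spn\{h_1,\ldots,h_n\}$ at the outset disposes of both cleanly, after which everything is elementary linear algebra. The one genuinely non-formal ingredient — and the step I expect to be the crux — is the existence of the separating functional $g$, which rests on the fact that a vector space over an infinite field is not a finite union of proper subspaces; this is exactly where the hypothesis of pairwise non-collinearity is consumed.
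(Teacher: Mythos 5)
Your proof is correct and follows essentially the same route as the paper: induction on $n$, contracting the linear relation $\sum_i \alpha_i h_i^{\otimes n-1}=0$ against a vector orthogonal to one component and not orthogonal to another, which is exactly what the paper does by viewing the tensor as a Hilbert--Schmidt operator (Lemma \ref{lem:hstens}) and evaluating it at some $z\perp h_n$ with $\l<h_1,z\r>\neq 0$. Your additional ingredients --- the preliminary reduction to $\spn\left(\{h_1,\ldots,h_n\}\right)$ and the stronger separating vector $g$ obtained from the fact that a vector space over an infinite field is not a finite union of proper subspaces --- are valid but not needed: the paper gets by with the weaker $z$, concludes $\alpha_1=0$, and obtains the remaining coefficients by relabeling.
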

	The following lemma is a Hilbert space version of a well known property for positive semi-definite matrices.
	\begin{lem} \label{lem:tensrank}
		Let $h_1,\ldots,h_m$ be elements of a Hilbert space. The rank of $\sum_{i=1}^m h_i^{\otimes 2}$ is the dimension of $\spn\left( \l\{h_1,\ldots,h_m \r\}\right)$.
	\end{lem}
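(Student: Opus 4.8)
Write $S=\spn\left(\{h_1,\ldots,h_m\}\right)$ and $d=\dim S$, and set $t=\sum_{i=1}^m h_i^{\otimes 2}$. Since some $d$ of the $h_i$ form a basis of $S$ we have $d\le m<\infty$, so $S$ is finite-dimensional and $t$ lies in $S\otimes S$, a finite-dimensional (hence closed) subspace of $H\otimes H$. The plan is: (i) reduce the computation of the rank of $t$ (Definition \ref{def:tensrank}) to the subspace $S\otimes S$; (ii) identify $S\otimes S$ with the space of $d\times d$ matrices in a way that turns simple tensors into rank-one matrices; and (iii) invoke the classical fact that a sum of outer products $\sum_{i}v_iv_i^{\top}$ of vectors in $\rn^{d}$ has matrix rank equal to $\dim\spn\left(\{v_1,\ldots,v_m\}\right)$.

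For step (i): the orthogonal projection $\Pi\colon H\to S$ induces a bounded operator $\Pi\otimes\Pi$ on $H\otimes H$ (tensor products of bounded operators are bounded -- see \cite{kadison83}) which fixes $t$ and carries simple tensors to simple tensors, so applying it to a minimal decomposition of $t$ into simple tensors produces one of the same length with all factors in $S$; hence the tensor rank of $t$ is the same computed in $H\otimes H$ or in $S\otimes S$. For step (ii): if $e_1,\ldots,e_d$ is an orthonormal basis of $S$, then $\{e_j\otimes e_k\}$ is an orthonormal basis of $S\otimes S$, and the map $e_j\otimes e_k\mapsto E_{jk}$ extends to a unitary isomorphism $S\otimes S\to\rn^{d\times d}$ sending each simple tensor $u\otimes u'$ to a rank-one matrix; therefore tensor rank in $S\otimes S$ equals matrix rank in $\rn^{d\times d}$. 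Under this isomorphism $t$ maps to $M=\sum_{i=1}^m c_ic_i^{\top}$, where $c_i\in\rn^{d}$ is the coordinate vector of $h_i$. Because the coordinate map is an isomorphism, $\spn\left(\{c_1,\ldots,c_m\}\right)$ has dimension $d$, so by step (iii) $\operatorname{rank}(M)=d$, and chaining the identifications gives $\operatorname{rank}(t)=d$.

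The content is genuinely finite-dimensional once these identifications are in hand, so the only real obstacle is justifying them: that $\Pi\otimes\Pi$ is a well-defined bounded operator fixing $t$ (this extends Lemma \ref{lem:unitprod} from unitary to general bounded operators), and that the unitary $S\otimes S\cong\rn^{d\times d}$ carries simple tensors precisely onto rank-one matrices so that the two notions of rank coincide. Both are standard facts about tensor products of Hilbert spaces. Over a complex scalar field one replaces $\rn^{d\times d}$ by $\cn^{d\times d}$ and $c_i^{\top}$ by $c_i^{*}$ (equivalently, works with $\overline{S}$ in one factor), after which $M=\sum_i c_ic_i^{*}$ is positive semidefinite and the conclusion is unchanged. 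As an aside, the alternative route through the Hilbert--Schmidt identification $H\otimes H\cong\hs(H)$ -- under which $t$ corresponds to the positive operator $x\mapsto\sum_i\langle x,h_i\rangle h_i$, whose range is exactly $S$ -- yields $\operatorname{rank}(t)=\dim S=d$ in a single step and makes the analogy with positive semidefinite matrices transparent.
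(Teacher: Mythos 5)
Your argument is correct (over real scalars, which is the setting in which the paper actually uses this lemma), but it takes a genuinely different route from the paper's. The paper works directly in $H^{\otimes 2}$: for the lower bound it pairs a putative shorter decomposition $\sum_{i=1}^{l'} x_i\otimes y_i$ against a test tensor $z\otimes z$ with $z\perp x_1,\ldots,x_{l'}$ but $z\not\perp h_j$, getting $0$ on one side and $\sum_i \langle z,h_i\rangle^2>0$ on the other; for the upper bound it pushes $\sum_i h_i^{\otimes 2}$ through the Hilbert--Schmidt identification of Lemma \ref{lem:hstens} and invokes the spectral theorem for compact Hermitian (indeed PSD) operators to exhibit a decomposition $\sum_{i=1}^{l}\lambda_i\psi_i^{\otimes 2}$ of length $l=\dim\spn\left(\left\{h_1,\ldots,h_m\right\}\right)$. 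You instead reduce everything to finite dimensions: $\Pi\otimes\Pi$ (bounded by the paper's Lemma \ref{lem:hsprod}/\ref{lem:prodopbnd} machinery) shows tensor rank is unchanged when computed in $S\otimes S$, a coordinate unitary turns tensor rank into matrix rank, and the claim becomes the standard fact $\operatorname{rank}\left(VV^{\top}\right)=\operatorname{rank}(V)$. Your version avoids the compact-operator spectral theorem and makes the finite-dimensional content explicit; the paper's version stays coordinate-free, and its byproducts are reused elsewhere (the $z\otimes z$ pairing trick reappears in the proofs of Theorems \ref{thm:liident}--\ref{thm:ji}, and the orthogonal decomposition $\sum_i\lambda_i\psi_i^{\otimes 2}$ mirrors the spectral structure the algorithm in Section \ref{sec:algorithm} exploits). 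One caveat: your complex-scalar aside is the only imprecise spot. With the honest complex tensor square the statement itself is false (take $h$ and $ih$: their tensor squares cancel, so the rank is $0$ while the span is one-dimensional), so ``replace $c_i^{\top}$ by $c_i^{*}$'' is not a cosmetic change but a reinterpretation of the statement with one factor conjugated ($S\otimes\overline{S}$, equivalently the $\hs$ picture); since the paper applies the lemma only to real $L^2$ spaces, this does not affect your main argument, and the paper's own proof carries the same implicit real-scalars assumption.
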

        \section{Proofs of Theorems}\label{sec:proofsoftheorems}
	With the tools developed in the previous sections we can now prove our theorems. First we introduce one additional piece of notation. For a function $f$ on a domain $\sX$ we define $f^{\times k}$ as simply the product of the function $k$ times on the domain $\sX^{\times k}$, $\underbrace{f(\cdot)\cdots f(\cdot)}_{\text{k times}}$. For a set, $\sigma$-algebra, or measure the notation continues to denote the standard $k$-fold product.

	In these proofs we will be making extensive use of various $L^2$ spaces. These spaces will be equivalence classes of functions which are equal almost everywhere with respect to the measure associated with that space. When considering elements of these spaces, equality will always mean almost everywhere equality with respect to the measure associated with that space. When performing integrals or other manipulations of elements in $L^2$ spaces, we will be performing operations that do not depend on the representative of the equivalence class.
	The following lemma will be quite useful.
	\begin{lem}\label{lem:noidentdown}
		Let $\gamma_1\ldots,\gamma_m, \pi_1\ldots,\pi_l$ be probability measures on a measurable space $\left( \Psi,\sG \right)$, $a_1\ldots,a_m,b_1,\ldots b_l \in \rn$, and $n\in \mathbb{N}_+$. If
		\begin{eqnarray*}
			\sum_{i=1}^m a_i \gamma_i^{\times n} =  \sum_{j=1}^l b_j \pi_j^{\times n}
		\end{eqnarray*}
		then for all $n'\in \mathbb{N}_+$ with $n'\le n$ we have that
		\begin{eqnarray*}
			\sum_{i=1}^m a_i \gamma_i^{\times n'} =  \sum_{j=1}^l b_j \pi_j^{\times n'}.
		\end{eqnarray*}
	\end{lem}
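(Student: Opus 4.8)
The plan is to reduce from the $n$-fold statement to the $n'$-fold statement one dimension at a time, so it suffices to handle the case $n' = n-1$ and then iterate. Fix the probability space $(\Psi,\sG)$. The key observation is that if $\gamma$ is any probability measure on $(\Psi,\sG)$ then for any fixed $B \in \sG$ we have $\gamma^{\times n}(C \times \Psi) = \gamma^{\times(n-1)}(C)$ for every $C \in \sG^{\times(n-1)}$; more generally, ``integrating out'' the last coordinate of a product measure $\gamma^{\times n}$ yields exactly $\gamma^{\times(n-1)}$, since $\gamma(\Psi)=1$. So I would define a map that takes a finite signed measure $\rho$ on $(\Psi^{\times n},\sG^{\times n})$ to the signed measure $\rho(\cdot \times \Psi)$ on $(\Psi^{\times(n-1)},\sG^{\times(n-1)})$, i.e. the marginal on the first $n-1$ coordinates. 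This is a linear operation on $\sM(\Psi^{\times n},\sG^{\times n})$: linearity follows immediately because $\rho \mapsto \rho(A)$ is linear for each fixed $A$, and it is well defined because $\rho(\cdot \times \Psi)$ is again a finite signed measure (countable additivity in the first $n-1$ coordinates is inherited).

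Applying this marginalization map to both sides of the hypothesis $\sum_{i=1}^m a_i \gamma_i^{\times n} = \sum_{j=1}^l b_j \pi_j^{\times n}$ and using linearity, we get $\sum_{i=1}^m a_i \big(\gamma_i^{\times n}(\cdot \times \Psi)\big) = \sum_{j=1}^l b_j \big(\pi_j^{\times n}(\cdot \times \Psi)\big)$. By the observation above, $\gamma_i^{\times n}(\cdot \times \Psi) = \gamma_i^{\times(n-1)}$ and likewise for the $\pi_j$, so this reads $\sum_{i=1}^m a_i \gamma_i^{\times(n-1)} = \sum_{j=1}^l b_j \pi_j^{\times(n-1)}$, which is the claim for $n'=n-1$. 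Iterating this argument $n-n'$ times yields the statement for the given $n' \le n$. (For $n'=n$ there is nothing to prove.)

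The only step requiring a little care is verifying that $\gamma^{\times n}(C\times\Psi) = \gamma^{\times(n-1)}(C)$ for all $C\in\sG^{\times(n-1)}$, i.e. that the first-$(n-1)$-coordinate marginal of the product measure really is the $(n-1)$-fold product. This is standard: the two measures agree on the $\pi$-system of measurable rectangles $C = A_1\times\cdots\times A_{n-1}$, where both sides equal $\prod_{k=1}^{n-1}\gamma(A_k)$, and since that $\pi$-system generates $\sG^{\times(n-1)}$ and both measures are finite (in fact probability measures), the $\pi$-$\lambda$ theorem forces them to coincide everywhere. I expect this measure-theoretic bookkeeping — confirming that marginalization of a finite product of measures behaves as expected and that it extends linearly and continuously enough to the signed-measure span — to be the main (though routine) obstacle; everything else is pure linearity.
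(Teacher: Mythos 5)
Your proposal is correct and is essentially the paper's own argument: both proofs evaluate the two signed measures on sets of the form $A\times\Psi^{\times(\text{extra coordinates})}$ and use linearity together with the fact that the $\gamma_i,\pi_j$ are probability measures, the only cosmetic differences being that you marginalize one coordinate at a time (the paper removes all $n-n'$ at once) and that you spell out the $\pi$--$\lambda$ justification the paper leaves implicit.
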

	\begin{proof}[Proof of Theorem \ref{thm:ident}]
		We proceed by contradiction. Suppose there exist $m,l\in \mathbb{N}_+$ with $l\le m$ such that there two different mixtures of measures $\sP = \sum_{i=1}^m a_i\delta_{\mu_i}  \neq \sP' = \sum_{j=1}^l b_j\delta_{\nu_j}$, and
		\begin{eqnarray*} 
			\sum_{i=1}^{m} a_i {\mu_i}^{\times 2m-1} = \sum_{j=1}^{l} b_j {\nu}_j^{\times 2m-1}. 
		\end{eqnarray*}
		Clearly $m>1$ otherwise we immediately arrive at a contradiction. By the well-ordering principle there exists a minimal $m$ such that the previous statement holds. For that minimal $m$ there exists a minimal $l$ such that the previous statement holds. We will assume that the $m$ and $l$ are both minimal in this way. This assumption implies that $\mu_i\neq \nu_j$ for all $i,j$. To prove this we will assume that there exists $i,j$ such that $\mu_i = \nu_j$, and show that this assumption leads to a contradiction. Without loss of generality we will assume that $\mu_m = \nu_l$. We will consider the three cases where $a_m=b_l$, $a_m>b_l$, and $a_m<b_l$.
		\begin{description}	
			\item[Case 1.] If $a_m = b_l$ then we have that
				\begin{eqnarray*}
					\sum_{i = 1}^{m-1}\frac{a_i}{1-a_m} \mu_i^{\times 2m-1} = \sum_{j=1}^{l-1}\frac{b_j}{1-b_l}\nu^{\times 2m-1}
				\end{eqnarray*}
				and from Lemma \ref{lem:noidentdown} we have
				\begin{eqnarray*}
					\sum_{i = 1}^{m-1}\frac{a_i}{1-a_m} \mu_i^{\times 2(m-1)-1} = \sum_{j=1}^{l-1}\frac{b_j}{1-b_l}\nu^{\times 2(m-1)-1}.
				\end{eqnarray*}
				Setting $\sP = \sum_{i=1}^{m-1} \frac{a_i}{1-a_m} \delta_{\mu_i}$ and $\sP' = \sum_{j=1}^{l-1} \frac{b_j}{1-b_l} \delta_{\nu_j}$, we have that $V_{2\left( m-1 \right) - 1} \left( \sP \right) = V_{2\left( m-1 \right)-1}\left( \sP' \right)$ which contradicts the minimality of $m$.
			\item[Case 2.]If $a_m >b_l$ then we have
				\begin{eqnarray*}
					\sum_{i=1}^{m-1} \frac{a_i}{1-b_l} \mu_i^{\times 2m-1} + \frac{a_m - b_l}{1-b_l}\mu_m^{\times 2m-1} = \sum_{j=1}^{l-1} \frac{b_j}{1-b_l}\nu_j^{\times 2m-1}
				\end{eqnarray*}
				which contradicts the minimality of $l$ by an argument similar to that in Case 1.
			\item[Case 3] If $a_m<b_l$ we have that
				\begin{eqnarray*}
					\sum_{i=1}^{m-1} \frac{a_i}{1-a_m} \mu_i^{\times 2m-1} = \sum_{j=1}^{l-1} \frac{b_j}{1-a_m} \nu_{j}^{\times 2m-1} + \frac{b_l - a_m}{1-a_m} \nu_l^{\times 2m-1}.
				\end{eqnarray*}
				Again we will use arguments similar to the one used in Case 1. If $l=m$ then swapping the mixtures associated with $m$ and $l$ gives us a pair of mixtures of measures which violates the minimality of $l$. If $l<m$ then from Lemma \ref{lem:noidentdown} we have that
				\begin{eqnarray*}
					\sum_{i=1}^{m-1} \frac{a_i}{1-a_m} \mu_i^{\times 2(m-1)-1} = \sum_{j=1}^{l-1} \frac{b_j}{1-a_m} \nu_{j}^{\times 2(m-1)-1} + \frac{b_l - a_m}{1-a_m} \nu_l^{\times 2(m-1)-1},
				\end{eqnarray*}
				which violates the minimality of $m$.
		\end{description}

		We have now established that $\mu_i \neq \nu_j$, for all $i,j$.
		We will use the following lemma to embed the mixture components in a Hilbert space.
		\begin{lem} \label{lem:himbed}
			Let $\gamma_1,\ldots,\gamma_n$ be finite measures on a measurable space $\left( \Psi, \sG \right)$. There exists a finite measure $\pi$ and non-negative functions $f_1,\ldots,f_n \in  L^1\left( \Psi,\sG,\pi \right)\cap L^2\left( \Psi,\sG,\pi \right)$ such that, for all $i$ and all $B \in \sG$
			\begin{eqnarray*}
				\gamma_i(B)=\int_B f_i d\pi. 
			\end{eqnarray*}
		\end{lem}
		From Lemma \ref{lem:himbed} there exists a finite measure $\xi$ and non-negative functions $p_1,\ldots,p_m,q_1,\ldots,q_l \in L^1\left( \Omega, \sF, \xi \right)\cap L^2\left( \Omega, \sF, \xi \right)$ such that, for all $B\in \sF$, $\mu_i(B) = \int_B p_i d\xi$ and $\nu_j(B) = \int_B q_j d\xi$ for all $i,j$. Clearly no two of these functions are equal (in the $\xi$-almost everywhere sense). If one of the functions were a scalar multiple of another, for example $p_1 = \alpha p_2$ for some $\alpha \neq 1$, it would imply
		\begin{eqnarray*}
			\mu_1\left( \Omega \right) = \int p_1 d\xi = \int \alpha p_2 d\xi=\alpha.
		\end{eqnarray*}
		This is not true so no pair of these functions are collinear.

		We can use the following lemma to extend this new representation to a product measure.

		\begin{lem} \label{lem:radprod}
			Let $\left( \Psi, \sG \right)$ be a measurable space, $\gamma$ and $\pi$ a pair of finite measures on that space, and $f$ a nonnegative function in $L^1\left(\Psi,\sG, \pi \right)$ such that, for all $A \in \sG$, $\gamma\left( A \right)=\int_A f d\pi$. Then for all $n$, for all $B \in \sG^{\times n}$ we have
			\begin{eqnarray*}
				\gamma^{\times n}\left( B \right) = \int_B f^{\times n} d\pi^{\times n}.
			\end{eqnarray*}
		\end{lem}
		Thus for any $R \in \sF^{\times 2m-1}$  we have
		\begin{eqnarray*}
			\int_R \sum_{i=1}^{m} a_i p_i^{\times 2m-1} d\xi^{\times 2m-1} 
			&=&  \sum_{i=1}^{m} a_i \mu_i^{\times 2m-1}\left( R \right)\\
		 &=&  \sum_{j=1}^{l} b_j \nu_j^{\times 2m-1}\left( R \right)\\
		 &=& \int_R \sum_{j=1}^{l} b_j q_j^{\times 2m-1}d\xi^{\times 2m-1}.
		\end{eqnarray*}
		The following lemma is a well known result in real analysis (Proposition 2.23 in \cite{folland99}), but it is worth mentioning explicitly.
		\begin{lem} \label{lem:inteq}
			Let $\left( \Psi,\sG,\gamma \right)$ be a measure space and $f,g \in L^1\left( \Psi,\sG,\gamma \right)$. Then $f=g$ $\gamma$-almost everywhere iff, for all $A\in \sG$, $\int_A f d\gamma = \int_A g d\gamma$.
		\end{lem}
		From this lemma it follows that
		\begin{eqnarray*}
			\sum_{i=1}^{m} a_i p_i^{\times 2m-1} = \sum_{j=1}^{l} b_j q_j^{\times 2m-1}.
		\end{eqnarray*}
		Applying the $U^{-1}$ operator from Lemma \ref{lem:l2prod} to the previous equation yields
		\begin{eqnarray*}
			\sum_{i=1}^{m} a_i p_i^{\otimes 2m-1} = \sum_{j=1}^{l} b_j q_j^{\otimes 2m-1}.
		\end{eqnarray*}
		Since $l+m \le2m$ Lemma \ref{lem:linind} states that 
		\begin{align*}
			p_1^{\otimes 2m-1},\ldots,p_{m}^{\otimes 2m-1},q_1^{\otimes 2m-1},\ldots,q_{l}^{\otimes 2m-1}
		\end{align*}
		are all linearly independent and thus $a_i = 0$ and $b_j = 0$ for all $i,j$, a contradiction.
	\end{proof}

	\begin{proof}[Proof of Theorem \ref{thm:noident}]
		To prove this theorem we will construct a pair of mixture of measures, $\sP \neq \sP'$ which both contain $m$ components and satisfy $V_{2m-2}\left( \sP \right) = V_{2m-2}\left( \sP' \right)$. From our definition of $\left( \Omega, \sF \right)$ we know there exists $F\in \sF$ such that $F$ and $F^C$ are nonempty. Let $x\in F$ and $x' \in F^C$. It follows that $\delta_{x}$ and $\delta_{x'}$ are different probability measures on $\left( \Omega, \sF \right)$. The theorem follows from the next lemma. We will prove the lemma after the theorem proof.
		\begin{lem}\label{lem:test}
			Let $\left( \Psi,\sG\right)$ be a measurable space and $\gamma, \gamma'$ be distinct probability measures on that space. Let $\varepsilon_1,\ldots,\varepsilon_t$ be $t\ge 3$ distinct values in $\left[ 0,1 \right]$. Then there exist $\beta_1,\ldots,\beta_t$, a permutation $\sigma:\left[ t \right]\to \left[ t \right]$, and $l\in \mathbb{N}_+$ such that
			\begin{eqnarray*}
				\sum_{i=1}^l \beta_i \left( \varepsilon_{\sigma\left( i \right)} \gamma + \left( 1-\varepsilon_{\sigma\left( i \right)} \right)\gamma' \right)^{\times t-2} = \sum_{j=l+1}^t \beta_j \left( \varepsilon_{\sigma\left( j \right)} \gamma + \left( 1-\varepsilon_{\sigma\left( j \right)} \right)\gamma' \right)^{\times t-2}
			\end{eqnarray*}
			where $\beta_i > 0$ for all $i$, $\sum_{i=1}^l \beta_i = \sum_{j=l+1}^t \beta_j = 1$, and $l,t- l \ge \left\lfloor\frac{t}{2}\right\rfloor$. 
		\end{lem}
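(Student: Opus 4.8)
The plan is to push the whole statement into a tensor power of an $L^2$ space, reduce it to a fact about Vandermonde vectors in $\rn^{t-1}$, and then transfer back. Applying Lemma \ref{lem:himbed} to $\gamma$ and $\gamma'$ gives a finite measure $\pi$ on $\left( \Psi,\sG \right)$ and nonnegative $f,f'\in L^1\left( \Psi,\sG,\pi \right)\cap L^2\left( \Psi,\sG,\pi \right)$ with $\gamma\left( B \right)=\int_B f\,d\pi$ and $\gamma'\left( B \right)=\int_B f'\,d\pi$ for all $B\in\sG$. For $\varepsilon\in\left[ 0,1 \right]$ set $g_\varepsilon\triangleq \varepsilon f+\left( 1-\varepsilon \right)f'=f'+\varepsilon\left( f-f' \right)$; this is nonnegative and lies in $L^1\cap L^2$, and by Lemma \ref{lem:radprod} it is a density of $\left( \varepsilon\gamma+\left( 1-\varepsilon \right)\gamma' \right)^{\times t-2}$ with respect to $\pi^{\times t-2}$. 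Working in $H\triangleq L^2\left( \Psi,\sG,\pi \right)$ and using multilinearity of the tensor product, I would expand in $H^{\otimes t-2}$
\[ g_\varepsilon^{\otimes t-2}=\left( f'+\varepsilon\left( f-f' \right) \right)^{\otimes t-2}=\sum_{j=0}^{t-2}\varepsilon^j S_j, \]
where $S_j\in H^{\otimes t-2}$ is the sum of the $\binom{t-2}{j}$ simple tensors having $j$ factors equal to $f-f'$ and the remaining $t-2-j$ factors equal to $f'$. Hence each $g_{\varepsilon_k}^{\otimes t-2}$ is the image of the Vandermonde vector $\left( 1,\varepsilon_k,\varepsilon_k^2,\ldots,\varepsilon_k^{t-2} \right)\in\rn^{t-1}$ under the single linear map $\left( x_0,\ldots,x_{t-2} \right)\mapsto\sum_j x_j S_j$.

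Next I would produce the coefficients. Since $t$ vectors in $\rn^{t-1}$ are linearly dependent, there are scalars $c_1,\ldots,c_t$, not all zero, with $\sum_{k=1}^t c_k\varepsilon_k^j=0$ for $j=0,1,\ldots,t-2$; explicitly $c_k=\prod_{i\neq k}\left( \varepsilon_k-\varepsilon_i \right)^{-1}$ works, being nonzero since the $\varepsilon_i$ are distinct and satisfying these identities by the Lagrange interpolation (divided-difference) formula. Feeding this dependence through the linear map above gives $\sum_{k=1}^t c_k g_{\varepsilon_k}^{\otimes t-2}=\sum_{j=0}^{t-2}\left( \sum_k c_k\varepsilon_k^j \right)S_j=0$ in $H^{\otimes t-2}$. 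To control signs, pick a permutation $\sigma$ listing the $\varepsilon_i$ in increasing order; then $\prod_{i'\neq k}\left( \varepsilon_{\sigma\left( k \right)}-\varepsilon_{\sigma\left( i' \right)} \right)$ has exactly $t-k$ negative factors, so $\operatorname{sign}\left( c_{\sigma\left( k \right)} \right)=\left( -1 \right)^{t-k}$ and the nonzero numbers $c_{\sigma\left( 1 \right)},\ldots,c_{\sigma\left( t \right)}$ strictly alternate in sign, with $\lceil t/2\rceil$ positive ones and $\lfloor t/2\rfloor$ negative ones. Composing $\sigma$ with a further permutation I may assume $c_{\sigma\left( i \right)}>0$ for $i\le l$ and $c_{\sigma\left( i \right)}<0$ for $i>l$, where $l=\lceil t/2\rceil$, so $l\ge\lfloor t/2\rfloor$ and $t-l=\lfloor t/2\rfloor$. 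Because $\sum_k c_k=0$ (the $j=0$ identity), $C\triangleq\sum_{i\le l}c_{\sigma\left( i \right)}=-\sum_{j>l}c_{\sigma\left( j \right)}$ is strictly positive, and $\beta_i\triangleq\left| c_{\sigma\left( i \right)} \right|/C$ are positive numbers with $\sum_{i\le l}\beta_i=\sum_{j>l}\beta_j=1$ and $\sum_{i\le l}\beta_i g_{\varepsilon_{\sigma\left( i \right)}}^{\otimes t-2}=\sum_{j>l}\beta_j g_{\varepsilon_{\sigma\left( j \right)}}^{\otimes t-2}$.

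Finally I would transfer back to measures. The unitary $U$ of Lemma \ref{lem:l2prod} sends each $g_\varepsilon^{\otimes t-2}$ to the product function $g_\varepsilon^{\times t-2}$, so the last identity becomes an equality in $L^2\left( \Psi^{\times t-2},\sG^{\times t-2},\pi^{\times t-2} \right)$, i.e.\ a $\pi^{\times t-2}$-almost-everywhere equality of integrable functions; integrating over an arbitrary $B\in\sG^{\times t-2}$, invoking Lemma \ref{lem:radprod} to rewrite each $\int_B g_{\varepsilon_{\sigma\left( i \right)}}^{\times t-2}\,d\pi^{\times t-2}$ as $\left( \varepsilon_{\sigma\left( i \right)}\gamma+\left( 1-\varepsilon_{\sigma\left( i \right)} \right)\gamma' \right)^{\times t-2}\left( B \right)$, and noting that two signed measures agreeing on all of $\sG^{\times t-2}$ coincide, yields the asserted identity. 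The one genuinely nontrivial step is the middle one: observing that the tensors $g_{\varepsilon_k}^{\otimes t-2}$ carry Vandermonde coordinates, which both forces a nontrivial vanishing combination in $\rn^{t-1}$ and, through the explicit Lagrange coefficients, pins down the alternating signs that split the indices into two groups each of size at least $\lfloor t/2\rfloor$; the $L^2$ embedding and the return to measures are routine uses of Lemmas \ref{lem:himbed}, \ref{lem:radprod}, and \ref{lem:l2prod}.
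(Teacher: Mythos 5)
Your proof is correct, and the core of it is genuinely different from the paper's. Both arguments share the same frame: embed $\gamma,\gamma'$ as densities $f,f'$ via Lemma \ref{lem:himbed}, work with $f_\varepsilon=\varepsilon f+(1-\varepsilon)f'$ in $L^2(\Psi,\sG,\pi)^{\otimes t-2}$, and transfer the resulting tensor identity back to measures with Lemmas \ref{lem:l2prod} and \ref{lem:radprod}. Where you diverge is the middle step. The paper obtains a nontrivial vanishing combination abstractly, by bounding $\dim\spn\{h^{\otimes t-2}:h\in H_2\}$ by $t-1$ through the dimension of the symmetric tensor space $S^{t-2}(\cn^2)$ (plus a complex-to-real reduction), then uses Lemma \ref{lem:linind} to force all coefficients nonzero, integrates to balance the two sides, and finally needs a three-case rank analysis (cases $t=3$, $t$ even, $t$ odd, via Lemmas \ref{lem:linind} and \ref{lem:tensrank}) just to establish $l,t-l\ge\lfloor t/2\rfloor$. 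You instead observe that $f_\varepsilon^{\otimes t-2}$ has Vandermonde coordinates $(1,\varepsilon,\ldots,\varepsilon^{t-2})$ with respect to the fixed tensors $S_j$, and then write down the explicit Lagrange (divided-difference) coefficients $c_k=\prod_{i\neq k}(\varepsilon_k-\varepsilon_i)^{-1}$, which simultaneously give a dependence relation, guarantee every coefficient is nonzero, and — because the signs strictly alternate when the $\varepsilon_i$ are sorted — hand you the split into $\lceil t/2\rceil$ positive and $\lfloor t/2\rfloor$ negative terms with no case analysis at all; the $j=0$ identity $\sum_k c_k=0$ even replaces the paper's integration step for matching the total weights. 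Your route is more elementary and constructive (it produces the non-identifiable pair explicitly), while the paper's route leans on machinery (Lemmas \ref{lem:linind} and \ref{lem:tensrank}, symmetric tensor dimension counts) that it has already built for the main theorems and that does not depend on recognizing the Vandermonde structure. I checked the sign computation and the divided-difference identities $\sum_k c_k\varepsilon_k^j=0$ for $0\le j\le t-2$; they are correct, so no gap remains.
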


		Let $\varepsilon_1,\ldots ,\varepsilon_{2m} \in \left[ 0,1 \right]$ be distinct and let $\mu_i = \varepsilon_i \delta_x + \left( 1-\varepsilon_i \right)\delta_{x'}$ for $i\in \left[ 2m \right]$. From Lemma \ref{lem:test} with $t = 2m$ there exists a permutation $\sigma: \left[ 2m \right] \to \left[ 2m \right]$ and $\beta_1,\ldots,\beta_{2m}$ such that
		\begin{eqnarray*}
			\sum_{i=1}^m \beta_i \mu_{\sigma\left( i \right)}^{\times 2m-2} = \sum_{j=m+1}^{2m} \beta_j \mu_{\sigma\left( j \right)}^{\times 2m-2},
		\end{eqnarray*}
		with $\sum_{i=1}^m \beta_i = \sum_{j=m+1}^{2m}\beta_j = 1$ and $\beta_i >0$ for all $i$.

		If we let $\sP = \sum_{i=1}^m \beta_i \delta_{\mu_{\sigma\left( i \right)}}$ and $\sP' = \sum_{j=m+1}^{2m} \beta_j \delta_{\mu_{\sigma\left( j \right)}}$, we have that $V_{2m-2}\left( \sP \right)= V_{2m-2}\left( \sP' \right)$ and $\sP \neq \sP'$ since $\mu_1,\ldots, \mu_{2m}$ are distinct.
	\end{proof}
	For the next proof we will introduce some notation. For a tensor $U \in \rn^{d_1}\otimes \cdots \otimes \rn^{d_l}$ we define $U_{i_1,\ldots,i_l}$ to be the entry in the $\left[ i_1,\ldots,i_l \right]$ location of $U$.

	\begin{proof}[Proof of Lemma \ref{lem:test}]
		From Lemma \ref{lem:himbed}, there exists a finite measure $\pi$ and non-negative functions $f,f' \in L^1\left( \Psi,\sG,\pi \right)\cap L^2\left( \Psi,\sG,\pi \right)$ such that, for all $A \in \sG$, $\gamma\left( A \right) = \int_A f d\pi$ and $\gamma'\left( A \right) = \int_A f' d\pi$.

		Let $H_2$ be the Hilbert space associated with the subspace in $L^2\left( \Psi,\sG,\pi \right)$ spanned by $f$ and $f'$. Let $\left( f_i \right)_{i=1}^{t}$ be non-negative functions in $L^1(\Psi, \sG, \pi)\cap L^2(\Psi, \sG, \pi)$ with $f_i = \varepsilon_i f + \left( 1-\varepsilon_i \right)f'$. Clearly $f_i$ is a pdf over $\pi$ for all $i$ and there are no pair in this collection which are collinear. Since $H_2$ is isomorphic to $\rn^2$ there exists a unitary operator $U:H_2 \to \rn^2$. From Lemma \ref{lem:unitprod} there exists a unitary operator $U_{t-2}:H_2^{\otimes t-2} \to {\rn^2}^{\otimes t-2}$, with $U_{t-2}\left( h_1 \otimes\cdots \otimes h_{t-2} \right) = U(h_1) \otimes \cdots \otimes U(h_{t-2})$. Because $U$ is unitary it follows that 
		\begin{align*}
			U_{t-2}\left( \spn\left( \left\{ h^{\otimes t-2}: h \in H_2 \right\} \right) \right) = \spn\left(\l\{ x^{\otimes t-2}:x \in \rn^2 \r\}\right).
		\end{align*}
		An order $r$ tensor, $A_{i_1,\ldots,i_r}$, is {\it symmetric} if $A_{i_1,\ldots,i_r} = A_{i_{\psi\left( 1 \right)},\ldots,i_{\psi\left( r \right)}}$for any $i_1,\ldots, i_r$ and permutation $\psi:\left[ r \right]\to \left[ r \right]$. A consequence of Lemma 4.2 in \cite{symtensorrank} is that $\spn\left( \l\{x^{\otimes t-2}:x \in \rn^2\r\} \right)\subset S^{ t-2}(\cn^2)$, the space of all symmetric order $t-2$ tensors over $\cn^2$. Complex symmetric tensor spaces will always be viewed as a vector space over the complex numbers and real symmetric tensor spaces will be always be viewed as a vector space over the real numbers.

	From Proposition 3.4 in \cite{symtensorrank} it follows that the dimension of $S^{ t-2 }\left( \cn^2 \right)$ is $\left( \begin{array}{c} 2+ t-2-1 \\ t-2 \end{array} \right) = t-1$. From this it follows that $\dim S^{t-2}\left( \rn^2 \right) \le t-1$, where $S^{t-2}\left( \rn^2 \right)$ is the space of all symmetric order $t-2$ tensors over $\rn^2$. To see this consider some set of linearly dependent tensors $x_1,\ldots,x_r  \in S^{t-2}\left( \mathbb{C}^2 \right)$ each containing only real valued entries, i.e. the tensors are in $S^{t-2}\l(\rn^2\r)$. Then it follows that there exists $c_1,\ldots, c_r \in \mathbb{C}$ such that
		\begin{eqnarray*}
			\sum_{i=1}^r c_i x_i = 0.
		\end{eqnarray*}
		Let $\Re$ denote the real component when applied to an element of $\mathbb{C}$, and the real component applied entrywise when applied to a tensor. We have that
		\begin{align*}
			0 =\Re\left(\sum_{i=1}^r  c_i  x_i\right) = \sum_{i=1}^r \Re \left(c_i  x_i\right) = \sum_{i=1}^r \Re \left(c_i  \right) x_i.
		\end{align*}
		Thus it follows that $x_1,\ldots x_r$ are linearly dependent in $S^{t-2}\l(\rn^2\r)$ and thus the dimensionality bound holds, $\dim S^{t-2}\left( \rn^2 \right) \le t-1$.

		From this we get that 
		\begin{align*}
			\dim\left( \spn\left( \left\{ h^{\otimes t-2}: h \in H_2 \right\} \right)\right)\le t-1.
		\end{align*}

		The bound on the dimension of $\spn\left( \left\{ h^{\otimes t-2}: h \in H_2 \right\} \right)$ implies that $\left( f_i^{\otimes t-2} \right)_{i=1}^{t}$ are linearly dependent. Conversely Lemma \ref{lem:linind} implies that removing a single vector from $\left( f_i^{\otimes t-2} \right)_{i=1}^{t}$ yields a set of vectors which are linearly independent. It follows that there exists $\left( \alpha_i \right)_{i=1}^{t}$ with $\alpha_i \neq 0$ for all $i$ and
		\begin{eqnarray} \label{eqn:cram}
			\sum_{i=1}^{t}\alpha_i f_i^{\otimes t-2} = 0.
		\end{eqnarray}
		There exists a permutation $\sigma:[t] \to [t]$ such that $\alpha_{\sigma\left( i \right)}< 0$ for all $i \in \left[ l \right]$ and $\alpha_{\sigma\left( j \right)} >0$ for all $j>l$ with $l\le \l\lfloor \frac{t}{2}\r\rfloor$ (ensuring that $l\le \l\lfloor \frac{t}{2}\r\rfloor$ may also require multiplying (\ref{eqn:cram}) by $-1$). This $\sigma$ appears in the lemma statement, but for the remainder of the proof we will simply assume without loss of generality that $\alpha_i<0$ for $i\in \left[ l \right]$ with $l\le \l\lfloor \frac{t}{2}\r\rfloor$.

		From this we have 
		\begin{eqnarray}\label{eqn:foo}
			\sum_{i=1}^{l}-\alpha_i f_i^{\otimes t-2}=\sum_{j=l+1}^{t}\alpha_j f_j^{\otimes t-2}.
		\end{eqnarray}
		From Lemma \ref{lem:l2prod} we have
		\begin{eqnarray*}
			\sum_{i=1}^{l}-\alpha_i f_i^{\times t-2}=\sum_{j=l+1}^{t}\alpha_j f_j^{\times t-2}
		\end{eqnarray*}
		and thus
		\begin{eqnarray*}
			\int \sum_{i=1}^{l}-\alpha_i f_i^{\times t-2} d\pi^{\times t-2 }&=&\int \sum_{j=l+1}^{t}\alpha_j f_j^{\times t-2}d\pi^{\times t-2 }\\
			\Rightarrow \sum_{i=1}^{l}-\alpha_i &=&\sum_{j=l+1}^{t}\alpha_j.
		\end{eqnarray*}
		Let $r=\sum_{i=1}^{l}-\alpha_i$. We know $r >0$ so dividing both sides of (\ref{eqn:foo}) by $r$ gives us
		\begin{eqnarray*}
			\sum_{i=1}^{l}-\frac{\alpha_i}{r} f_i^{\otimes t-2}=\sum_{j=l+1}^{t}\frac{\alpha_j}{r} f_j^{\otimes t-2}
		\end{eqnarray*}
		where the left and the right side are convex combinations. Let $\left( \beta_i \right)_{i=1}^{t}$ be positive numbers with $\beta_i = \frac{-\alpha_i}{r}$ for $i \in \left[ l \right]$ and $\beta_j = \frac{\alpha_j}{r}$ for $j\in \left[ t \right] \setminus \left[ l \right]$. This gives us
		\begin{eqnarray} \label{eqn:noidenttens}
			\sum_{i=1}^{l}\beta_i f_i^{\otimes t-2}=\sum_{j=l+1}^{t}\beta_j f_j^{\otimes t-2}.
		\end{eqnarray}
		We will now consider 3 cases for the value of $t$.

		\begin{description}
                    \item[Case 1.]If $t=3$ then $l=1$ and $l,t-l \ge \lfloor\frac{t}{2}\rfloor$ is satisfied.

                    \item[Case 2.] If $t$ is divisible by two then we can do the following,
                        \begin{eqnarray*}
                            \sum_{i=1}^{l} \beta_i f_i^{\otimes \frac{t}{2}-1}\otimes f_i^{\otimes \frac{t}{2}-1}&=&\sum_{j=l+1}^{t}\beta_j f_j^{\otimes \frac{t}{2}-1}\otimes f_j^{\otimes \frac{t}{2}-1}.
                        \end{eqnarray*}
                        \sloppy Consider the elements in the last equation as order two tensors in $ {L^2\left( \Psi, \sG, \pi \right)}^{\otimes \frac{t}{2}-1} \otimes  {L^2\left( \Psi, \sG, \pi \right)}^{ \otimes \frac{t}{2}-1} $. From Lemma \ref{lem:linind} and Lemma \ref{lem:tensrank} we have that the RHS of the previous equation has rank at least $\frac{t}{2}$ and since $l\le \frac{t}{2}$ it follows that $l=\frac{t}{2}$. Again we have that $l,t-l \ge\lfloor \frac{t}{2}\rfloor$.

                    \item[Case 3.] If $t$ is greater than 3 and not divisible by 2 then we can apply Lemma \ref{lem:l2prod} to get
                        \begin{eqnarray*}
                            \int_\Psi \sum_{i=1}^l \beta_i f_i^{\times t-3}f_i(x) d\pi(x) &=& \int_\Psi \sum_{j=l+1}^{t}\beta_j f_j^{\times t-3} f_j(y) d\pi(y)\\
                            \Rightarrow  \sum_{i=1}^l \beta_i f_i^{\times t-3} &=&  \sum_{j=l+1}^{t}\beta_j f_j^{\times t-3}.
                        \end{eqnarray*}
                        Applying Lemma \ref{lem:l2prod} again we get
                        \begin{eqnarray} 
                            \notag \sum_{i=1}^l \beta_i f_i^{\otimes t-3 } &=&  \sum_{j=l+1}^{t}\beta_j f_j^{\otimes t-3 } \\
                            \Rightarrow\sum_{i=1}^l \beta_i f_i^{\otimes \frac{t-1}{2} -1 } \otimes f_i^{\otimes \frac{t-1}{2} -1 } &=&  \sum_{j=l+1}^{t}\beta_j f_j^{\otimes \frac{t-1}{2} -1 } \otimes f_j^{\otimes \frac{t-1}{2} -1 }.\label{eqn:something}
                        \end{eqnarray}
		Recall that $\l\lfloor \frac{t}{2} \r\rfloor \ge l$ so we also have that
		\begin{eqnarray*}
			\l\lfloor \frac{t}{2}\r\rfloor - l &\ge& 0 \\
			\Rightarrow \frac{t}{2} - l &\ge& -\frac{1}{2}\\
			\Rightarrow t-l &\ge& \frac{t-1}{2}.
		\end{eqnarray*}
		From Lemma \ref{lem:linind} and Lemma \ref{lem:tensrank} we have that the RHS of (\ref{eqn:something}) has rank at least $\frac{t-1}{2}$ and thus $l\ge \frac{t-1}{2}$. From this we have that $t-l,l\ge \l\lfloor \frac{t}{2}\r\rfloor$ once again.
                \end{description}
 So $l,t-l \ge \lfloor \frac{t}{2}\rfloor$ for any $t\ge 3$. Applying Lemma \ref{lem:l2prod} to (\ref{eqn:noidenttens}) we have 
		\begin{eqnarray*}
			\sum_{i=1}^l \beta_i f_i^{\times t-2} = \sum_{j=l+1}^{t}\beta_j f_j^{\times t-2}.
		\end{eqnarray*}
		From Lemma \ref{lem:radprod} we have
		\begin{eqnarray*}
			\sum_{i=1}^l  \beta_i \left( \varepsilon_i \gamma + \left( 1-\varepsilon_i \right) \gamma' \right)^{\times t-2}   &=& \sum_{j=l+1}^{t} \beta_j \left( \varepsilon_j \gamma + \left( 1-\varepsilon_j \right) \gamma' \right)^{\times t-2}.
		\end{eqnarray*}
	\end{proof}

	\begin{proof}[Proof of Theorem \ref{thm:det}]
		Let $\sP = \sum_{i=1}^m a_i \delta_{\mu_i}$ and $\sP' = \sum_{j=1}^{l}  b_j \delta_{\nu_j}$ be mixtures of measures such that $\sP' \neq \sP$. We will proceed by contradiction. Suppose that $\sum_{i=1}^m  a_i \mu_i^{\times 2m} = \sum_{j=1}^l b_j \nu_j^{\times 2m} $. From Theorem \ref{thm:ident} we know that $\sP$ is $2m-1$-identifiable and therefore $2m$-identifiable by Lemma \ref{lem:ident}. It follows that $l>m$. From Lemma \ref{lem:himbed} there exists a finite measure $\xi$ and non-negative functions $p_1,\ldots,p_m,q_1,\ldots,q_l \in L^1\left( \Omega, \sF, \xi \right)\cap L^2\left( \Omega, \sF, \xi \right)$ such that, for all $B\in \sF$, $\mu_i(B) = \int_B p_i d\xi$ and $\nu_j(B) = \int_B q_j d\xi$ for all $i,j$. Using Lemmas \ref{lem:radprod} and \ref{lem:inteq} we have
		\begin{eqnarray*}
			\sum_{i=1}^m a_i p_i^{\times 2m} = \sum_{j=1}^l b_j q_j^{\times 2m}.
		\end{eqnarray*}
		By Lemma \ref{lem:l2prod} we have
		\begin{eqnarray*}
			\sum_{i=1}^m a_i p_i^{\otimes 2m} = \sum_{j=1}^l b_j q_j^{\otimes 2m},
		\end{eqnarray*}
		and therefore
		\begin{eqnarray*}
			\sum_{i=1}^m a_i p_i^{\otimes m}\otimes p_i^{\otimes m} = \sum_{j=1}^l b_j q_j^{\otimes m}\otimes q_j^{\otimes m}.
		\end{eqnarray*}
		\sloppy Consider the elements in the last inequality as tensors in $ {L^2\left( \Omega, \sF, \xi \right)}^{\otimes m} \otimes  {L^2\left( \Omega, \sF, \xi \right)}^{ \otimes m} $. Since no pair of vectors in $p_1,\ldots,p_m$ are collinear, from Lemma \ref{lem:linind} and Lemma \ref{lem:tensrank} we know that the LHS has rank $m$. On the other hand, no pair of vectors $q_1,\ldots,q_l$ are collinear either, so Lemma \ref{lem:linind} says that there is a subset of $\left\{q_1^{\otimes m},\ldots, q_l^{\otimes m}\right\}$ which contains at least $m+1$ linearly independent elements. By Lemma \ref{lem:tensrank} it follows that the RHS has rank at least $m+1$, a contradiction.
	\end{proof}
	\begin{proof}[Proof of Theorem \ref{thm:nodet}]
		To prove this theorem we will construct a pair of mixture of measures, $\sP \neq \sP'$ which contain $m$ and $m+1$ components respectively and satisfy $V_{2m-1}\left( \sP \right) = V_{2m-1}\left( \sP' \right)$. From our definition of $\left( \Omega, \sF \right)$ we know there exists $F\in \sF$ such that $F, F^C$ are nonempty. Let $x\in F$ and $x' \in F^C$. It follows that $\delta_{x}$ and $\delta_{x'}$ are different probability measures on $\left( \Omega, \sF \right)$.
		Let $\varepsilon_1,\ldots,\varepsilon_{2m+1}$ be distinct values in $\left[ 0,1 \right]$. Applying Lemma \ref{lem:test}  with $t=2m+1$ and letting $\mu_i = \varepsilon_i \delta_x + \left( 1-\varepsilon_i \right)\delta_{x'}$, there exists a permutation $\sigma:\left[ 2m+1 \right] \to \left[ 2m+1 \right]$ and $\beta_1,\ldots,\beta_{2m+1}$, with $\beta_i>0$ for all $i$ and $\sum_{i=1}^m \beta_i = \sum_{j=m+1}^{2m+1} \beta_j = 1$, such that
		\begin{eqnarray*}
			\sum_{i=1}^m \beta_i \mu_{\sigma\left( i \right)}^{\times 2m-1} = \sum_{j=m+1}^{2m+1} \beta_j \mu_{\sigma\left( j \right)}^{\times 2m-1}.
		\end{eqnarray*}
		If we let $\sP = \sum_{i=1}^m \beta_i \delta_{\mu_{\sigma\left( i \right)}}$ and $\sP' = \sum_{j=m+1}^{2m+1} \beta_j \delta_{\mu_{\sigma\left( j \right)}}$, we have that $V_{2m-1}\left( \sP \right)= V_{2m-1}\left( \sP' \right)$.
	\end{proof}
	To prove the remaining theorems we will need to make use of bounded linear operators on Hilbert spaces. Given a pair of Hilbert spaces $H,H'$ we define $\sL(H,H')$ as the space of {\em bounded linear operators} from $H$ to $H'$. An operator, $T$, is in this space if there exists a nonnegative number $C$ such that $\l\|Tx\r\|_{H'}\le C\l\|x\r\|_{H}$ for all $x \in H$. The space of bounded linear operators is a Banach space when equipped with the norm 
	\begin{eqnarray*}
		\l\|T\r\| \triangleq \sup_{x\neq 0}\frac{\l\|Tx\r\|}{\l\|x\r\|}.
	\end{eqnarray*}
	We will also need to employ Hilbert-Schmidt operators which are a subspace of the bounded linear operators.
	\begin{defin} \label{def:hs}
		Let $H,H'$ be Hilbert spaces and $T\in \sL\left( H,H' \right)$. $T$ is called a {\em Hilbert-Schmidt operator} if $\sum_{x\in J} \l\|T x \r\|^2 < \infty$ for an orthonormal basis $J\subset H$. We denote the set of Hilbert-Schmidt operators in $\sL\left( H,H' \right)$ by $\hs\left( H,H' \right)$.
	\end{defin}
	This definition does not depend on the choice of orthonormal basis: the sum $\sum_{x\in J} \l\|T\left( x \right)\r\|^2$ will always yield the same value regardless of the choice of orthonormal basis $J$. 

	The following properties of Hilbert-Schmidt operators will not be used in the next proof, but they will be useful later. The set of Hilbert-Schmidt operators is itself a Hilbert space when equipped with the inner product
	\begin{eqnarray*}
		\sum_{x\in J} \l<Tx,Sx\r>
	\end{eqnarray*}
	where $J$ is an orthonormal basis. Again this value does not depend on the choice of $J$. The Hilbert-Schmidt norm will be denoted as $\l\|\cdot \r\|_\hs$ and the standard operator norm will have no subscript. There is a well known bound relating the two norms: for a Hilbert-Schmidt operator $T$ we have that 
	\begin{eqnarray*}
		\l\|T \r\| \le \l\|T \r\|_\hs.
	\end{eqnarray*}

	\begin{proof}[Proof of Theorem \ref{thm:liident}]
		Let $\sP = \sum_{i=1}^m a_i \delta_{\mu_i}$ be a mixture of measures with linearly independent components. Let $\sP' = \sum_{j=1}^l  b_j \delta_{\nu_j}$ be a mixture of measures with $V_3(\sP) = V_3(\sP')$ and $l\le m$. From Lemma \ref{lem:himbed} there exists a finite measure $\xi$ and non-negative functions $p_1,\ldots,p_m,q_1,\ldots,q_l \in L^1\left( \Omega, \sF, \xi \right)\cap L^2\left( \Omega, \sF, \xi \right)$ such that, for all $B\in \sF$, $\int_B p_i d\xi = \mu_i(B)$ and $\int_B q_j d\xi = \nu_j\left( B \right)$ for all $i,j$. Using Lemma \ref{lem:noidentdown}, \ref{lem:radprod} , and \ref{lem:inteq} as we did in the previous theorem proofs it follows that
		\begin{eqnarray*}
			\sum_{i=1}^m a_i p_i^{\times 2}   &=& \sum_{j=1}^l b_j q_j^{\times 2}.
		\end{eqnarray*}
		From Lemma \ref{lem:l2prod} we have
		\begin{eqnarray*}
			\sum_{i=1}^m a_i p_i^{\otimes 2}   &=& \sum_{j=1}^l b_j q_j^{\otimes 2}.
		\end{eqnarray*}
		By Lemma \ref{lem:tensrank} we now know that the rank of the LHS of the previous equation is $m$ and thus thus $l=m$ and $q_1,\ldots,q_m$ are linearly independent.
		We will now show that $q_j\in \spn\left( \l\{p_1,\ldots,p_m \r\}\right)$ for all $j$. Suppose that $q_t \notin \spn\left(\l\{ p_1,\ldots,p_m \r\}\right)$. Then there exists $z\in L^2\left( \Omega,\sF,\xi \right)$ such that $z\perp p_1,\ldots ,p_m$ but $z \not \perp q_t$. Now we have 
		\begin{eqnarray*}
			\sum_{i=1}^m a_i p_i^{\otimes 2}   &=& \sum_{j=1}^m b_j q_j^{\otimes 2}\\
			\Rightarrow \l<\sum_{i=1}^m a_i p_i\otimes p_i, z\otimes z\r>   &=& \l<\sum_{j=1}^m b_j q_j \otimes q_j,z\otimes z\r> \\
			\Rightarrow \sum_{i=1}^m a_i \l< p_i \otimes p_i, z\otimes z\r>   &=& \sum_{j=1}^m b_j \l<  q_j\otimes q_j,z\otimes z\r> \\
			\Rightarrow \sum_{i=1}^m a_i \l< p_i , z\r>^2   &=& \sum_{j=1}^m b_j \l<  q_j,z\r>^2.
		\end{eqnarray*}
		We know that the LHS of the last equation is zero but the RHS is not, a contradiction.

		We will find the following well known property of tensor products to be useful for continuing the proof (\cite{kadison83} Proposition 2.6.9).
		\begin{lem} \label{lem:hstens}
			Let $H,H'$ be Hilbert spaces. There exists a unitary operator $U:H\otimes H' \to \hs\left( H,H' \right)$ such that, for any simple tensor $h\otimes h' \in H\otimes H'$, $U\left( h\otimes h' \right) = \l<h,\cdot\r> h'$.
		\end{lem}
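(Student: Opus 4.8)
The plan is the standard identification of a tensor product of Hilbert spaces with a space of Hilbert--Schmidt operators, adapted to the conventions used above. First I would check that every rank-one map $\l<h,\cdot\r>h'$ is actually Hilbert--Schmidt and compute its norm: for any orthonormal basis $J\subset H$, Parseval's identity gives
\[
\sum_{e\in J}\l\|\l<h,e\r>h'\r\|^2 \;=\; \l\|h'\r\|^2\sum_{e\in J}\l|\l<h,e\r>\r|^2 \;=\; \l\|h\r\|^2\l\|h'\r\|^2 \;<\;\infty ,
\]
so $\l<h,\cdot\r>h'\in\hs(H,H')$ with $\l\|\l<h,\cdot\r>h'\r\|_\hs=\l\|h\r\|\l\|h'\r\|=\l\|h\otimes h'\r\|$. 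Since $(h,h')\mapsto\l<h,\cdot\r>h'$ is bilinear, the universal property of the algebraic tensor product $H_0=\spn\l(\l\{h\otimes h':h\in H,\,h'\in H'\r\}\r)$ produces a unique linear map $U_0:H_0\to\hs(H,H')$ with $U_0(h\otimes h')=\l<h,\cdot\r>h'$.

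Next I would show that $U_0$ preserves inner products on $H_0$. On simple tensors, the Hilbert--Schmidt inner product of $\l<h_1,\cdot\r>h_1'$ and $\l<h_2,\cdot\r>h_2'$ unwinds, again via Parseval, to $\l<h_1,h_2\r>\l<h_1',h_2'\r>$, which is precisely $\l<h_1\otimes h_1',\,h_2\otimes h_2'\r>$; extending by linearity in each argument then yields $\l<U_0u,U_0v\r>_\hs=\l<u,v\r>$ for all $u,v\in H_0$. Hence $U_0$ is an isometry from the dense subspace $H_0$ of $H\otimes H'$ into the Hilbert space $\hs(H,H')$, so it extends uniquely to an isometry $U:H\otimes H'\to\hs(H,H')$ that still satisfies $U(h\otimes h')=\l<h,\cdot\r>h'$ on simple tensors.

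The remaining step is surjectivity. Fixing orthonormal bases $\l\{e_i\r\}$ of $H$ and $\l\{f_j\r\}$ of $H'$, the operators $U(e_i\otimes f_j)=\l<e_i,\cdot\r>f_j$ are orthonormal in $\hs(H,H')$, and a routine computation writes an arbitrary $T\in\hs(H,H')$ as a Hilbert--Schmidt-norm-convergent sum of such rank-one operators (the coefficient of $\l<e_i,\cdot\r>f_j$ being $\l<Te_i,f_j\r>$), so $\l\{\,\l<e_i,\cdot\r>f_j\,\r\}$ is in fact an orthonormal basis of $\hs(H,H')$. Therefore the range of $U$ is dense; being the range of an isometry it is also closed, so $U$ is onto and hence unitary. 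I do not expect a real obstacle here; the only delicate points are the bookkeeping in the bilinear extension of the isometry identity and, over $\cn$, the placement of a complex conjugate in $\l<h,\cdot\r>$ (strictly speaking one then has $\overline{H}\otimes H'\cong\hs(H,H')$), but since all the Hilbert spaces needed in this paper are real $L^2$ spaces this subtlety does not affect us.
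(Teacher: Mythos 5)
Your proof is correct. Note that the paper does not actually prove Lemma \ref{lem:hstens}: it is invoked as a known fact, with a pointer to Proposition 2.6.9 of \cite{kadison83}, so there is no in-paper argument to compare against. What you have written is essentially the standard proof of that cited proposition: rank-one maps $\l<h,\cdot\r>h'$ are Hilbert--Schmidt with $\l\|\l<h,\cdot\r>h'\r\|_\hs = \l\|h\otimes h'\r\|$, the (bi)linear extension to the algebraic tensor product preserves inner products via Parseval, the resulting isometry on the dense subspace extends to the completion, and surjectivity follows because $\left\{ \l<e_i,\cdot\r>f_j \right\}$ is an orthonormal basis of $\hs\left( H,H' \right)$ while the range of a linear isometry defined on a complete space is closed. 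Your closing caveat is also the right one to flag: with $h\mapsto \l<h,\cdot\r>$ conjugate-linear, the lemma as literally stated is a statement about real Hilbert spaces (in the complex case one gets $\overline{H}\otimes H'\cong \hs\left( H,H' \right)$), and since the paper only ever applies the lemma to real $L^2$ and Euclidean spaces, this does not affect anything downstream.
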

		Because $p_1,\ldots,p_m$ are linearly independent we can do the following: for each $k \in \left[ m \right]$ let $z_k \in \spn\left(\l\{ p_1,\ldots,p_m \r\}\right)$ be such that $z_k \perp \left\{ p_i: i \neq k \right\}$ and $\l<z_k,p_k\r> =1$. By considering elements of $L^2\left( \Omega,\sF, \xi \right)^{\otimes 3}$ as elements of $L^2\left( \Omega,\sF, \xi \right) \otimes L^2\left( \Omega,\sF, \xi \right)^{\otimes 2}$, we can use Lemma \ref{lem:hstens} to transform elements in $L^2\left( \Omega,\sF, \xi \right)^{\otimes 3}$ into elements of $\hs\left( L^2\left( \Omega,\sF, \xi \right),L^2\left( \Omega,\sF, \xi \right)^{\otimes 2} \right)$,
		\begin{eqnarray*}
			\sum_{i=1}^m a_i p_i^{\otimes 3}   &=& \sum_{j=1}^m b_j q_j^{\otimes 3} \\
			\Rightarrow \sum_{i=1}^m a_i p_i^{\otimes 2}\l<p_i,\cdot\r>   &=& \sum_{j=1}^m b_j q_j^{\otimes 2} \l<q_j,\cdot\r>.
		\end{eqnarray*}
		It now follows that
		\begin{eqnarray*}
			\notag \sum_{i=1}^m a_i p_i^{\otimes 2}\l<p_i,z_k\r>   &=& \sum_{j=1}^m b_j q_j^{\otimes 2} \l<q_j,z_k\r>\\
			\Rightarrow a_k p_k^{\otimes 2}  &=& \sum_{j=1}^m b_j q_j^{\otimes 2} \l<q_j,z_k\r>.
		\end{eqnarray*}
		Using Lemma \ref{lem:hstens} we have
		\begin{eqnarray}\label{eqn:ranksum}
			a_k p_k\l<p_k,\cdot\r>  &=& \sum_{j=1}^m b_j\l<q_j,z_k\r> q_j\l<q_j,\cdot\r> .
		\end{eqnarray}
		The LHS of (\ref{eqn:ranksum}) is a rank one operator and thus the RHS must have exactly one nonzero summand, since $q_1,\ldots,q_m$ are linearly independent. Let $\varphi:\left[ m \right]\to \left[ m \right]$ be a function such that, for all $k$,
		\begin{eqnarray*}
			a_k p_k^{\otimes 2} =\l<q_{\varphi\left( k \right)},z_k\r> b_{\varphi\left( k \right)} q_{\varphi\left( k \right)}^{\otimes 2}.
		\end{eqnarray*}
		From Lemma \ref{lem:radprod} we have
		\begin{eqnarray*}
			a_k \mu_k^{\times 2} &=& \l<q_{\varphi\left( k \right)},z_k\r>b_{\varphi\left( k \right)} \nu_{\varphi\left( k \right)}^{\times 2},
		\end{eqnarray*}
		for all $k$.
                By Lemma \ref{lem:noidentdown} we have that $a_k \mu_k = \l<q_{\varphi\left( k \right)},z_k\r>b_{\varphi\left( k \right)} \nu_{\varphi\left( k \right)}$ for all $k$ and thus $\mu_k = \nu_{\varphi\left( k \right)}$ since $\mu_k$ and $\nu_{\varphi\left( k \right)}$ are collinear probability measures. Because $\mu_i \neq \mu_j$ for all $i,j$ we have that $\varphi$ must be a bijection. Let $\sigma = \varphi^{-1}.$
		By Lemma \ref{lem:noidentdown} we have that
		\begin{eqnarray*}
			\sum_{i=1}^m a_i \mu_i = \sum_{j=1}^m b_j \mu_{\sigma\left( j \right)}.
		\end{eqnarray*}
		Since $\mu_1,\ldots,\mu_m$ are linearly independent the last equation only has one solution for $b_1,\ldots,b_m$, which is $b_k = a_{\sigma\left( k \right)}$, for all $k$. Thus
		\begin{eqnarray*}
			\sP' = \sum_{i=1}^m a_{\sigma\left( i \right)} \delta_{ \mu_{\sigma\left( i \right)}}
		\end{eqnarray*}
		which is equal to $\sP$.
	\end{proof}
	\begin{proof}[Proof of Theorem \ref{thm:lidet}]
		Let $\sP = \sum_{i=1}^m a_i \delta_{\mu_i}$ be a mixture of measures with linearly independent components. We will proceed by contradiction:  let $\sP' = \sum_{j=1}^l  b_j \delta_{\nu_j} \neq \sP$ be a mixture of measures with $V_4(\sP) = V_4(\sP')$. From Theorem \ref{thm:ident} we know that $\sP$ is $3$-identifiable. By Lemma \ref{lem:ident} it follows that $\sP$ is $4$-identifiable and thus $l>m$. From Lemma \ref{lem:himbed} there exists a finite measure $\xi$ and non-negative functions $p_1,\ldots,p_m,q_1,\ldots,q_l \in L^1\left( \Omega, \sF, \xi \right)\cap L^2\left( \Omega, \sF, \xi \right)$ such that, for all $B\in \sF$, $\int_B p_i d\xi = \mu_i(B)$ and $\int_B q_j d\xi = \nu_j\left( B \right)$ for all $i,j$.

		Proceeding as we did in the proof of Theorem \ref{thm:liident} we have that
		\begin{eqnarray*}
			\sum_{i=1}^m a_i p_i^{\otimes 4} = \sum_{j=1}^{l} b_j q_j^{\otimes 4}.
		\end{eqnarray*}
		Suppose that there exists $k$ such that $\nu_k \notin \spn\left( \left\{ \mu_1,\ldots,\mu_m \right\} \right)$. From this it would follow that there exists $z$ such that $z\perp \left\{ p_1,\ldots,p_m \right\}$ and $z \not \perp q_k$. Then we would have that 
		\begin{eqnarray*}
			\l<\sum_{i=1}^m a_i p_i^{\otimes 4},z^{\otimes 4}\r> &=& \l< \sum_{j=1}^{l} b_j q_j^{\otimes 4}, z^{\otimes 4}\r>\\
			\Rightarrow \sum_{i=1}^m a_i \l<p_i,z\r>^4 &=&  \sum_{j=1}^l b_j \l<q_j,z\r>^4,
		\end{eqnarray*}
		but the LHS of the last equation is 0 and the RHS is positive, a contradiction. Thus we have that $q_k \in \spn\left( \left\{ p_1,\ldots,p_m \right\} \right)$ for all $k$.

		Since $l>m$ and no pair of elements in $q_1,\ldots,q_m$ are collinear, there must a vector in $q_1,\ldots, q_l$ which is a nontrivial linear combination of $p_1,\ldots,p_m$. Without loss of generality we will assume that $q_1 = \sum_{i=1}^m c_i p_i$ with $c_1$ and $c_2$ nonzero. By the linear independence of $p_1,\ldots,p_m$ there must exist vectors $z_1,z_2$ such that $\l<z_1, p_1\r> =1$, $z_1 \perp \left\{ p_i : i\neq 1\right\}$, $\l<z_2,p_2\r> = 1$, and $z_2 \perp \left\{ p_i: i\neq 2 \right\}$. Now consider
		\begin{eqnarray*}
			\l<\sum_{i=1}^m a_i p_i^{\otimes 4},z_1^{\otimes 2} \otimes z_2^{\otimes 2}\r> &=& \l< \sum_{j=1}^{l} b_j q_j^{\otimes 4}, z_1^{\otimes 2} \otimes z_2^{\otimes 2}\r>\\
			\Rightarrow\sum_{i=1}^m a_i \l< p_i,z_1\r>^2 \l<p_i, z_2\r>^2 &=&\sum_{j=1}^{l} b_j\l<   q_j, z_1\r>^2 \l< q_j, z_2\r>^2.
		\end{eqnarray*}
		The LHS of the last equation is 0 and the RHS is positive, a contradiction.
	\end{proof}
	\begin{proof}[Proof of Theorem \ref{thm:ji}]
		Let $\sP = \sum_{i=1}^m a_i \delta_{\mu_i}$ be a mixture of measures with jointly irreducible components. Consider a mixture of measures $\sP' = \sum_{j=1}^l b_j\delta_{\nu_j}$ with $V_2(\sP)= V_2(\sP')$. From Lemma \ref{lem:himbed} there exists a finite measure $\xi$ and non-negative functions $p_1,\ldots,p_m,q_1,\ldots,q_l \in L^1\left( \Omega, \sF, \xi \right)\cap L^2\left( \Omega, \sF, \xi \right)$ such that, for all $B\in \sF$, $\int_B p_i d\xi = \mu_i(B)$ and $\int_B q_j d\xi = \nu_j\left( B \right)$ for all $i,j$. From Lemmas \ref{lem:radprod} and \ref{lem:inteq} we have
		\begin{eqnarray*}
			\sum_{i=1}^m  a_i p_i \times p_i = \sum_{j=1}^l  b_j q_j\times q_j.
		\end{eqnarray*}
		From Lemma \ref{lem:l2prod} we have
		\begin{eqnarray}
			\label{eqn:noident}
			\sum_{i=1}^m  a_i p_i \otimes p_i = \sum_{j=1}^l  b_j q_j\otimes q_j.
		\end{eqnarray}

		Suppose for a moment that $\sP'$ contains a mixture component which does not lie in $\spn\left(\l\{ \mu_1,\ldots,\mu_m \r\}\right)$. Without loss of generality we will assume that $\nu_1 \notin \spn\left(\l\{ \mu_1,\ldots,\mu_m \r\}\right)$. Recall that joint irreducibility implies linear independence so $\nu_1,\mu_1,\ldots,\mu_m$ are a linearly independent set of measures and thus $q_1,p_1,\ldots,p_m$ are linearly independent. It follows that we can find some $z\in L^2\left( \Omega,\sF,\xi \right)$ such that $\l<z,q_1\r> \neq 0$ and $z\perp \left\{ p_i: i\in \left[ m \right] \right\}$. From (\ref{eqn:noident}) we have the following
		\begin{eqnarray*}
			\l<\sum_{i=1}^m  a_i p_i \otimes p_i,z\otimes z\r> &=& \l<\sum_{j=1}^l  b_j q_j\otimes q_j,z\otimes z\r>\\
			\Rightarrow \sum_{i=1}^m a_i\l<   p_i \otimes p_i,z\otimes z\r> &=& \sum_{j=1}^l b_j\l<   q_j\otimes q_j,z\otimes z\r>\\
			\Rightarrow \sum_{i=1}^m a_i\l<   p_i ,z\r>^2 &=& \sum_{j=1}^l b_j\l<   q_j,z\r>^2.
		\end{eqnarray*}
		All the summands on both sides of the last equation are nonnegative. By our construction of $z$ the LHS of the previous equation is zero and the first summand on the RHS is positive, a contradiction. Thus, each component in $\sP'$ must lie in the span of the components of $\sP$.

		Now we have, for all $j$, $q_j = \sum_{i=1}^m c_i^j p_i$. From joint irreducibility we have that $c_i^j \ge 0$ for all $i$ and $j$. Now suppose that there exists $r,s,s'$ such that $c_s^r,c_{s'}^r >0$. From the linear independence of $p_1,\ldots,p_m$ we can find a $z$ such that $\l<p_s,z\r> = 1$ and $z\perp \left\{ p_q:q\in \left[ m \right]\setminus \left\{ s \right\} \right\}$. Applying Lemma \ref{lem:hstens} to (\ref{eqn:noident}) we have
		\begin{eqnarray*}
			\sum_{i=1}^m  a_i p_i \l< p_i, \cdot \r> &=& \sum_{j=1}^l  b_j q_j \l<q_j,\cdot\r>\\
			\Rightarrow \sum_{i=1}^m  a_i p_i \l< p_i, z \r> &=& \sum_{j=1}^l  b_j q_j \l<q_j,z\r>\\
			\Rightarrow  a_s p_s  &=& \sum_{j=1}^l  b_j \l[\sum_{t=1}^m c_t^j p_t\r] \l<\sum_{u=1}^m c_u^j p_u,z\r>\\
			\Rightarrow  a_s p_s  &=& \sum_{j=1}^l  b_j \l[\sum_{t=1}^m c_t^j p_t\r] c_s^j\\
                        &=&    \sum_{t=1}^m \sum_{j=1}^l b_jc_t^j c_s^j p_t \\
                        &=&    \sum_{t=1}^m p_t \sum_{j=1}^l b_jc_t^j c_s^j. 
		\end{eqnarray*}
		Let $\alpha_t = \sum_{j=1}^l b_jc_t^j c_s^j$ for all $t$ and note that each summand is nonnegative. Now we have
		\begin{eqnarray*}
			a_s p_s = \sum_{t=1}^m \alpha_t p_t.
		\end{eqnarray*}
		 We know that $\alpha_{s'}>0$ since $b_r c_s^r c_{s'}^r >0$. This violates the linear independence of $p_1,\ldots,p_m$. Now we have that for all $i$ there exists $j$ such that $p_i = q_j$. From the minimality of the representation of mixtures of measures it follows that $l=m$ and without loss of generality we can assert that $p_i = q_i$ for all $i$ and thus $\mu_i = \nu_i$ for all $i$. Because $p_1, \ldots, p_m$ are linearly independent it follows that $p_1\otimes p_1,\ldots, p_m\otimes p_m$ are linearly independent. We can show this by the contrapositive, suppose $p_1\otimes p_1, \ldots , p_m\otimes p_m$ are not linearly independent then there exists a nontrivial linear combination such that  $\sum_{i=1}^m \kappa_i p_i\otimes p_i = 0$. Assume without loss of generality that $\kappa_1 \neq 0$. Applying Lemma \ref{lem:hstens} we get that
		\begin{eqnarray*}
			\sum_{i=1}^m \kappa_i p_i \l<p_i,\cdot\r> &=&  0\\
			\Rightarrow \sum_{i=1}^m \kappa_i p_i \l<p_i,p_1\r> &=&  0\\
			\Rightarrow \kappa_1 p_1 \l\|p_1\r\|_{L^2}^2 + \sum_{i=2}^m \kappa_i p_i \l<p_i,p_1\r> &=&  0\\
		\end{eqnarray*}
		and thus $p_1,\ldots,p_m$ are not linearly independent.

		Since $p_1\otimes p_1,\ldots,p_m\otimes p_m$ are linearly independent it follows that $a_i = b_i$ for all $i$ and thus $\sP = \sP'$.

	\end{proof}

	\section{Identifiability and Determinedness of Mixtures of Multinomial Distributions}\label{sec:multinomial}
	Using the previous results we can show analogous identifiability and determinedness results for mixtures of multinomial distributions. The identifiability of mixtures of multinomial distributions was originally studied in \cite{kim1984} which contains a proof of Corollary \ref{cor:multident} from this paper. An alternative proof of this corollary can be found in \cite{elmore2003}. These results are analogous to identifiability results presented in this paper. Our proofs use techniques which are very different from those used in \cite{kim1984,elmore2003}. Our techniques can also be used to prove a determinedness style result, Corollary \ref{cor:multdet}, which we have not seen addressed elsewhere in the multinomial mixture model literature.

	Central to the results in this section is Lemma \ref{lem:mult} which establishes an equivalence between the grouped sample setting and multinomial mixture models. A sample from a multinomial distribution can be viewed as a totalling the outcomes from an iid sampling of a categorical distribution. Consider some probability measure $\mu$ over a finite discrete space and let $\bX=\left( X_1,\ldots,X_m \right)$ be a collection of $m$ iid samples from $\mu$. Here $\bX$ has the form of what we would call a ``random group.'' Because $\bX$ contains {\em iid} samples no useful statistical information is contained in the order of the samples. It follows that we can simply tally the number of results for each outcome and not lose any useful statistical information. Lemma \ref{lem:mult} formalizes this intuition so that we can apply tools developed earlier in this paper to the multinomial mixture model setting.

	Before our proofs we must first introduce some definitions and notation. Any multinomial distribution is completely characterized by positive integers $n$ and $q$ and a probability vector in $\rn^q$, $ p = \left[ p_1,\ldots,p_q \right]^T$.The value $q$ represents the number of possible outcomes of a trial, $p$ is the likelihood of each outcome on a trial, and $n$ is the number of trials. For whole numbers $k,l$ we define $C_{k,l}= \l\{x \in \nn^{\times l} : \sum_{i=1}^l x_i = k\r\}$. These are vectors of the form $\left[ x_1,\ldots,x_l \right]^T$ where $\sum_{i=1}^l x_i =k$. Using the values $n$ and $q$ above, the multinomial distribution is a probability measure over $C_{n,q}$. If $Q$ is a multinomial distribution with parameters $n,p,q$ as defined above then its probability mass function is 
	\begin{eqnarray*}
		Q\left(\l\{ \left[ x_1,\ldots,x_q \right]^T\r\} \right) =
		\frac{n!}{x_1! \cdots x_q !} p_1^{x_1}\cdots p_q^{x_q}
	\end{eqnarray*} 
	for $x \in C_{n,q}$.
	We will denote this measure as $Q_{n,p,q}$.
	Let 
	\begin{equation*}
		\sM\left( n,q \right)\triangleq \l\{Q_{n,p,q} : p \text{ is a probability vector in } \rn^q \r\},
	\end{equation*}
	i.e. the space of all multinomial distributions with $n$ and $q$ fixed.

To show identifiability and determinedness of mixtures of multinomial distributions we will construct a linear operator $T_{n,q}$ from $\spn \l(\sD\l(C_{n,q},2^{C_{n,q}}\right)\r)$ to $\spn\l(\sD\l(\left[ q \right]^{\times n}, 2^{\left[ q \right]^{\times n}}\r)\r)$ and use it to show that non-identifiable mixtures of multinomial distributions yield non-identifiable mixtures of measures. We will also use $T_{n,q}$ to show that non-determined mixtures of multinomial distributions yield non-determined mixtures of measures.

Since $C_{n,q}$ is a finite set, the vector space of finite signed measures on $\left( C_{n,q}, 2^{C_{n,q}} \right)$ is a finite dimensional space and the set $\left\{ \delta_x: x\in C_{n,q} \right\}$ is a basis for this space. Note that $\left\{ \delta_x: x\in C_{n,q} \right\}$ is the set of all {\em point masses} on $C_{n,q}$, not vectors in the ambient space of $C_{n,q}$. Thus, to completely define the operator $T_{n,q}$, we need only define $T_{n,q}\left( \delta_x \right)$ for all $x\in C_{n,q}$. To this end let $x \in C_{n,q}$. We define the function $F_{n,q}:C_{n,q} \to \left[ q \right]^{\times n}$ as $F_{n,q}\l(x\r) =1^{\times x_1}\times \cdots \times q^{\times x_q}$, where the exponents represent Cartesian powers. The definition of $F_{n,q}$ is a bit dense so we will do a simple example. Suppose $n=6,q=4$ and $x= \left[ 1,0,3,2 \right]^T$ then $F_{n,q}\l(x\r) = \left[ 1,3,3,3,4,4 \right]^T$. Intuitively the $F_{n,q}$ operator undoes the totalling which transforms a collection of trials from a categorical distribution into a draw from a multinomial distribuiton; $F_{ n,q }$ returns these trials in nondecreasing order. Let $S_n$ be the symmetric group on $n$ symbols. We define our linear operator as follows
\begin{eqnarray*}
	T_{n,q}\left( \delta_x \right) = \frac{1}{n!} \sum_{\sigma \in S_n} \delta_{ \sigma\l(F_{n,q}(x)\r)},
\end{eqnarray*}
where $\sigma$ is permuting the entries of $F_{n,q}\left( x \right)$.
This operator is similar to the projection operator onto the set of order $n$ symmetric tensors \cite{symtensorrank}. The following lemma makes the crucial connection between the space of multinomial distributions and the probability measures of grouped samples.
\begin{lem}\label{lem:mult}
	Let $Q_{n,p,q} \in \sM\left( n,q \right)$, then 
	\begin{eqnarray*}
		T_{n,q}\left( Q_{n,p,q} \right) = V_n\left( \delta_{\sum_{i=1}^q p_i \delta_i} \right).
	\end{eqnarray*}
\end{lem}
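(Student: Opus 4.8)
The plan is to verify the identity by comparing the two finite signed measures pointwise on the finite set $\left[q\right]^{\times n}$. Since $C_{n,q}$ is finite and $\left\{\delta_x : x \in C_{n,q}\right\}$ is a basis for the finite signed measures on $C_{n,q}$, linearity of $T_{n,q}$ together with the multinomial pmf gives
\[
	T_{n,q}\left(Q_{n,p,q}\right) = \sum_{x \in C_{n,q}} \frac{n!}{x_1!\cdots x_q!}\, p_1^{x_1}\cdots p_q^{x_q}\cdot\frac{1}{n!}\sum_{\sigma \in S_n}\delta_{\sigma\left(F_{n,q}\left(x\right)\right)}.
\]
On the other side, $\delta_{\sum_{i=1}^q p_i\delta_i}$ has order $1$, so $V_n\left(\delta_{\sum_{i=1}^q p_i\delta_i}\right) = \left(\sum_{i=1}^q p_i\delta_i\right)^{\times n}$, a measure on $\left[q\right]^{\times n}$. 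Because $\left[q\right]^{\times n}$ is finite, it suffices to check that the two measures assign the same mass to every singleton $\left\{y\right\}$ with $y = \left(y_1,\ldots,y_n\right)\in\left[q\right]^{\times n}$. Writing $c_i\left(y\right)$ for the number of coordinates of $y$ equal to $i$, the product measure assigns $\left(\sum_i p_i\delta_i\right)^{\times n}\left(\left\{y\right\}\right) = \prod_{k=1}^n p_{y_k} = \prod_{i=1}^q p_i^{c_i\left(y\right)}$.

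The key step is to extract the coefficient of $\delta_y$ on the left-hand side. A permutation of the coordinates of $F_{n,q}\left(x\right)$ leaves the multiset of its entries unchanged, and by construction $F_{n,q}\left(x\right)$ has exactly $x_i$ entries equal to $i$; hence $\sigma\left(F_{n,q}\left(x\right)\right) = y$ can hold for some $\sigma\in S_n$ only when $x_i = c_i\left(y\right)$ for every $i$. Thus exactly one index contributes, namely $x = \left(c_1\left(y\right),\ldots,c_q\left(y\right)\right)$, which lies in $C_{n,q}$ since $\sum_i c_i\left(y\right) = n$. For this $x$ the set $\left\{\sigma\in S_n : \sigma\left(F_{n,q}\left(x\right)\right) = y\right\}$ is a left coset of the stabilizer of $F_{n,q}\left(x\right)$ under the coordinate action of $S_n$; the orbit of $F_{n,q}\left(x\right)$ consists of its $\binom{n}{x_1,\ldots,x_q} = n!/\left(x_1!\cdots x_q!\right)$ distinct rearrangements, so by the orbit--stabilizer theorem this coset has size $x_1!\cdots x_q!$. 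Therefore the coefficient of $\delta_y$ in $T_{n,q}\left(Q_{n,p,q}\right)$ equals
\[
	\frac{n!}{x_1!\cdots x_q!}\, p_1^{x_1}\cdots p_q^{x_q}\cdot\frac{1}{n!}\cdot x_1!\cdots x_q! = \prod_{i=1}^q p_i^{c_i\left(y\right)},
\]
matching the mass assigned by $\left(\sum_i p_i\delta_i\right)^{\times n}$. Since $y$ was arbitrary and both measures are supported on the finite set $\left[q\right]^{\times n}$, they coincide.

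The argument is elementary; the one place needing care is the counting in the second paragraph. The subtlety is that the $n!$ summands $\delta_{\sigma\left(F_{n,q}\left(x\right)\right)}$ are not distinct — two permutations yield the same point mass precisely when they differ by an element of the stabilizer — so the coefficient cannot be read off as a raw count of permutations; it is exactly the stabilizer size $x_1!\cdots x_q!$ that cancels the $1/\left(x_1!\cdots x_q!\right)$ coming from the multinomial coefficient. Keeping that bookkeeping straight, and confirming that no index $x\neq\left(c_1\left(y\right),\ldots,c_q\left(y\right)\right)$ contributes, is the main (modest) obstacle.
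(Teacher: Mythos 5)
Your proof is correct and takes essentially the same route as the paper's: both compare the two measures on each singleton $\left\{ y \right\} \subset \left[ q \right]^{\times n}$, observe that only the index $x=\check{y}$ with $x_i = c_i\left( y \right)$ can contribute, and count the $x_1!\cdots x_q!$ permutations carrying $F_{n,q}\left( x \right)$ to $y$ so that the factorials cancel against the multinomial coefficient. The only difference is cosmetic: you justify the permutation count via the orbit--stabilizer theorem, where the paper appeals to basic combinatorics.
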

\begin{proof}[Proof of Lemma \ref{lem:mult}]
	For brevity's sake let $Q =T_{n,q}\left( Q_{n,p,q} \right)$ and  $R= V_n\left( \delta_{\sum_{i=1}^q p_i \delta_i} \right)$. Let $y \in \left[ q \right]^{\times n}$ be arbitrary. We will prove that $Q(\l\{y\r\}) = R(\l\{y\r\})$ which, since $y$ is arbitrary, clearly generalizes to $Q=R$. Let $\check{y}\in C_{n,q}$ be the element such that $\check{y}_i = \l|\left\{ j:y_j = i \right\}\r|$ for all $i$, i.e. the $i$th index of $\check{y}$ contains the number of times the value $i$ occurs in $y$. From the definition of $V_n$ we have that 
	\begin{align*}
		R(\l\{y\r\}) &= \left( \sum_{i=1}^q p_i \delta_i \right)^{\times n} \left( \left\{ y \right\} \right) = \prod_{i=1}^n p_{y_i} = \prod_{j=1}^{q}p_j^{\check{y}_j}.
	\end{align*}

	 We define $\chi$ to be the indicator function, which is equal to $1$ if its subscript is true and $0$ otherwise. Consider some $z\neq \check{y}$. We have
	\begin{eqnarray*}
		T_{n,q}\left( \delta_z \right)(\{y\}) 
		&=& \frac{1}{n!} \sum_{\sigma \in S_n} \delta_{ \sigma\l(F_{n,q}(z)\r)}\left( \left\{ y \right\} \right)\\
		&=& \frac{1}{n!} \sum_{\sigma \in S_n}  \chi_{\sigma\l(F_{n,q}(z)\r)= y}.
	\end{eqnarray*}
	From our definition of $F_{n,q}$ and $\check{y}$ it is clear that, there must exist some $r$ such that the number of entries of $F_{n,q}(z)$ which equal $r$ is different from the number of indices of $y$ which equal $r$. Because of this no permutation of $F_{n,q}(z)$ can equal $y$ and thus $T_{n,q}\left( \delta_z \right)(\{y\}) = 0$. From this it follows that $T_{n,q}\left( \delta_z \right)(\{y\}) = 0$ for all $z\neq \check{y}$.

	Now we will consider $T_{n,q}\left( \delta_{\check{y}} \right)(\{y\})$. Again we have
	\begin{eqnarray*}
		T_{n,q}\left( \delta_{\check{y}} \right)(\{y\}) =  \frac{1}{n!} \sum_{\sigma \in S_n}  \chi_{\sigma\l(F_{n,q}(\check{y})\r)= y},
	\end{eqnarray*}
	so we need only determine how many permutations of $F_{n,q}\left( \check{y} \right)$ are equal to $y$. Basic combinatorics tells us that there are $\check{y}_1!\cdots \check{y}_q!$ such permutations. The coefficient of $\delta_{\check{y}}$ in $Q_{n,p,q}$ is $\frac{n!}{\check{y}_1! \cdots \check{y}_q!} p_1^{\check{y}_1}\cdots p_n^{\check{y}_n}$ so we have that $Q(\l\{y\r\}) = R(\l\{y\r\})$ by direct evaluation.
\end{proof}

This lemma allows us to make some assertions about the identifiability of mixtures of multinomial distributions.

In the following we will assume that all multinomial mixture models under consideration have only nonzero summands and distinct components. In the context of multinomial mixture models, a multinomial mixture model $\sum_{i=1}^m a_i Q_{n,p_i,q}$ is identifiable if it being equal to a different multinomial mixture model,
\begin{eqnarray*}
	\sum_{i=1}^m a_i Q_{n,p_i,q} = \sum_{j=1}^s b_j Q_{n,r_j,q},
\end{eqnarray*}
with $s\le m$ implies that $s=m$ and there exists some permutation $\sigma$ such that $a_i = b_{\sigma\left( i \right)}$ and $Q_{n,p_i,q} = Q_{n,r_{\sigma\left( i \right)},q}$ for all $i$. The mixture model is determined if the previous statement holds without the restriction $s\le m$.

Multinomial mixture models are identifiable if the number of components $m$ and the number of trials in each component $n$ satisfy $n\ge 2m-1$.
\begin{cor}\label{cor:multident}
	Let $m\in \nn_+$, $n\ge2m-1$, and fix $q\in \nn_+$. Let $Q_{n,p_1,q},\ldots,Q_{n,p_m,q}, Q_{n,r_1,q},\ldots,Q_{n,r_s,q} \in \sM\left( n,q \right)$ with $Q_{n,p_1,q},\ldots,Q_{n,p_m,q}$ distinct, $Q_{n,r_1,q},\ldots,Q_{n,r_s,q}$ distinct, and $s\le m$. If
	\begin{eqnarray*}
		\sum_{i=1}^m a_i Q_{n,p_i,q} = \sum_{j=1}^s b_j Q_{n,r_j,q}
	\end{eqnarray*} 
	with $a_i>0,b_i> 0$ for all $i$ and $\sum_{i=1}^m a_i = \sum_{j=1}^s b_j = 1$, then $s = m$ and there exists some permutation $\sigma$ such that $a_i = b_{\sigma(i)}$ and $p_i = r_{\sigma(i)}$.
\end{cor}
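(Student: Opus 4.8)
The plan is to use the operator $T_{n,q}$ to translate the multinomial identity into an equation in the grouped-sample framework and then apply Theorem \ref{thm:ident}. Set $\mu_i \triangleq \sum_{k=1}^{q} (p_i)_k \delta_k$ and $\nu_j \triangleq \sum_{k=1}^{q} (r_j)_k \delta_k$, which are probability measures on $\left( [q], 2^{[q]} \right)$. Applying the linear operator $T_{n,q}$ to both sides of the hypothesized identity and invoking Lemma \ref{lem:mult} on each summand gives
\[
	\sum_{i=1}^{m} a_i V_n\left( \delta_{\mu_i} \right) = \sum_{j=1}^{s} b_j V_n\left( \delta_{\nu_j} \right).
\]
I would first check that distinctness of the $Q_{n,p_i,q}$ forces distinctness of the $\mu_i$ (and similarly for the $\nu_j$): if $p_i = p_j$ then $Q_{n,p_i,q} = Q_{n,p_j,q}$, so distinct multinomials have distinct parameter vectors, and since $\{\delta_k\}_{k=1}^{q}$ is linearly independent this makes the $\mu_i$ distinct. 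Hence $\sum_{i=1}^m a_i \delta_{\mu_i}$ is the minimal representation of a mixture of measures $\sP$ on $\Omega = [q]$ with $m$ components, and $\sum_{j=1}^s b_j \delta_{\nu_j}$ is a mixture of measures $\sP'$ with $s \le m$ components; unwinding the definition of $V_n$ on these minimal representations, the displayed equation is precisely $V_n(\sP) = V_n(\sP')$. (If $q = 1$ the space $\sM(n,1)$ is a singleton and the corollary is immediate, so we may assume $q \ge 2$, in which case $\left( [q], 2^{[q]} \right)$ satisfies the standing assumptions of Section \ref{sec:problemsetup}.)

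From here the argument is short. Since $\sP$ has $m$ components, Theorem \ref{thm:ident} gives that $\sP$ is $(2m-1)$-identifiable, and Lemma \ref{lem:ident} upgrades this to $n$-identifiability for every $n \ge 2m-1$. Because $\sP'$ has order $s \le m$, no greater than the order of $\sP$, and $V_n(\sP) = V_n(\sP')$, Definition \ref{def:ident} forces $\sP = \sP'$ as elements of $\dd$. Applying Lemma \ref{lem:represent} to the equality of minimal representations $\sum_i a_i \delta_{\mu_i} = \sum_j b_j \delta_{\nu_j}$ yields $s = m$ and a permutation $\psi$ with $a_{\psi(j)} = b_j$ and $\mu_{\psi(j)} = \nu_j$ for all $j$; since $\{\delta_k\}_{k=1}^{q}$ is a basis, $\mu_{\psi(j)} = \nu_j$ gives $p_{\psi(j)} = r_j$. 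Taking $\sigma = \psi^{-1}$ yields $a_i = b_{\sigma(i)}$ and $p_i = r_{\sigma(i)}$ for all $i$, as claimed.

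There is no real obstacle in this proof: it is a dictionary translation of Theorem \ref{thm:ident} via Lemma \ref{lem:mult}. The only points requiring a little care are (i) confirming that the weights and distinctness transfer so that $\sP$ and $\sP'$ are genuine mixtures of measures of the stated orders, so that Definition \ref{def:ident} applies with the correct order hypothesis, and (ii) inverting the permutation from Lemma \ref{lem:represent} in the correct direction to match the indexing in the statement.
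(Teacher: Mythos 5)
Your proof is correct and follows essentially the same route as the paper: translate the multinomial identity through $T_{n,q}$ and Lemma \ref{lem:mult} into $V_n(\sP)=V_n(\sP')$ for mixtures of measures on $[q]$, then invoke Theorem \ref{thm:ident}. The only cosmetic difference is that you argue directly, upgrading $(2m-1)$-identifiability to $n$-identifiability via Lemma \ref{lem:ident} and extracting the permutation with Lemma \ref{lem:represent}, whereas the paper runs the same ingredients as a contradiction using Lemma \ref{lem:noident} to descend from $n$ to $2m-1$.
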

Alternatively this corollary says that, given two different finite mixtures with components in $\sM(n,q)$, one mixture with $m$ components and the other with $s$ components, if $n\ge 2m-1$ and $n\ge 2s-1$ then the mixtures induce different measures.
\begin{proof}[Proof of Corollary \ref{cor:multident}]
	We will proceed by contradiction and assume that there exists two mixtures of the form above,
	\begin{eqnarray*}
		\sum_{i=1}^m a_i Q_{n,p_i,q} = \sum_{j=1}^s b_j Q_{n,r_j,q}
	\end{eqnarray*} 
	but $s\neq m$ or $s=m$ and there exists no permutation such that $a_i Q_{n,p_i,q} = b_{\sigma\left( i \right)} Q_{ n,r_{\sigma\left( i \right)},q}$. 
	If we apply $T_{n,q}$ defined earlier, from Lemma \ref{lem:mult} it follows that
	\begin{eqnarray*}
		V_n\left(\sum_{i=1}^{m} a_i \delta_{\sum_{k=1}^q p_{i,k} \delta_k} \right) = V_n\left(\sum_{j=1}^{s} b_j \delta_{\sum_{l=1}^q r_{j,l} \delta_l} \right).
	\end{eqnarray*}
	We have that $\sP = \sum_{i=1}^{m} a_i \delta_{\sum_{k=1}^q p_{i,k} \delta_k}$ and $\sP' = \sum_{j=1}^{s} b_j \delta_{\sum_{l=1}^q r_{j,l} \delta_l}$ are mixtures of measures which are not $n$-identifiable. Our contradiction hypothesis implies that $\sP \neq \sP'$. From Lemma \ref{lem:noident} we have that 
	\begin{eqnarray*}
		V_{2m-1}\left(\sum_{i=1}^{m} a_i \delta_{\sum_{k=1}^q p_{i,k} \delta_k} \right) = V_{2m-1}\left(\sum_{j=1}^{s} b_j \delta_{\sum_{l=1}^q r_{j,l} \delta_l} \right),
	\end{eqnarray*}
	which contradicts Theorem \ref{thm:ident}.
\end{proof}
Additionally multinomial mixture models are determined if the number of components $m$ and the number of trials in each component $n$ satisfy $n\ge 2m$.

\begin{cor} \label{cor:multdet}
	Let $n\ge2m$ and fix $q\in \nn$. Let $Q_{n,p_1,q},\ldots,Q_{n,p_m,q}$ and $Q_{n,r_1,q},\ldots,Q_{n,r_s,q}$ be elements of $\sM\left( n,q \right)$ with $Q_{n,p_1,q},\ldots,Q_{n,p_m,q}$ distinct and $Q_{n,r_1,q},\ldots,Q_{n,r_s,q}$ distinct. If
	\begin{eqnarray*}
		\sum_{i=1}^m a_i Q_{n,p_i,q} = \sum_{j=1}^s b_j Q_{n,r_j,q}
	\end{eqnarray*} 
	with $a_i>0,b_i> 0$ for all $i$ and $\sum_{i=1}^m a_i = \sum_{j=1}^m b_i= 1$, then $m=s$ and there exists some permutation $\sigma$ such that $a_i = b_{\sigma(i)}$ and $p_i = r_{\sigma(i)}$. 

\end{cor}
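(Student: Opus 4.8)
The plan is to mirror the proof of Corollary~\ref{cor:multident}, replacing the identifiability ingredients by their determinedness counterparts. I would proceed by contradiction. Suppose
\begin{eqnarray*}
	\sum_{i=1}^m a_i Q_{n,p_i,q} = \sum_{j=1}^s b_j Q_{n,r_j,q}
\end{eqnarray*}
holds under the stated positivity and normalization hypotheses, but either $s \neq m$, or $s = m$ and there is no permutation $\sigma$ with $a_i = b_{\sigma(i)}$ and $p_i = r_{\sigma(i)}$ for all $i$. Applying the linear operator $T_{n,q}$ to both sides and invoking Lemma~\ref{lem:mult} gives, exactly as in the proof of Corollary~\ref{cor:multident},
\begin{eqnarray*}
	V_n\l( \sum_{i=1}^m a_i \delta_{\sum_{k=1}^q p_{i,k}\delta_k} \r) = V_n\l( \sum_{j=1}^s b_j \delta_{\sum_{l=1}^q r_{j,l}\delta_l} \r).
\end{eqnarray*}
Set $\sP = \sum_{i=1}^m a_i \delta_{\sum_{k=1}^q p_{i,k}\delta_k}$ and $\sP' = \sum_{j=1}^s b_j \delta_{\sum_{l=1}^q r_{j,l}\delta_l}$.

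I would first note that $\sP$ and $\sP'$ are genuine mixtures of measures of orders $m$ and $s$ respectively: since $p_1,\ldots,p_m$ are distinct probability vectors, the measures $\sum_k p_{i,k}\delta_k$ on $\left[ q \right]$ are distinct, so this summation is a minimal representation, and likewise for $\sP'$. Next I would argue that the contradiction hypothesis forces $\sP \neq \sP'$. If instead $\sP = \sP'$ as elements of $\dd$, then Lemma~\ref{lem:represent} supplies a permutation $\sigma$ matching their minimal representations, so $s = m$, $a_i = b_{\sigma(i)}$, and $\sum_k p_{i,k}\delta_k = \sum_l r_{\sigma(i),l}\delta_l$ for all $i$; but equality of these two measures on the finite set $\left[ q \right]$ is precisely $p_i = r_{\sigma(i)}$, contradicting our assumption.

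Thus $\sP \neq \sP'$ while $V_n(\sP) = V_n(\sP')$, so $\sP$ is not $n$-determined. Since $n \ge 2m$, Lemma~\ref{lem:nodet} gives that $\sP$ is not $2m$-determined (equivalently, one may apply Lemma~\ref{lem:noidentdown} directly to the two sums of $n$-th tensor powers to lower the exponent to $2m$). But $\sP$ is a mixture of measures with $m$ components, so Theorem~\ref{thm:det} asserts that it \emph{is} $2m$-determined --- a contradiction, which completes the proof.

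I expect the only delicate point to be the dictionary between the multinomial mixtures and the mixtures of measures $\sP,\sP'$: one must check that applying $T_{n,q}$ genuinely converts the hypothesized equality of multinomial mixtures into the displayed equality of $V_n$-values (this rests on Lemma~\ref{lem:mult}, linearity of $T_{n,q}$, and the fact that distinctness of the $Q_{n,p_i,q}$ forces distinctness of the $p_i$, hence minimality of the representation of $\sP$), and, in the reverse direction, that the absence of a component-matching permutation for the multinomial model does entail $\sP \neq \sP'$ --- with the case $s = m$ handled via Lemma~\ref{lem:represent} as above. Everything else is a direct transcription of the proof of Corollary~\ref{cor:multident} with Theorem~\ref{thm:det} and Lemma~\ref{lem:nodet} in place of Theorem~\ref{thm:ident} and Lemma~\ref{lem:noident}.
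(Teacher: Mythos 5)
Your proposal is correct and is exactly the route the paper intends: the paper omits the proof, stating it is ``almost identical to the proof of Corollary \ref{cor:multident},'' and your argument is precisely that transcription, substituting Theorem \ref{thm:det} and Lemma \ref{lem:nodet} for Theorem \ref{thm:ident} and Lemma \ref{lem:noident}. The extra care you take with the dictionary (distinctness of the $Q_{n,p_i,q}$ giving minimal representations of $\sP,\sP'$, and Lemma \ref{lem:represent} ensuring that $\sP=\sP'$ would force the matching permutation) is a sound and welcome filling-in of details the paper leaves implicit.
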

The proof is almost identical to the proof of Corollary \ref{cor:multident}, so we will omit it. Using these proof techniques one could establish additional identifiability/determinedness style results for multinomial mixture models along the lines of Theorems \ref{thm:liident}, \ref{thm:lidet}, and \ref{thm:ji}. Furthermore it seems likely that one could use the algorithm described in the next section or from \cite{anandkumar14,arora12, rabani14} to recover these components, using the transform $T_{n,q}$.
\section{Algorithms} \label{sec:algorithm}
Here we will present a few algorithms for the recovery of mixture components and proportions from data. The algorithms are quite general and can be applied to any measurable space. Unfortunately, due to the generality of the proposed algorithms, some of the implementation details are setting specific which makes in-depth theoretical analysis difficult. As one concrete illustration, we will show consistency for categorical measures.

Let $\sum_{i=1}^m w_i \delta_{\mu_i}$ be an arbitrary mixture of measures on some measurable space $(\Omega,\sF)$, which we are interested in recovering. Let $p_1,\ldots,p_m$ be square integrable densities with respect to a dominating measure $\xi$, with $\int_A p_i d\xi = \mu_i\left( A \right)$ for all $i \in [m]$ and $A \in \sF$. A measure $\xi$ and densities $p_1,\ldots,p_m$ satisfying these properties are guaranteed to exist as a consequence of Lemma \ref{lem:himbed}.

We will initially consider the situation where we have $2m$ samples per random group and have access to the tensors $\sum_{i=1}^m w_i p_i^{\otimes 2m}$ and $\sum_{i=1}^m w_i p_i^{\otimes 2m-2}$. In a finite discrete space, estimating these tensors is equivalent to estimating moment tensors of order $2m$ and $2m-2$. For measures over $\rn^d$ dominated by the Lebesgue measure, one could estimate these tensors using a kernel density estimator in $\rn^{d(2m)}$ and $\rn^{d(2m-2)}$ using each sample group as a kernel center. We will also assume that $p_1,\ldots,p_m$ have distinct norms. We will need to introduce tensor products of bounded linear operators. The following lemma is exactly proposition 2.6.12 from \cite{kadison83}.
\begin{lem} \label{lem:hsprod}
	Let $H_1,\ldots,H_n,H_1',\ldots,H_n'$ be Hilbert spaces and let $U_i \in \mathcal{L}\l(H_i,H_i'\r)$ for all $i \in \left[ n \right]$. There exists a unique 
	\begin{align*}
		U \in \mathcal{L}\left( H_1\otimes \cdots \otimes H_n,H_1'\otimes \cdots \otimes H_n' \right),
	\end{align*}
	such that $U\left( h_1\otimes \cdots \otimes h_n \right) = U_1\left( h_1 \right) \otimes \cdots \otimes U_n\left( h_n \right)$ for all $h_1 \in H_1, \ldots, h_n \in H_n$.
\end{lem}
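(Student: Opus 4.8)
Although this lemma is quoted as Proposition 2.6.12 of \cite{kadison83}, it is worth recording how one would prove it directly. The plan is to define $U$ first on the algebraic tensor product $H_0$ (the span of the simple tensors), bound its operator norm there, and then extend by continuity. First I would observe that the map $(h_1,\ldots,h_n)\mapsto U_1(h_1)\otimes\cdots\otimes U_n(h_n)$ is multilinear, so by the universal property of the algebraic tensor product it induces a unique linear map on $H_0\subset H_1\otimes\cdots\otimes H_n$ with the stated action on simple tensors. Everything then reduces to showing this map is bounded with $\l\|U\r\|\le\prod_{i=1}^{n}\l\|U_i\r\|$: a bounded operator on the dense subspace $H_0$ extends uniquely to a bounded operator on the completion $H_1\otimes\cdots\otimes H_n$ with the same norm, and uniqueness of $U$ on the whole space is automatic since two bounded operators that agree on the dense set $H_0$ agree everywhere.

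To obtain the norm bound I would reduce to a single nontrivial factor. Using the associativity of the tensor product (Proposition 2.6.5 of \cite{kadison83}) and induction on $n$, it suffices to treat $n=2$. Factoring $U_1\otimes U_2=(U_1\otimes \mathrm{id}_{H_2'})\circ(\mathrm{id}_{H_1}\otimes U_2)$ and invoking submultiplicativity of the operator norm, it is enough to show $\l\|\mathrm{id}_{H_1}\otimes U_2\r\|\le\l\|U_2\r\|$ on $H_0$ (and, symmetrically, $\l\|U_1\otimes\mathrm{id}_{H_2}\r\|\le\l\|U_1\r\|$). Given $x$ in the algebraic tensor product of $H_1$ and $H_2$, I would choose an orthonormal basis $e_1,\ldots,e_k$ of the finite-dimensional span of the left-hand factors appearing in $x$ and rewrite $x=\sum_{i=1}^{k}e_i\otimes f_i$ for suitable $f_i\in H_2$. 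Orthonormality of the $e_i$ gives the Pythagorean identity $\l\|x\r\|^2=\sum_{i=1}^{k}\l\|f_i\r\|^2$, and since $(\mathrm{id}_{H_1}\otimes U_2)(x)=\sum_{i=1}^{k}e_i\otimes U_2 f_i$ we get $\l\|(\mathrm{id}_{H_1}\otimes U_2)(x)\r\|^2=\sum_{i=1}^{k}\l\|U_2 f_i\r\|^2\le\l\|U_2\r\|^2\sum_{i=1}^{k}\l\|f_i\r\|^2=\l\|U_2\r\|^2\l\|x\r\|^2$, as desired.

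The only step that is not purely formal is the one just used: rewriting an arbitrary element of the algebraic tensor product in the ``diagonal'' form $\sum_i e_i\otimes f_i$ with the $e_i$ orthonormal, and the consequent additivity of squared norms. Once that fact is available, the operator-norm estimate, the extension by density, the uniqueness, and the induction on $n$ are all routine, so I expect the diagonalization lemma (or equivalently, a direct appeal to the structure of simple tensors and the inner-product formula $\l<h_1\otimes h_1',h_2\otimes h_2'\r>=\l<h_1,h_2\r>\l<h_1',h_2'\r>$) to be the heart of the argument.
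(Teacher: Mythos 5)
The paper does not actually prove this lemma: it is quoted verbatim as Proposition 2.6.12 of \cite{kadison83}, so there is no in-paper argument to compare against. Your proof is correct and is, in substance, the standard argument behind that citation: well-definedness on the span of simple tensors via multilinearity, the bound $\l\|U_1\otimes\cdots\otimes U_n\r\|\le \prod_{i=1}^{n}\l\|U_i\r\|$ obtained by factoring into operators that act on one tensor factor at a time and applying the orthonormal-expansion (Pythagorean) estimate, and then extension by continuity, with uniqueness following from density of the algebraic tensor product. The only point worth making explicit is the identification of the span of the simple tensors inside the completed tensor product with the algebraic tensor product (needed so the universal-property map lands on $H_0$ unambiguously); this is built into the construction of the Hilbert-space tensor product, so it is not a gap, merely a step to acknowledge.
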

\begin{defin}\label{def:prodop}
	The operator constructed in Lemma \ref{lem:hsprod} is called the {\em tensor product of $U_1,\ldots,U_n$} and is denoted $U_1\otimes \cdots \otimes U_n$.
\end{defin}
The following equality is mentioned in \cite{kadison83}.

\begin{lem} \label{lem:prodopbnd}
	Let $U_1,\ldots,U_n$ be defined as in Lemma \ref{lem:hsprod}. Then
	\begin{align*}
		\l\|U_1 \otimes \cdots \otimes U_n \r\| = \l\|U_1\r\| \l\|U_2\r\| \cdots \l\|U_n\r\|.
	\end{align*}
\end{lem}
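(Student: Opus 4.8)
The plan is to establish $\|U_1 \otimes \cdots \otimes U_n\| \le \prod_{i=1}^n \|U_i\|$ and the reverse inequality separately. The reverse inequality is the easy half: assuming each $U_i \neq 0$ (otherwise both sides are $0$, since by the uniqueness in Lemma \ref{lem:hsprod} a single vanishing factor forces $U_1 \otimes \cdots \otimes U_n = 0$), I would fix $\varepsilon > 0$, choose unit vectors $h_i \in H_i$ with $\|U_i h_i\| \ge (1-\varepsilon)\|U_i\|$, and note that $h_1 \otimes \cdots \otimes h_n$ is a unit vector with $\|(U_1 \otimes \cdots \otimes U_n)(h_1 \otimes \cdots \otimes h_n)\| = \|U_1 h_1 \otimes \cdots \otimes U_n h_n\| = \prod_i \|U_i h_i\| \ge (1-\varepsilon)^n \prod_i \|U_i\|$, where the middle equality is just the defining inner product on simple tensors. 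Letting $\varepsilon \to 0$ finishes this direction.

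For the upper bound I would first treat $n = 2$ and then induct. For $n = 2$, observe that $U_1 \otimes U_2 = (U_1 \otimes I)(I \otimes U_2)$, since both sides are bounded operators that agree on every simple tensor and hence coincide by the uniqueness in Lemma \ref{lem:hsprod}; so it suffices to show $\|I \otimes U_2\| \le \|U_2\|$ and, symmetrically, $\|U_1 \otimes I\| \le \|U_1\|$. Fixing an orthonormal basis $\{e_\alpha\}$ of $H_1$, every $x \in H_1 \otimes H_2$ has a unique expansion $x = \sum_\alpha e_\alpha \otimes y_\alpha$ with $y_\alpha \in H_2$ and $\|x\|^2 = \sum_\alpha \|y_\alpha\|^2$ (a standard fact about the tensor product construction in \cite{kadison83}). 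By boundedness and continuity, $(I \otimes U_2)(x) = \sum_\alpha e_\alpha \otimes U_2 y_\alpha$, and since these summands are pairwise orthogonal, $\|(I \otimes U_2)(x)\|^2 = \sum_\alpha \|U_2 y_\alpha\|^2 \le \|U_2\|^2 \sum_\alpha \|y_\alpha\|^2 = \|U_2\|^2 \|x\|^2$. Combining, $\|U_1 \otimes U_2\| \le \|U_1\|\,\|U_2\|$. For general $n$, the unambiguity of $H_1 \otimes \cdots \otimes H_n$ established in Section \ref{sec:tensorproducts}, together with the uniqueness in Lemma \ref{lem:hsprod}, gives $U_1 \otimes \cdots \otimes U_n = (U_1 \otimes \cdots \otimes U_{n-1}) \otimes U_n$, so applying the $n = 2$ bound with the Hilbert spaces $H_1 \otimes \cdots \otimes H_{n-1}$ and $H_n$ yields $\|U_1 \otimes \cdots \otimes U_n\| \le \|U_1 \otimes \cdots \otimes U_{n-1}\|\,\|U_n\|$, and induction completes the proof.

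The main obstacle is the upper bound in the infinite-dimensional setting: one cannot simply maximize over simple tensors, because a general element of the completed tensor product is an infinite sum of simple tensors. The orthonormal-basis regrouping used in the $n = 2$ step is what handles this, and it is also the one place where the Hilbert-space structure, rather than a generic cross-norm on a Banach-space tensor product, is genuinely used.
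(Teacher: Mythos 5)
Your proof is correct. Note, though, that the paper does not actually prove this lemma: it is stated with the remark that the equality ``is mentioned in \cite{kadison83}'' and the burden is left entirely to that reference (Kadison--Ringrose, Proposition 2.6.12), so there is no in-paper argument to compare against. What you have written is essentially a self-contained reconstruction of the standard textbook proof: the lower bound by evaluating on near-maximizing simple tensors $h_1 \otimes \cdots \otimes h_n$ and using $\l\|g_1 \otimes \cdots \otimes g_n\r\| = \prod_i \l\|g_i\r\|$, and the upper bound by factoring $U_1 \otimes U_2 = (U_1 \otimes I)(I \otimes U_2)$ and bounding each factor via the identification of $H_1 \otimes H_2$ with a direct sum of copies of $H_2$ indexed by an orthonormal basis of $H_1$, followed by associativity and induction for general $n$. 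Both halves are sound; in particular you correctly exploit that $I \otimes U_2$ is \emph{already known to be bounded} from Lemma \ref{lem:hsprod}, so continuity lets you push it through the (generally infinite) expansion $x = \sum_\alpha e_\alpha \otimes y_\alpha$, and the orthogonality of the summands $e_\alpha \otimes U_2 y_\alpha$ gives the Parseval-type estimate -- this is exactly the point where a naive supremum over simple tensors would not suffice, as you observe. The only step taken on faith is the expansion $x = \sum_\alpha e_\alpha \otimes y_\alpha$ with $\l\|x\r\|^2 = \sum_\alpha \l\|y_\alpha\r\|^2$; this is indeed a standard consequence of the construction in \cite{kadison83} (the maps $y \mapsto e_\alpha \otimes y$ are isometries with mutually orthogonal ranges whose closed span is $H_1 \otimes H_2$), so citing it is reasonable, but a one-line justification along those lines would make the argument fully self-contained.
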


Before we introduce the algorithms we will discuss an important point regarding computational implementation and Lemmas \ref{lem:hstens} and \ref{lem:hsprod}. For the remainder of this paragraph we will assume that Euclidean spaces are equipped with the standard inner product. Vectors in a space of tensor products of Euclidean space, for example $\rn^{d_1}\otimes \cdots \otimes \rn^{d_s}$ are easily represented on computers as elements of $\rn^{d_1\times \cdots \times d_s}$ \cite{symtensorrank}. Linear operators from some Euclidean tensor space to another can also be easily represented. Furthermore the transformation in Lemma \ref{lem:hstens} and the construction of new operators from Lemma \ref{lem:hsprod} can be implemented in computers by  ``unfolding'' the tensors into matrices, applying common linear algebraic manipulations and ``folding'' them back into tensors. The inner workings of these manipulations are beyond the scope of this paper and we refer the reader to \cite{golub1996} for details. Practically speaking this means the manipulations mentioned in Lemmas \ref{lem:hstens} and \ref{lem:hsprod} are straightforward to implement with a bit of tensor programming knowhow. Implementation may also be streamlined by using programming libraries that assist with these tensor manipulations such as the NumPy library for Python.

Because of the points mentioned in the previous paragraph, the following algorithm is readily implementable for estimating categorical distributions, where the measures can be represented as probability vectors on a Euclidean space. Similarly, we expect that these techniques could be extended to probability densities on Euclidean space using kernel density estimators with a kernel function with easily computable $L^2$ inner products (for example Gaussian kernels) although we suspect that implementation of such an algorithm may be significantly more involved.

To begin our analysis we will apply the transform from Lemma \ref{lem:hstens} to get the operator
\begin{eqnarray*}
	C = \sum_{i=1}^m w_i p_i^{\otimes m-1} \l<p_i^{\otimes m-1} ,\cdot\r> = \sum_{i=1}^m \sqrt{w_i} p_i^{\otimes m-1} \l<\sqrt{w_i}p_i^{\otimes m-1} ,\cdot\r>.
\end{eqnarray*}
\sloppy Here $C$ is a positive semi-definite (PSD) operator in  $\sL \l(L^2\left( \Omega,\sF,\xi \right)^{\otimes m-1}\r)$. Let $C^\dagger$ be the (Moore-Penrose) pseudoinverse of $C$ and $W = \sqrt{C^\dagger}$. Now $W$ is an operator that whitens $\sqrt{w_1} p_1^{\otimes m-1},\ldots, \sqrt{w_m}p_m^{\otimes m-1}$. That is, $W \sqrt{w_1} p_1^{\otimes m-1},\ldots,W \sqrt{w_m}p_m^{\otimes m-1}$ are orthonormal vectors. Using the operator construction from Lemma \ref{lem:hsprod} we can construct $I\otimes W\otimes I \otimes W$ where, for all simple tensors in $L^2\left( \Omega,\sF,\xi \right)^{\otimes 2m}$ we have,
\begin{align*}
	&(I\otimes W \otimes I \otimes W) \left( x_1\otimes \cdots \otimes x_{2m} \right) \\
	&\quad = x_1 \otimes W\left( x_2 \otimes \cdots \otimes x_m \right) \otimes x_{m+1} \otimes W\left(x_{m+2} \otimes \cdots \otimes x_{2m}  \right).
\end{align*}
Applying $I\otimes W \otimes I \otimes W$ to $\sum_{i=1}^{m} w_i p_i^{\otimes 2m}$ yields
\begin{eqnarray*}
	\sum_{i=1}^m w_i p_i\otimes W p_i^{\otimes m-1} \otimes p_i\otimes W p_i^{\otimes m-1},
\end{eqnarray*}
which can again be represented as a PSD operator
\begin{eqnarray*}
	S &\triangleq& \sum_{i=1}^m w_i p_i\otimes W p_i^{\otimes m-1} \l< p_i\otimes W p_i^{\otimes m-1},\cdot\r> \\
		 &=& \sum_{i=1}^m  p_i\otimes W \sqrt{w_i}p_i^{\otimes m-1} \l< p_i\otimes W \sqrt{w_i} p_i^{\otimes m-1},\cdot\r>.
\end{eqnarray*}
For $i\neq j$ it follows that $ p_i\otimes\sqrt{w_i} W p_i^{\otimes m-1} \perp  p_j\otimes W\sqrt{w_j} p_j^{\otimes m-1}$. To see this 
\begin{align*}
	&\l< p_i\otimes W \sqrt{w_i} p_i^{\otimes m-1} ,  p_j\otimes W \sqrt{w_j}p_j^{\otimes m-1}\r>\\
	&\qquad =\l<p_i,p_j\r> \l<  W \sqrt{w_i} p_i^{\otimes m-1} ,W\sqrt{w_j} p_j^{\otimes m-1}\r>\\
	       &\qquad =\l<p_i,p_j\r> 0 \\
		      &\qquad = 0.
\end{align*}
Also note that 
\begin{eqnarray*}
	\l\| p_i\otimes W \sqrt{w_i} p_i^{\otimes m-1} \r\|^2
	&=&\l< p_i\otimes W \sqrt{w_i} p_i^{\otimes m-1} ,p_i\otimes W \sqrt{w_i} p_i^{\otimes m-1} \r>\\
	&=&\l< p_i ,p_i \r>\l<W \sqrt{w_i} p_i^{\otimes m-1},W \sqrt{w_i} p_i^{\otimes m-1} \r> \\
	       &=&\l\|p_i\r\|^2.
\end{eqnarray*}
If $p_1,\ldots,p_m$ have distinct norms then it follows that 
\begin{align*}
    \sum_{i=1}^m  p_i\otimes W \sqrt{w_i}p_i^{\otimes m-1} \l< p_i\otimes W \sqrt{w_i}p_i^{\otimes m-1},\cdot\r>
\end{align*}
is the unique spectral decomposition of $S$ since the vectors $ p_1\otimes W \sqrt{w_1}p_1^{\otimes m-1},\ldots,  p_m\otimes W \sqrt{w_m}p_m^{\otimes m-1}$ are orthogonal, have distinct norms, and thus distinct positive eigenvalues. Given an eigenvector of $S$, $p_i\otimes W \sqrt{w_i} p_i^{\otimes m-1}$, we need only view it as a linear operator $p_i\l< W \sqrt{w_i} p_i^{\otimes m-1},\cdot\r>$ and apply this operator to some vector $z$ which is not orthogonal to $W \sqrt{w_i} p_i^{\otimes m-1}$, thus yielding $p_i$ scaled by $ \l<W \sqrt{w_i} p_i^{\otimes m-1},z\r>$.

Were the norms of $p_1,\ldots,p_m$ not distinct, then there would not be a spectral gap between some of the eigenvalues in $S$, and a spectral decomposition of $S$ may contain some eigenvectors that are not $ p_1\otimes W \sqrt{w_1}p_1^{\otimes m-1},\ldots,  p_m\otimes W \sqrt{w_m}p_m^{\otimes m-1}$, but are instead linear combinations of these vectors.

Once the mixture components $p_1,\ldots,p_m$ are recovered form the spectral decomposition we can estimate the mixture proportions. From these mixture components we can construct the tensors $p_1^{\otimes 2m-2}, \ldots , p_m^{\otimes 2m-2}$. These tensors are linearly independent by Lemma \ref{lem:linind}. The tensor $ \sum_{i=1}^m w_i p_i^{\otimes 2m-2}$ is known. By the linear independence of the components there is exactly one solution for $a_1,\ldots, a_m$ in the equation
\begin{eqnarray*}
	\sum_{i=1}^m w_i p_i^{\otimes 2m-2} = \sum_{j=1}^m a_j p_j^{\otimes 2m-2},
\end{eqnarray*}
so simply minimizing $\l\|\sum_{i=1}^m w_i p_i^{\otimes 2m-2} - \sum_{j=1}^m a_j p_j^{\otimes 2m-2}\r\|$ over $a_1,\ldots,a_m$ will give us the mixture proportions. We could also use a different tensor power $\l\|\sum_{i=1}^m w_i p_i^{\otimes r} - \sum_{j=1}^m a_j p_j^{\otimes r}\r\|$, so long as $r\ge m-1$ to guarantee independence of the components.

We can construct a similar algorithm with $4$ samples per group when the mixture components are known to be linearly independent. The details of this algorithm are in Appendix \ref{appx:spectral}. In such a setting it would be advisable to use the algorithms from \cite{anandkumar14,song14} since they better studied. We mention our algorithm for purely theoretical interest. There are likely a multitude of possible algorithms for the recovery of mixture components whose necessary group size depends on the geometry of the mixture components.

Taking inspiration from \cite{anandkumar14} and \cite{song14} we can suggest yet another algorithm.  The previous papers demonstrate algorithms for recovering mixture components which are measures on discrete spaces and $\rn^d$, from random groups of size 3, provided the mixture components are linearly independent. Given a mixture of measures $\sP = \sum_{i=1}^m w_i \delta_{\mu_i}$ with density functions $p_1,\ldots,p_m$, the tensors $p_1^{\otimes m-1}, \ldots , p_m^{\otimes m-1}$ are linearly independent. Thus, with $3m-3$ samples per random group, we can estimate the tensors $\sum_{i=1}^{m} w_i p_i^{\otimes 3m-3}$ and we can use the algorithms from the previous papers to recover $p_1^{\otimes m-1},\ldots,p_m^{\otimes m-1}$ from which it is straightforward to recover $p_1,\ldots,p_m$.

We can also recover the components with $2m-1$ samples per group. We will adopt the same setting as in our first algorithm, but with $2m-1$ samples per group in stead of $2m$. Let $W$ be as before. Using Lemma \ref{lem:hsprod} we can construct the operator $I\otimes W \otimes W$ on the space $L^2\left( \Omega,\sF,\xi \right)^{\otimes 2m-1}$ which maps simple tensors in the following way: $(I \otimes W \otimes W)\left( x_1\otimes \cdots \otimes x_{2m-1} \right) = x_1 \otimes W\left( x_2\otimes \cdots \otimes x_{m}  \right) \otimes W\left( x_{m+1}\otimes \cdots \otimes x_{2m-1} \right)$. Applying this operator to $\sum_{i=1}^m w_i p_i^{\otimes 2m-1}$ gives us the tensor

\begin{align*}
	A 
	&\triangleq \sum_{i=1}^m w_i p_i \otimes  W\l(p_i^{\otimes m-1}\r)\otimes W\l( p_{i}^{\otimes m-1}\r) \\
	&= \sum_{i=1}^m  p_i \otimes  W\l(\sqrt{w_i}p_i^{\otimes m-1}\r)\otimes W\l(\sqrt{w_i} p_{i}^{\otimes m-1}\r).
\end{align*}
From Lemma \ref{lem:hstens} we can transform the tensor $A$ into the operator $T$,
\begin{eqnarray}\label{eqn:T}
	T = \sum_{i=1}^m  p_i \otimes W\l( \sqrt{w_i} p_{i}^{\otimes m-1}\r)\l<W\l(\sqrt{w_i}p_i^{\otimes m-1}\r),\cdot\r>  .
\end{eqnarray}
Now the operator $TT^H$ is
\begin{align*}
	TT^H =  &\sum_{i=1}^m  p_i \otimes W\l( \sqrt{w_i} p_{i}^{\otimes m-1}\r)\bigg\langle W\l(\sqrt{w_i}p_i^{\otimes m-1}\r),\cdots \\
		       &\qquad \sum_{j=1}^m   W\l( \sqrt{w_j} p_{j}^{\otimes m-1}\r)\l<p_j \otimes W\l(\sqrt{w_j}p_j^{\otimes m-1}\r),\cdot\r>\bigg\rangle \\
	=  &\sum_{i=1}^m  p_i \otimes W\l( \sqrt{w_i} p_{i}^{\otimes m-1}\r)\l<p_i \otimes W\l(\sqrt{w_i}p_i^{\otimes m-1}\r),\cdot\r> 
\end{align*}
which is simply the operator $S$ from the previous section.
The last step is justified since the vectors $W\left( \sqrt{w_1} p_1^{\otimes m-1} \right),\ldots, W\left( \sqrt{w_m} p_m^{\otimes m-1} \right)$ are orthonormal.
This tensor is precisely the tensor from which we recovered the mixture components in the first algorithm.
\subsection{Spreading the Eigenvalue Gaps for Categorical Distributions} \label{sec:eigspread}
Here we will introduce a trick to guarantee that the norms of the mixture component distributions are distinct. Let $\left( \Omega, 2^\Omega \right)$ be a finite discrete measurable space with $\Omega = \l\{\omega_1,\ldots,\omega_d\r\}$. Let $\mu_1,\ldots,\mu_m$ be distinct measures on this space. Let $y_1,\ldots, y_d\simiid \text{unif}\left( 1,2 \right)$ and let $\xi$ be a random measure on $\left( \Omega,2^\Omega \right)$ defined by $\xi\l(\left\{ \omega_i \right\}\r) = y_i$ for all $i$. Clearly $\xi$ dominates all $\mu_1,\ldots,\mu_m$ and thus we can define Radon-Nikodym derivatives $p_i = \frac{d\mu_i}{d \xi}$ for all $i$. We will treat these Radon-Nikodym derivatives as being elements in $L^2\left( \Omega, 2^\Omega, \xi \right)$. We have the following lemma
\begin{lem} \label{lem:distinctnorm}
	With probability one
	\begin{eqnarray*}
		\int p_i(\omega)^2 d\xi(\omega) \neq \int p_j(\omega)^2 d\xi(\omega)
	\end{eqnarray*}
	for all $i\neq j$.
\end{lem}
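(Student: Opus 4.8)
The plan is to compute $\int p_i^2\,d\xi$ explicitly as a function of the random weights $y_1,\ldots,y_d$, show that for each fixed pair $i\neq j$ the event $\{\int p_i^2\,d\xi = \int p_j^2\,d\xi\}$ has probability zero, and then conclude with a union bound over the $\binom{m}{2}$ pairs.

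First I would observe that since $\Omega$ is discrete and $\xi(\{\omega_k\}) = y_k > 0$ for every $k$, the Radon--Nikodym derivative is simply $p_i(\omega_k) = \mu_i(\{\omega_k\})/y_k$, so
\begin{eqnarray*}
	\int p_i^2\, d\xi = \sum_{k=1}^d \left( \frac{\mu_i(\{\omega_k\})}{y_k} \right)^{\!2} y_k = \sum_{k=1}^d \frac{\mu_i(\{\omega_k\})^2}{y_k}.
\end{eqnarray*}
Fixing a pair $i\neq j$ and writing $c_k \triangleq \mu_i(\{\omega_k\})^2 - \mu_j(\{\omega_k\})^2$, the event in question is $\big\{\sum_{k=1}^d c_k/y_k = 0\big\}$. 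Since $\mu_i \neq \mu_j$ there is an index $k_0$ with $\mu_i(\{\omega_{k_0}\}) \neq \mu_j(\{\omega_{k_0}\})$; as both values are nonnegative and $t\mapsto t^2$ is injective on $[0,\infty)$, this gives $c_{k_0}\neq 0$.

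Next I would condition on the weights $(y_k)_{k\neq k_0}$, which are independent of $y_{k_0}$. Given these, the equation $\sum_k c_k/y_k = 0$ becomes $c_{k_0}/y_{k_0} = b$, where $b \triangleq -\sum_{k\neq k_0} c_k/y_k$ is a constant. Because $c_{k_0}\neq 0$, this equation has at most one solution in $y_{k_0}$ (namely $c_{k_0}/b$ if $b\neq 0$, and none if $b=0$). A single point is a null set for the $\mathrm{unif}(1,2)$ law of $y_{k_0}$, so the conditional probability of the event is $0$; integrating over $(y_k)_{k\neq k_0}$ via Tonelli gives $\Pr\!\big(\int p_i^2\,d\xi = \int p_j^2\,d\xi\big) = 0$.

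Finally, a union bound over the finitely many pairs $i\neq j$ yields $\Pr\!\big(\exists\, i\neq j:\ \int p_i^2\,d\xi = \int p_j^2\,d\xi\big) = 0$, which is the claim. The only step needing a little care is reducing the rational identity $\sum_k c_k/y_k = 0$ to a measure-zero set, and the conditioning argument above handles this cleanly; the rest is a direct computation.
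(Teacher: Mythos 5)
Your proposal is correct and follows essentially the same route as the paper: isolate a coordinate $k_0$ where the two measures differ, observe that the equality $\sum_k c_k/y_k=0$ then pins $y_{k_0}$ to at most one value given the other weights, and use independence plus the non-atomicity of the uniform law to conclude the event is null, finishing with a union bound over the finitely many pairs. Your write-up simply makes explicit the conditioning and union-bound steps that the paper leaves implicit.
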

\begin{proof}
	Observe that, for all $i,j$,
	\begin{eqnarray*}
		\int_{\left\{ \omega_j \right\}} p_i d\xi 
		= p_i(\omega_j)\xi\left( \left\{ \omega_j \right\} \right) 
		= p_i(\omega_j) y_j
		= \mu_i\left( \left\{ \omega_j \right\} \right)
	\end{eqnarray*}
	and thus $p_i\left( \omega_j \right) = \frac{\mu_i\left( \left\{ w_j \right\} \right)}{y_j}$. We will show that $\l\|p_1\r\|_{\lrd}^2 \neq \l\|p_2\r\|_{\lrd}^2$ with probability one, which implies $\l\|p_i\r\|_{\lrd} \neq \l\|p_j\r\|_{\lrd}$ for all $i\neq j$ with probability one (here and for the rest of the paper $\l\|\cdot \r\|_\lrd$ will denote the standard Euclidean norm on $\rn^d$ and $\l<\cdot,\cdot\r>_\lrd$ the standard inner product).

	Because $\mu_1 \neq \mu_2$ it follows that there exists some $j$ such that $\mu_1\left( \l\{\omega_j\r\} \right) \neq \mu_2\left( \left\{ \omega_j \right\} \right)$. Without loss of generality we will assume that $j=1$ in the previous statement. Now we have \small
	\begin{eqnarray*}
		&&P\left( \int p_1\left( \omega \right)^2 d\xi\left( \omega \right) = \int p_2\left( \omega \right)^2 d\xi\left( \omega \right) \right)\\
                &=& P\left( \sum_{i=1}^d \frac{\mu_1\left( \l\{\omega_i\r\} \right)^2}{y_i} = \sum_{j=1}^d \frac{\mu_2\left( \l\{\omega_j\r\} \right)^2}{y_j} \right)\\
                &=& P\left(\l( \frac{ \mu_1\left( \l\{\omega_1\r\} \right)^2}{y_1}-  \frac{ \mu_2\left( \l\{\omega_1\r\}^2 \right)}{y_1}\r) = \l(\sum_{i=2}^d \frac{\mu_1\left( \l\{\omega_i\r\} \right)^2}{y_i} - \sum_{j=2}^d \frac{\mu_2\left( \l\{\omega_j\r\} \right)^2}{y_j}\r) \right)\\
	\end{eqnarray*}
	\normalsize
        which is clearly zero since $\left( \mu_1\left( \l\{\omega_1\r\} \right) \right)^2 - \left( \mu_2\left( \l\{\omega_1\r\} \right) \right)^2 \neq 0$ and $y_1,\ldots, y_d$ are all independent random variables and from a non-atomic measure.
\end{proof}

Applying the previous trick with the recovery algorithm for groups of size $2m-1$ we have an algorithm for recovering mixtures on finite measure spaces with $m$ components. The paper \cite{rabani14} recovers the mixture components given a setting almost identical to ours, but we feel that our algorithm is more straightforward and easily extended to non-discrete spaces.

\subsection{Recovery Algorithm For Discrete Spaces}
Let $\left( \Omega, 2^\Omega \right)$ be a finite measurable space with $ \l| \Omega \r| = d$. To simplify exposition we will assume that $\Omega$ is simply the set of $d$ dimensional indicator vectors in $\rn^d$, $e_1,\ldots, e_d$. Note that Euclidean space with the standard inner product is $L^2\left( \Omega,2^{\Omega}, \sum_{i=1}^d \delta_{e_i} \right) = \lrd$. Let $\mu_1, \ldots, \mu_m$ be distinct probability measures on $\Omega$. Let $\sP = \sum_{i=1}^m w_i \delta_{\mu_i}$ be a mixture of measures. Let $\tilde{p}_i \triangleq \mathbb{E}_{x \sim \mu_i}\left[ x \right]$ for all $i$. Note that $\tilde{p}_{i,j} = \mu_i\left( \left\{ e_j \right\} \right)$ for all $i,j$. Let $\bX_1,\bX_2,\ldots \simiid V_{2m-1}\left( \sP \right)$ with $\bX_i = \left[ X_{i,1},\ldots, X_{ i,2m-1 } \right]$. 

To begin we construct the random dominating measure described in Section \ref{sec:eigspread}. Let $y_1,\ldots,y_d \simiid \text{unif}\left( 1,2 \right)$. The random dominating measure $\xi$ is defined by $\xi\left( \left\{ e_i \right\} \right) = y_i$ for all $i$. Let $p_i = \frac{d \mu_i}{d \xi}$, i.e. $p_i \left( e_{j} \right) =  \frac{\tilde{p}_{i,j}}{y_j}$ for all $i$ and $j$. There is a bit of a computational issue with this representation for the densities $p_1,\ldots, p_m$ since the new dominating measure changes the inner product from the standard inner product. We can remedy this with the following lemma.
\begin{lem}\label{lem:domtrans}
	Let $x,v \in \lrd$, $\xi$ be as above, and 
	\begin{eqnarray*}
		B =\l[	\begin{matrix}
				\frac{1}{\sqrt{y_1}} & 0& 0 & \cdots & 0\\
				0& \frac{1}{\sqrt{y_2}} & 0 &\cdots & 0 \\
				0&0&\ddots&&\vdots\\
				\vdots &\vdots&&\ddots& 0\\
				0& 0 & \cdots& 0 &  \frac{1}{\sqrt{y_d}}
		\end{matrix}\r].
	\end{eqnarray*}
	Then $ \l<Bx,Bv \r>_{L^2\left( \Omega,2^{\Omega},\xi \right)}  = \l<x,v\r>_\lrd$.
\end{lem}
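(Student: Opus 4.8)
The plan is to prove this by a direct computation, expanding both inner products in coordinates. Since $\Omega = \{e_1,\ldots,e_d\}$ is finite and $\xi(\{e_j\}) = y_j$, for any two functions $f,g$ on $\Omega$ (viewed as elements of $L^2\left( \Omega, 2^\Omega, \xi \right)$) we have $\l<f,g\r>_{L^2\left( \Omega,2^\Omega,\xi \right)} = \sum_{j=1}^d f(e_j)g(e_j)\,y_j$. First I would record this identity, together with the standard identification of a vector $x \in \lrd$ with the function $e_j \mapsto x_j$ on $\Omega$ (which is exactly the identification $\lrd = L^2\left( \Omega, 2^\Omega, \sum_{i=1}^d \delta_{e_i} \right)$ noted earlier in the section, now re-weighted by $\xi$).

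Next I would substitute $f = Bx$ and $g = Bv$. Because $B$ is diagonal, $(Bx)(e_j) = (Bx)_j = x_j/\sqrt{y_j}$ and likewise $(Bv)(e_j) = v_j/\sqrt{y_j}$, so
\begin{eqnarray*}
	\l<Bx,Bv\r>_{L^2\left( \Omega,2^\Omega,\xi \right)}
	= \sum_{j=1}^d \frac{x_j}{\sqrt{y_j}}\cdot\frac{v_j}{\sqrt{y_j}}\cdot y_j
	= \sum_{j=1}^d x_j v_j
	= \l<x,v\r>_\lrd,
\end{eqnarray*}
which is the claim. The only points requiring a word of care are that $y_1,\ldots,y_d$ are strictly positive — indeed each lies in $(1,2)$ — so $B$ is well-defined with real entries and no division by zero occurs, and that all three quantities in the display are genuinely finite sums since $d < \infty$.

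There is essentially no obstacle here: the lemma is a bookkeeping statement about how changing the dominating measure from $\sum_i \delta_{e_i}$ to $\xi$ rescales the $L^2$ inner product, and $B$ is precisely the linear map that undoes that rescaling. If anything, the ``hard'' part is purely expository, namely being explicit about the identification of $\mathbb{R}^d$-vectors with functions on $\Omega$ so that the expression $\l<Bx,Bv\r>_{L^2\left( \Omega,2^\Omega,\xi \right)}$ is unambiguous; once that is fixed, the computation above is the entire proof.
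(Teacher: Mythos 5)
Your proposal is correct and matches the paper's own proof essentially verbatim: both expand $\l<Bx,Bv\r>_{L^2\left( \Omega,2^{\Omega},\xi \right)}$ as the finite sum $\sum_{j=1}^d (Bx)_j (Bv)_j\, y_j$, use the diagonal form of $B$ to cancel the $y_j$ factors, and recover $\l<x,v\r>_\lrd$. The extra remarks about positivity of the $y_j$ and the identification of vectors with functions on $\Omega$ are fine but not needed beyond what the paper already sets up.
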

\begin{proof}[Proof of Lemma \ref{lem:domtrans}]
	We have
	\begin{eqnarray*}
		\l<Bx,Bv \r>_{L^2\left( \Omega,2^{\Omega},\xi \right)} 
		&=& \int (Bx)(i) (Bv)(i) d\xi(i)\\
		&=& \sum_{i=1}^d (Bx)(i) (Bv)(i) y_i\\
		       &=& \sum_{i=1}^d \frac{x(i)}{\sqrt{y_i}} \frac{v(i)}{\sqrt{y_i}} y_i\\
			      &=& \sum_{i=1}^d x(i) y(i)\\
			      &=& \l<x,y\r>_\lrd.
	\end{eqnarray*}
\end{proof}

From this lemma we have that $B$, when considered as an operator in $\sL \l(\lrd, L^2\left( \Omega, 2^\Omega , \xi\right)\r)$, is a unitary transform. We are interested in estimating the tensor $\sum_{i=1}^m w_i p_i^{\otimes 2m-1}$, but in order to keep the algorithm operating in standard Euclidean space we will instead transform it into $\lrd$. To this end consider an arbitrary $i$. We have
\begin{eqnarray*}
	B^{-1}p_i 
	&=& B^{-1}\left[p_{i,1} ,\ldots, p_{i,d}  \right]^T\\
	&=& B^{-1}\left[\frac{\tilde{p}_{i,1}}{y_1} ,\ldots, \frac{\tilde{p}_{i,d}}{y_d}  \right]^T\\
	       &=&\left[\frac{\tilde{p}_{i,1}}{\sqrt{y_1}} ,\ldots, \frac{\tilde{p}_{i,d}}{\sqrt{y_d}}  \right]^T,
\end{eqnarray*}
and thus $B^{-1} p_j = B \tilde{p}_j$ for all $j$.

We will use the following lemma to find the expected value of 
\begin{align*}
	\mathbb{E}\left[ BX_{i,1}\otimes \cdots \otimes BX_{i,2m-1} \right]
\end{align*}
\begin{lem}\label{lem:tensind}
	Let $n>1$ and $Z_1, \ldots, Z_n$ be independent random vectors in $\rn^{d_1},\ldots, \rn^{d_n}$ such that $\mathbb{E} \left[ Z_i \right] $ exists for all $i$. Then $\mathbb{E}\left[ Z_1\otimes \cdots \otimes Z_n \right] = \mathbb{E}\left[ Z_1 \right] \otimes \cdots \otimes \mathbb{E}\left[ Z_n \right]$.
\end{lem}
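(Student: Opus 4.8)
The plan is to reduce the claim to the scalar identity $\mathbb{E}[X_1\cdots X_n]=\mathbb{E}[X_1]\cdots\mathbb{E}[X_n]$ for independent integrable real random variables, applied coordinatewise. First I would fix the standard bases of $\rn^{d_1},\ldots,\rn^{d_n}$, so that a tensor $T\in\rn^{d_1}\otimes\cdots\otimes\rn^{d_n}$ is identified with its array of entries $T_{i_1,\ldots,i_n}$, and recall that a simple tensor satisfies $(z_1\otimes\cdots\otimes z_n)_{i_1,\ldots,i_n}=(z_1)_{i_1}\cdots(z_n)_{i_n}$. Since the expectation of a random tensor is taken entrywise, it suffices to prove, for each multi-index $(i_1,\ldots,i_n)$, that
\[
\mathbb{E}\bigl[(Z_1)_{i_1}\cdots(Z_n)_{i_n}\bigr]=\mathbb{E}\bigl[(Z_1)_{i_1}\bigr]\cdots\mathbb{E}\bigl[(Z_n)_{i_n}\bigr].
\]

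Next I would observe that for each fixed multi-index the random variables $(Z_1)_{i_1},\ldots,(Z_n)_{i_n}$ are independent, being coordinate-projection functions of the independent random vectors $Z_1,\ldots,Z_n$; moreover each is integrable, since $\mathbb{E}[Z_k]$ exists by hypothesis, i.e. every coordinate of each $Z_k$ is integrable. The displayed identity then follows from the classical fact that a product of finitely many independent integrable real random variables is integrable with expectation equal to the product of the expectations. I would establish the two-factor case $\mathbb{E}[XY]=\mathbb{E}[X]\mathbb{E}[Y]$ by applying Tonelli (for integrability of $|XY|$) and Fubini on the product of the laws of $X$ and $Y$, using independence to write the joint law as a product measure, and then extend to $n$ factors by induction on $n$, using that $X_1\cdots X_{n-1}$ is independent of $X_n$.

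Finally, assembling these coordinatewise identities over all multi-indices recovers exactly $\mathbb{E}[Z_1\otimes\cdots\otimes Z_n]=\mathbb{E}[Z_1]\otimes\cdots\otimes\mathbb{E}[Z_n]$, since both sides are the tensor whose $(i_1,\ldots,i_n)$ entry is $\mathbb{E}[(Z_1)_{i_1}]\cdots\mathbb{E}[(Z_n)_{i_n}]$. The only mild subtlety, and hence the main obstacle, is the integrability bookkeeping: a product of integrable random variables need not be integrable in general, so independence must be invoked precisely at the step where Fubini is applied; everything else is routine.
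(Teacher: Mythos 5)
Your proof is correct and takes essentially the same route as the paper: both argue entrywise, reducing the tensor identity to $\mathbb{E}\left[(Z_1)_{i_1}\cdots(Z_n)_{i_n}\right]=\mathbb{E}\left[(Z_1)_{i_1}\right]\cdots\mathbb{E}\left[(Z_n)_{i_n}\right]$ via independence of the coordinates. Your additional care with the Fubini/Tonelli integrability step is a fuller justification of what the paper takes for granted, but it is not a different approach.
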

\begin{proof}[Proof of Lemma \ref{lem:tensind}]
	Let $\l[i_1,\ldots, i_n\r] \in \rn^{d_1}\times \cdots \times \rn^{d_n}$ be arbitrary. We have that 
	\begin{eqnarray*}
		\mathbb{E}\left[ Z_1\otimes\cdots \otimes Z_n \right]_{i_1,\ldots,i_n}
		&=& \mathbb{E}\left[ Z_{1,i_1} \cdots Z_{n,i_n} \right]\\
		&=& \mathbb{E}\left[ Z_{1,i_1}\r] \cdots\mathbb{E} \l[ Z_{n,i_n} \right].
	\end{eqnarray*}
	Since $i_1,\ldots, i_n$ were arbitrary it implies that all entries of $\mathbb{E}\left[ Z_1\otimes \cdots \otimes Z_n \right]$ and $\mathbb{E}\left[ Z_1 \right]\otimes \cdots \otimes \mathbb{E}\left[ Z_n \right]$ are equal.
\end{proof}
Recall that $X_{i,1},\ldots,X_{i,2m-1} \simiid \mu$ with $\mu \sim \sP$. From the previous lemma and the definition of $\tilde{p}_i$ it follows that 
\begin{align*}
	&\mathbb{E}\left[ BX_{i,1} \otimes \cdots \otimes BX_{i,2m-1} \right]\\
	&=\mathbb{E}_{\mu\sim \sP}\left[ \mathbb{E}\left[ BX_{i,1} \otimes \cdots \otimes BX_{i,2m-1} \right| \mu] \right] \\
	&=\mathbb{E}_{\mu\sim \sP}\left[ \mathbb{E}\left[ BX_{i,1} |\mu\r] \otimes \cdots \otimes \mathbb{E}\l[BX_{i,2m-1}| \mu \right] \right] \\
	&=\mathbb{E}_{\mu\sim \sP}\left[ B\mathbb{E}\left[ X_{i,1}| \mu \r] \otimes \cdots \otimes B\mathbb{E}\l[X_{i,2m-1} | \mu\right] \right] \\
	&=\sum_{i=1}^m w_i B\mathbb{E}\left[ X_{i,1} |\mu = \mu_i\r] \otimes \cdots \otimes B\mathbb{E}\l[X_{i,2m-1}|\mu = \mu_i \right]  \\
	&=\sum_{i=1}^m w_i \left( B\tilde{p}_i \right)^{\otimes 2m-1} .
\end{align*}
Let $Y_{i,j} = B X_{i,j}$. Now we will construct the whitening operator. To do this first construct the operator 
\begin{eqnarray*}
	\widehat{C} =
	\begin{split}\frac{1}{\left( 2m-1 \right)!}\frac{1}{n} \sum_{i=1}^n \sum_{\sigma \in S_{2m-1}} Y_{i,\sigma\left( 1 \right)} \otimes \cdots \otimes Y_{i,\sigma\left( m-1 \right)}\\
	\l<Y_{i, \sigma \left( m \right)} \otimes \cdots \otimes Y_{i, \sigma \left( 2m-2 \right)}, \cdot \r>.\end{split}
\end{eqnarray*}
There are some repeated terms in the previous summation, which is not an issue. Instead we could have set $\widehat{C}$ to be equal to \small
\begin{eqnarray*}
	\frac{1}{\left( 2m-2 \right)!}\frac{1}{n} \sum_{i=1}^n \sum_{\sigma \in S_{2m-2}} Y_{i,\sigma\left( 1 \right)} \otimes \cdots \otimes Y_{i,\sigma\left( m-1 \right)}\l<Y_{i, \sigma \left( m \right)} \otimes \cdots \otimes Y_{i, \sigma \left( 2m-2 \right)}, \cdot \r>,
\end{eqnarray*} \normalsize
but this would not utilize all the data, specifically $Y_{1,2m-1},\ldots, Y_{n,2m-1}$. In the second operator the average over $S_{2m-2}$ functions as a projection onto the space of symmetric tensors and the summation over $S_{2m-1}$ in the definition of $\widehat{C}$ serves a similar purpose. Viewed alternatively, the distribution of $[Y_{i,1}, \ldots, Y_{i,2m-1}]^T$ does not change if we reorder the entries of the vector, so the summation is considering all possible orderings of random groups. This symmetrization conveniently assures that $\widehat{C}$ is a Hermitian operator.
This $\widehat{C}$ is estimating the $C$ mentioned in the algorithm. Let $\lambda_{\widehat{C},1}, \ldots, \lambda_{\widehat{C},m}$ be the top $m$ eigenvalues of $\widehat{C}$ and $v_{\widehat{C},1},\ldots,v_{\widehat{C},m}$ be their associated eigenvectors. We can now construct the whitening operator
\begin{eqnarray*}
	\widehat{W} = \sum_{i=1}^m \lambda_{\widehat{C},i}^{-\frac{1}{2}} v_{\widehat{C},i} \l<v_{\widehat{C},i}, \cdot\r>.
\end{eqnarray*}
Now construct the tensor
\begin{eqnarray*}
	\widehat{A} = 
	\begin{split}
		\frac{1}{\left( 2m-1 \right)!} \frac{1}{n} \sum_{i=1}^n \sum_{\sigma \in S_{2m-1}} Y_{i,\sigma\left( 1 \right)} \otimes \widehat{W} \left( Y_{i,\sigma\left( 2 \right)} \otimes \cdots \otimes Y_{i,\sigma\left( m \right)} \right) \otimes \cdots\\
		\widehat{W}\left( Y_{i,\sigma\left( m+1 \right)} \otimes \cdots \otimes Y_{i,\sigma\left( 2m-1 \right)} \right).
	\end{split}
\end{eqnarray*}
Using simple unfolding techniques we can transform $\widehat{A}$ in to the operator $\widehat{T}$: 
\begin{eqnarray*}\begin{split}
	\widehat{T} = \frac{1}{\left( 2m-1 \right)!} \frac{1}{n} \sum_{i=1}^n \sum_{\sigma \in S_{2m-1}} Y_{i,\sigma\left( 1 \right)} \otimes \widehat{W} \left( Y_{i,\sigma\left( 2 \right)} \otimes \cdots \otimes Y_{i,\sigma\left( m \right)} \right) \cdots\\
	\l< \widehat{W}\left( Y_{i,\sigma\left( m+1 \right)} \otimes \cdots \otimes Y_{i,\sigma\left( 2m-1 \right)} \right), \cdot \r>,
\end{split}
	\end{eqnarray*}
	as well as its Hermitian, $\widehat{T}^H$:
	\begin{eqnarray*}
	\begin{split}\frac{1}{\left( 2m-1 \right)!} \frac{1}{n} \sum_{i=1}^n \sum_{\sigma \in S_{2m-1}} \widehat{W}\left( Y_{i,\sigma\left( m+1 \right)} \otimes \cdots \otimes Y_{i,\sigma\left( 2m-1 \right)} \right)\cdots \\ \l< Y_{i,\sigma\left( 1 \right)} \otimes \widehat{W} \left( Y_{i,\sigma\left( 2 \right)} \otimes \cdots \otimes Y_{i,\sigma\left( m \right)} \right), \cdot \r>.\end{split}
	\end{eqnarray*}
	Let $v_1,\ldots, v_m$ be the top $m$ eigenvectors of $\widehat{T}\widehat{T}^H$  (\ref{eqn:T}), which will be elements of $\lrd^{\otimes m}$.
	\sloppy These vectors are estimates of $ \l\|B\tilde{p}_1\r\|_2^{-1}B \tilde{p}_1 \otimes \widehat{W} \sqrt{w_1} \left( B\tilde{p}_1 \right)^{\otimes m-1},\ldots,\l\|B\tilde{p}_m\r\|_2^{-1}B \tilde{p}_m \otimes \widehat{W} \sqrt{w_m} \left( B\tilde{p}_m \right)^{\otimes m-1}$ (possibly multiplied by $-1$). The factors in front of the tensors normalize the tensors to have norm 1.

	Using a transform of the form in Lemma \ref{lem:hstens}, we can implement a transform 
	\begin{align*}
		U: \lrd^{\otimes m} \to \hs \left( \lrd^{\otimes m-1}, \lrd  \right)
	\end{align*}
	which maps simple tensors $x_1\otimes \cdots \otimes x_m$ to $x_1 \l<x_2 \otimes \cdots \otimes x_m, \cdot \r>$. Applying this transform to $v_1,\ldots,v_m$ yields estimates of $\l\|B\tilde{p}_i\r\|_\lrd^{-1}B \tilde{p}_i \l< \widehat{W} \sqrt{w_i} \left( B\tilde{p}_i \right)^{\otimes m-1},\cdot\r>$, for all $i$.
        At this point one simply needs to find vectors $q_1,\ldots,q_m$ which are not orthogonal to $\widehat{W} \sqrt{w_1} \left( B\tilde{p}_1 \right)^{\otimes m-1},\ldots,\widehat{W} \sqrt{w_m} \left( B\tilde{p}_m \right)^{\otimes m-1}$ to get $\l\|B\tilde{p}_i\r\|_\lrd^{-1}B \tilde{p}_i \l< \widehat{W} \sqrt{w_i} \left( B\tilde{p}_i \right)^{\otimes m-1},q_i\r>$, which is $B\tilde{p}_i,\ldots, B\tilde{p}_i$ up to scaling. Such vectors can be found by simply using a tensor populated by iid standard normal random variables. After this we can recover $\tilde{p}_1,\ldots,\tilde{p}_m$, up to scaling, by simply applying $B^{-1}$, which we would then want to normalize to sum to one. Alternatively we could take the largest left singular vector of these operators. We will call these estimates $\widehat{p}_1,\ldots,\widehat{p}_m$.

	Using the data we can estimate the tensor 
        $\sum_{i=1}^m w_i \tilde{p}_i^{\otimes m-1}$ with the estimator
	\begin{eqnarray*}
		\widehat{E} = \frac{1}{2m-1} \frac{1}{n} \sum_{i=1}^n \sum_{\sigma \in S_{2m-1}} X_{i,\sigma\left( 1 \right)} \otimes \cdots \otimes X_{i,\sigma\left( m-1 \right)}
	\end{eqnarray*}

	To estimate the mixture proportions we find the value of $\alpha = \left( \alpha_1,\ldots, \alpha_m \right)$ which minimizes
	\begin{eqnarray*}
		\l\|\widehat{E} - \sum_{i=1}^m \alpha_i \widehat{p_i}^{\otimes m-1}\r\|.
	\end{eqnarray*}

	\subsection{Consistency of Recovery Algorithm}
	We will now show that the recovery algorithm for categorical distributions is consistent. Let $C,\widehat{C},T, \widehat{T},W,$ and $\widehat{W}$ be as they were defined in the first part of this section. The crux of our algorithm is the recovery of the eigenvectors of $TT^H$, from which we then recover the mixture components through the application of linear and continuous transforms to the eigenvectors. In order to simplify the notation in our explanation we will assume that the norms of $\tilde{p}_1,\ldots, \tilde{p}_m$ are distinct. We do this so that there are gaps in the spectral decomposition of $TT^H$ thus making the random dominating measure trick unnecessary. Were this not the case, we could simply represent the probability vectors as densities with respect to some dominating measure which makes their norms distinct, as we did in the previous section. Because of this assumption we can simply set $B$ to be the identity operator. From this we have that $p_i = \tilde{p}_i$ for all $i$ and $X_{i,j} = Y_{i,j}$ for all $i$ and $j$. The following theorem demonstrates that the algorithm does indeed recover the eigenvectors of $TT^H$.
	\begin{thm}\label{thm:consistency}
		With $T$ and $\widehat{T}$ defined as above, as $n\to \infty$ then
		\begin{align*}
			\l\|TT^H - \widehat{T}\widehat{T}^H\r\|_\hs \cip 0.
		\end{align*}
	\end{thm}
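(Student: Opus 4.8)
The plan is to reduce the statement to the law of large numbers and the continuous mapping theorem in a fixed finite-dimensional space. Throughout we work in the Hilbert spaces $\lrd^{\otimes k}$, which are finite dimensional, so all norms are equivalent, convergence in $\hs$-norm is the same as entrywise convergence in probability, and --- since each $X_{i,j}$ is one of the finitely many fixed indicator vectors $e_1,\dots,e_d$ --- every random tensor formed from the $X_{i,j}$ below is deterministically bounded, so the weak law of large numbers applies to any average of $n$ iid copies of such a tensor. For brevity write $\Phi_V \triangleq I\otimes V\otimes V$ for a bounded operator $V$ on $\lrd^{\otimes m-1}$. The argument proceeds in four steps: (i) $\widehat C\cip C$; (ii) hence $\widehat W\cip W$; (iii) hence $\widehat T\cip T$; (iv) hence $\widehat T\widehat T^H\cip TT^H$.

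For step (i) I would first record the relevant unbiasedness facts by conditioning on the latent component, exactly as in the derivation of $\bE[X_{i,1}\otimes\cdots\otimes X_{i,2m-1}]=\sum_i w_i p_i^{\otimes 2m-1}$ that precedes the theorem (here $B=I$, so $Y_{i,j}=X_{i,j}$ and $p_i=\tilde p_i$): by Lemma \ref{lem:tensind} each $i$-indexed summand of $\widehat C$ has mean $C=\sum_i w_i\, p_i^{\otimes m-1}\l<p_i^{\otimes m-1},\cdot\r>$, and the empirical symmetrized tensor
\[
\widehat M \;\triangleq\; \frac1n\sum_{i=1}^n \frac1{(2m-1)!}\sum_{\sigma\in S_{2m-1}} X_{i,\sigma(1)}\otimes\cdots\otimes X_{i,\sigma(2m-1)}
\]
has mean $M\triangleq\sum_i w_i\, p_i^{\otimes 2m-1}$. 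Since $\widehat C$ and $\widehat M$ are each averages of $n$ iid bounded terms with these means, the weak law of large numbers gives $\widehat C\cip C$ and $\widehat M\cip M$.

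Step (ii) is the one genuinely non-routine point. Because $p_1,\dots,p_m$ are densities of distinct probability measures, no two are collinear (collinear densities each integrating to one must coincide) and none is zero, so Lemma \ref{lem:linind} shows $p_1^{\otimes m-1},\dots,p_m^{\otimes m-1}$ are linearly independent, whence Lemma \ref{lem:tensrank} gives that the PSD operator $C$ has rank exactly $m$; its eigenvalues therefore satisfy $\lambda_1\ge\cdots\ge\lambda_m>0=\lambda_{m+1}=\cdots$, a fixed spectral gap. Now $\widehat W$ depends on $\widehat C$ only through the spectral function that sends the $m$ largest eigenvalues $\lambda$ to $\lambda^{-1/2}$ and all others to $0$, and $W=\sqrt{C^\dagger}$ is exactly this function of $C$. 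By standard eigenvalue perturbation bounds (Weyl's inequality) the $m$-th eigenvalue of $\widehat C$ tends to $\lambda_m>0$ and the $(m{+}1)$-st tends to $0$, so with probability tending to one the gap persists and the spectral function above is continuous at $\widehat C$; applying the continuous mapping theorem to $\widehat C\cip C$ yields $\widehat W\cip W$ in operator norm, and in particular $\|\widehat W\|$ is bounded in probability. (Possible non-uniqueness of individual eigenvectors when some $\lambda_i$ coincide is immaterial, since $\widehat W$ is a function of $\widehat C$ alone.)

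It remains to assemble steps (iii)--(iv). Writing $\widehat A = \Phi_{\widehat W}\widehat M$ and $A = \Phi_W M$ and splitting $\widehat A - A = \Phi_{\widehat W}(\widehat M - M) + (\Phi_{\widehat W}-\Phi_W)M$, the first term $\cip 0$ because $\|\Phi_{\widehat W}\|=\|\widehat W\|^2$ (Lemma \ref{lem:prodopbnd}) is bounded in probability while $\widehat M\cip M$, and the second $\cip 0$ because $V\mapsto\Phi_V$ is continuous --- $\|\Phi_V-\Phi_W\|\le(\|V\|+\|W\|)\,\|V-W\|$ by the triangle inequality and Lemma \ref{lem:prodopbnd} --- with $\widehat W\cip W$ and $M$ fixed; hence $\widehat A\cip A$. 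Since $\widehat T$ and $T$ are the images of $\widehat A$ and $A$ under the fixed unitary ``unfolding'' of Lemma \ref{lem:hstens} (composed with a reassociation/permutation of tensor factors, itself unitary), $\|\widehat T-T\|_\hs = \|\widehat A-A\|\cip 0$, so $\|\widehat T\|_\hs$ is bounded in probability; finally from $\widehat T\widehat T^H - TT^H = \widehat T(\widehat T-T)^H + (\widehat T-T)T^H$ we obtain $\|\widehat T\widehat T^H - TT^H\|_\hs \le \|\widehat T\|\,\|\widehat T-T\|_\hs + \|\widehat T-T\|_\hs\,\|T\|\cip 0$, which is the claim. The main obstacle is step (ii): the operator-perturbation argument for $\widehat W$, together with the preliminary use of Lemma \ref{lem:linind} to guarantee that $C$ genuinely has rank $m$ so that a spectral gap is present; the other three steps are routine applications of the law of large numbers and the continuous mapping theorem in finite dimensions.
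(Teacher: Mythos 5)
Your proof is correct and follows essentially the same route as the paper's: the law of large numbers for the symmetrized empirical tensor, a triangle-inequality split controlled by Lemma \ref{lem:prodopbnd}, and the unitarity of the Lemma \ref{lem:hstens} unfolding, reducing everything to $\widehat{W} \cip W$. The only substantive difference is that you actually justify $\widehat{W} \cip W$ (via the rank-$m$ spectral gap of $C$, obtained from Lemmas \ref{lem:linind} and \ref{lem:tensrank}, together with eigenvalue perturbation and the continuous mapping theorem) and the final passage from $\widehat{T} \cip T$ to $\widehat{T}\widehat{T}^H \cip TT^H$, two steps the paper asserts without argument.
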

	\begin{proof}[Proof of Theorem \ref{thm:consistency}]
	Let 
	\begin{align*}
		Q &=  \sum_{i=1}^m w_i p_i^{\otimes 2m-1}
	\end{align*}
	and
	\begin{align*}
		\widehat{Q} &=   \frac{1}{\left( 2m-1 \right)!} \frac{1}{n} \sum_{i=1}^n \sum_{\sigma \in S_{2m-1}} X_{i,\sigma\left( 1 \right)} \otimes \cdots \otimes X_{i,\sigma\left( 2m-1 \right)}.
	\end{align*}
	Note that 
	\begin{eqnarray*}
		\left( I\otimes W \otimes W \right)\left( Q \right) &=&  \sum_{i=1}^m w_i p_i \otimes W\left( p_i^{\otimes m-1}  \right) \otimes W\left( p_i^{\otimes m-1} \right)
	\end{eqnarray*}
	and
	\begin{align*}
		&\left( I\otimes \widehat{W}\otimes \widehat{W} \right)(\widehat{Q})  \\
		&= \frac{1}{\left( 2m-1 \right)!} \frac{1}{n} \sum_{i=1}^n \sum_{\sigma \in S_{2m-1}} X_{i,\sigma\left( 1 \right)} \otimes \widehat{W} \left( X_{i,\sigma\left( 2 \right)} \otimes \cdots \otimes X_{i,\sigma\left( m \right)} \right) \otimes \cdots \\
		&\qquad \widehat{W}  \l(X_{i,\sigma\left( m+1 \right)} \otimes \cdots \otimes X_{i,\sigma\left( 2m-1 \right)}\r).
	\end{align*}
	Since the transform in Lemma \ref{lem:hstens} is unitary, we have that 
	\begin{eqnarray*}
            \l\|T - \widehat{T} \r\|_\hs = \l\|\left( I\otimes W\otimes W \right) (Q)  - \left( I\otimes \widehat{W}\otimes \widehat{W} \right)\left( \widehat{Q} \right)\r\|_{\lrd^{\otimes 2m-1}}.
	\end{eqnarray*}
	We will now show that $\l\|T - \widehat{T}\r\| \cip 0$. 
	\begin{eqnarray*}
		\l\|T - \widehat{T} \r\| 
		&\le& \l\|T - \widehat{T} \r\|_{\hs} \\
                &=& \l\|(I \otimes W \otimes W)(Q) - \left( I \otimes \widehat{W} \otimes \widehat{W}\right) \left( \widehat{Q} \right) \r\|_{\lrd^{\otimes 2m-1}}\\
                &\le& \l\|(I \otimes W \otimes W)(Q) -  (I \otimes W \otimes W)(\widehat{Q})\r\|_{\lrd^{\otimes 2m-1}}  \\
                && +\l\|  (I \otimes W \otimes W)(\widehat{Q}) - \left( I \otimes \widehat{W} \otimes \widehat{W}\right) \left( \widehat{Q} \right) \r\|_{\lrd^{\otimes 2m-1}}\\
                &\le& \l\|I \otimes W \otimes W\r\| \l\|Q - \widehat{Q}\r\|_{\lrd^{\otimes 2m-1}}  \\
                && +\l\|  I \otimes W \otimes W -  I \otimes \widehat{W} \otimes \widehat{W}  \r\| \l\|\widehat{Q}\r\|_{\lrd^{\otimes 2m-1}}.
	\end{eqnarray*}
	We have that $\mathbb{E} \left[ \widehat{Q} \right] = Q$ so the first summand goes to zero in probability by the law of large numbers. All we need to show is that $\l\|  I \otimes W \otimes W -  I \otimes \widehat{W} \otimes \widehat{W}  \r\| \cip 0$.

	From Lemma \ref{lem:prodopbnd} we have that 
	\begin{align*}
		\l\|  I \otimes W \otimes W -  I \otimes \widehat{W} \otimes \widehat{W}  \r\| 
		&\le \l\|I \r\| \l\|   W \otimes W -   \widehat{W} \otimes \widehat{W}  \r\| \\
		&= \l\|   W \otimes W -   \widehat{W} \otimes \widehat{W}  \r\|\\
		&\le \l\|   W \otimes W - W \otimes \widehat{W}\r\|+\cdots\\
		&\qquad \l\|W \otimes \widehat{W} -  \widehat{W} \otimes \widehat{W}  \r\|\\
		&= \l\|W\r\| \l\| W - \widehat{W}\r\| +\l\|\widehat{W} \r\| \l\|W  -  \widehat{W}   \r\|\\
		&= \l(\l\|W\r\| + \l\|\widehat{W} \r\| \r)\l\| W - \widehat{W}\r\|.
	\end{align*}
	The left factor converges in probability to $2\l\|W\r\|$ and the right factor converges to 0 in probability and so we have that $\l\|T - \widehat{T}\r\| \cip 0 $. From this we also have that $\l\| \widehat{T}\widehat{T}^H - TT^H\r\| \cip 0$. 
\end{proof}

As demonstrated earlier in this section the mixture components are recovered by applying a composition of linear and continuous operators to the eigenvectors of $TT^H$, thus consistent estimation of the eigenvectors of $TT^H$ gives us consistent estimation of the mixture components.

\section{Experiments}\label{sec:experiments}
	Here we will present some experimental results of our algorithm applied to a simple synthetic dataset. The sample space for the experiments is $\Omega = \left\{ 0,1,2 \right\}$. The mixture components of our dataset are $\mu_1,\mu_2,\mu_3$ with $\mu_1$ distributed according to a binomial distribution with $n=2$ and $p = 0.2$, $\mu_2$ is similar with $p=0.8$ and $\mu_3 = \frac{1}{3} \mu_1 + \frac{2}{3}\mu_2$. The component weights are $w_1 = 0.5, w_2 = 0.3, w_3 = 0.2$. We chose these mixture components so that they are not particularly nice. Specifically,  the mixture components are not linearly independent, and when considered as vectors in $\rn^3$, $\mu_1$ and $\mu_2$ have the same norm. Our mixture of measures is $\sP = \sum_{i=1}^3 w_i \delta_{\mu_i}$ and our samples come from $V_5\left( \sP \right)$.

	 We construct our own performance measure which allows us to judge the performance of the estimated components jointly. Let $\widehat{\mu}_1, \widehat{\mu}_2,\widehat{\mu}_3$ be the three estimates for the mixture components from some algorithm. We will view these estimates as vectors in $\rn^3$. Our performance measure is $\min_{\sigma \in S_3} \frac{1}{3}\sum_{i=1}^3 \l\|\mu_i - \widehat{\mu}_{\sigma\left( i \right)}\r\|_{\ell^1\left( \rn^3 \right)}$. That is, we take the average of total variations of the best matching of the estimated mixture components to the true components.
	\subsection{Proposed Algorithms}
        We include two different implementations of our proposed algorithm. For our first implementation we use the ``random dominating measure'' technique described in Section \ref{sec:eigspread}. The random dominating measure was generated using the square of iid Gaussian random variables with mean $0$ and standard deviation 0.03. We used the Gaussian random variables instead of a uniform distribution for the random dominating measure because the Gaussian random measure performed better. 

         The purpose of the random dominating measure is to create a spectral gap between the mixture components. Intuitively, it seems reasonable that if we choose the dominating measure ``well'' then we will end up with large spectral gaps without making any of the component norms so diminutive as to become unnoticeable. In the interest of exploring this idea we tested different dominating measures until we found one that improved algorithmic performance significantly and include these experimental results as well. The dominating measure we settled on for $\xi$ is $\xi\left( \left\{ 0 \right\} \right) = 3^2, \xi\left( \left\{ 1 \right\} \right) = 2^2$ and $\xi\left( \left\{ 2 \right\} \right) = 1$. We include the experimental results for this ``fixed dominating measure'' implementation. These experiments strongly indicate the possibility for significant improvements to our algorithm by choosing the dominating measure intelligently.

         Finally we made one minor adjustment to the algorithms described earlier. If the estimators described above yield a component which has a negative entry, we simply set the negative entry to zero and renormalise. 

	Both of these implementations were run on two experimental scenarios, one with 50,000 random groups and the other with 10,000,000 random groups. We repeated each experiment 20 times and report relevant statistics.

	\subsection{Competing Algorithms}
        We compare our algorithm to the algorithm from \cite{anandkumar14} as well as simply choosing $3$ measures uniformly at random from the probabilistic simplex. The randomly selected components algorithm was repeated 1000 times. The algorithm in \cite{anandkumar14} is designed to work on random groups with three samples and a mixture of measures with linearly independent components. We apply the algorithm in \cite{anandkumar14} to the population tensor associated with $V_3\left(\sP \right)$, not a finite sample of that tensor.
	\subsection{Results}
		The results are summarized in Table 1. As expected the algorithm from \cite{anandkumar14} is not capable of recovering the mixture components since they are not linearly independent. The algorithm in \cite{anandkumar14} uses a ``tensor power method'' to recover the mixture components. In that paper the authors demonstrate that this method is guaranteed to recover the components if they are linearly independent. In our experiments we noticed that the components returned from the tensor power method were not unique and depended on the vector chosen for the initialization of the algorithm. In our experiments we chose the initial vector randomly using an isotropic Gaussian distribution. We performed the tensor power method with many random initializations and the performance measure of the returned components always settled on one of two values, which are both reported in Table 1. Presumably this behaviour is also due to the violation of the linear independence assumption.
	\begin{table*}[!tb]
		\caption{Experimental Results}
		\hfill{}
		\begin{center}
			\begin{tabular}{| *{2}{c|}}
				\hline
				Method &Performance\\
				\hline
				Random Dominating Measure, 50,000 samples& Mean:0.1407, Variance:0.0169\\
				\hline
				Fixed Dominating Measure, 50,000 samples & Mean:0.0524, Variance:0.0011\\
				\hline
				Random Dominating Measure, 10,000,000 samples & Mean:0.0433, Variance:0.0062\\
				\hline
                                Fixed Dominating Measure, 10,000,000 samples & Mean:0.0037, Variance: $4e{-6}$\\
				\hline
				Randomly Selected Measures& Mean:0.5323, Variance:0.0203\\
				\hline
				Anandkumar, et al. \cite{anandkumar14}& 0.3214 or 0.1758\\
				\hline
			\end{tabular}
		\end{center}
		\hfill{}
	\end{table*}

        \section{Discussion}\label{sec:discussion}
	In closing, we offer the following observations related to our results.
	\subsection{Possible Algorithm Improvements}
	We feel that there is significant room left for improving our proposed algorithm. Though we do not include these experiments, we observed a phenomena that having a large separation between the norms of the components significantly improves the ability for the algorithm to recover the mixture components. As the experiments demonstrate, choosing a good dominating measure which separates the norms can improve performance. An avenue for possible improvement is intelligent selection of a dominating measure. One possible disadvantage of choosing the dominating measure with iid random variables is that a sort of central limit type of effect occurs which draws the norms together. Perhaps there is some way to select the dominating measure from the data which will improve performance.

	A second improvement may come from better estimates of the $C$ and $T$ operators in the algorithm. Principally, estimating these depends on good estimates of symmetric tensors which represent categorical distributions. It has been shown that the estimation of discrete distributions can be improved by not simply using the frequencies of each occurrence of each category \cite{lehmann03,valiant16,orlitsky15,kamath15,han15,paninski05}. It seems possible that leveraging the techniques used for estimating categorical distributions with the structure of symmetric tensors can yield improved estimates of the symmetric tensors we use and thus improve the performance of the algorithm.

	\subsection{Potential Statistical Test and Estimator}
	The results on determinedness suggest the possibility of a goodness of fit test. Suppose we have grouped samples from some mixture of measures $\sP' = \sum_{i=1}^{m'} w_i' \delta_{\mu_i'}$. Further suppose some null hypothesis 
	\begin{align*}
            H_0: \sP' = \sP \triangleq \sum_{i=1}^m w_i \delta_{\mu_i}.
	\end{align*}
	Given data from $V_{2m}\left( \sP' \right)$ we may be able to reject the null hypothesis provided we have some way of estimating $M \triangleq \sum_{i=1}^m w_i \mu_i^{\times 2m}$ from the groups of samples. We will call such an estimator $\widehat{M}$. If $\widehat{M}$ does not converge to $M$ then we can reject the null hypothesis. The implementation and analysis of such an estimator would depend on the setting and is outside the scope of this paper

	One interesting observation from the proof of Theorem \ref{thm:det} is that, if $\sP= \sum_{i=1}^{m}  w_i \delta_{\mu_i}$ is a mixture of measures, $p_i$ is a pdf for $\mu_i$ for all $i$, and $n>m$, then the rank of $\sum_{i=1}^m a_i p_i^{\otimes n} \otimes p_i^{\otimes n}$ will be exactly $m$. This suggests a statistical estimator for the number of mixture components. The form of this tensor is amenable to spectral methods since it is a positive semi-definite tensor of order 2, which is akin to a positive semi-definite matrix. Embedding the data with the kernel mean mapping, using a universal kernel \cite{micchelli06}, seems like a promising approach to constructing such a test or estimator.
	\subsection{Identifiability and the Value $2n-1$}
	The value $2n-1$ seems to carry some significance for identifiability beyond the setting we proposed. This value can also be found in results concerning metrics on trees \cite{pachter04}, hidden Markov models \cite{paz71}, and frame theory, with applications to signal processing \cite{balan06}. All of these results are related to identifiability of an object or the injectivity of an operator. We can offer no further insight as to why this value recurs, but it appears to be an algebraic phenomenon.
	\appendix 
	\section{Additional Proofs} \label{appx:proofs}
	Some of the proofs use Hilbert-Schmidt operators. See Definition \ref{def:hs} for the definition of Hilbert-Schmidt operator.
	\begin{proof}[Proof of Lemma \ref{lem:represent}]
		Because both representations are minimal it follows that $\alpha'_i \neq 0$ for all $i$ and $\nu_i' \neq \nu_j'$ for all $i \neq j$. From this we know $\sQ\left( \l\{\nu_i'\r\} \right) \neq 0$ for all $i$. Because $\sQ\left( \l\{\nu_i'\r\} \right) \neq 0$ for all $i$ it follows that for any $i$ there exists some $j$ such that $\nu_i' = \nu_j$. Let $\psi: \left[ r \right] \to \left[ r \right]$ be a function satisfying $\nu_i' = \nu_{\psi\left( i \right)}$. Because the elements $\nu_1,\ldots,\nu_r$ are also distinct, $\psi$ must be injective and thus a permutation. Again from this distinctness we get that, for all $i$, $\sQ\left( \left\{ \nu_i' \right\}  \right)= \alpha'_i =\alpha_{\psi\left( i \right)}$ and we are done.
	\end{proof}

	\begin{proof}[Proof of Lemma \ref{lem:ident} and \ref{lem:det}]
		We will proceed by contradiction. Let $\sP = \sum_{i=1}^m a_i \delta_{\mu_i}$ be $n$-identifiable/determined, let $\sP' = \sum_{j=1}^l b_j \delta_{\nu_j}$ be a different mixture of measures, with $l\le m$ for the $n$-identifiable case, and 
		\begin{eqnarray*}
			\sum_{i=1}^m a_i \mu_i^{\times q} = \sum_{j=1}^l b_j \nu_j^{\times q}
		\end{eqnarray*}
		for some $q>n$. Let $A \in \sF^{\times n}$ be arbitrary. We have
		\begin{eqnarray*}
			\sum_{i=1}^m a_i \mu_i^{\times q} &=& \sum_{j=1}^l b_j \nu_j^{\times q}\\
			\Rightarrow \sum_{i=1}^m a_i \mu_i^{\times q}\left( A\times \Omega^{\times q-n} \right) &=& \sum_{j=1}^l b_j \nu_j^{\times q}\left( A\times \Omega^{\times q-n} \right)\\
			\Rightarrow \sum_{i=1}^m a_i \mu_i^{\times n}\left( A \right) &=& \sum_{j=1}^l b_j \nu_j^{\times n}\left( A  \right).
		\end{eqnarray*}
		This implies that $\sP$ is not $n$-identifiable/determined, a contradiction.
	\end{proof}
	\begin{proof}[Proof of Lemma \ref{lem:noident} and \ref{lem:nodet}]
		Let a mixture of measures $\sP = \sum_{i=1}^m a_i \delta_{\mu_i}$ not be $n$-identifiable/determined. It follows that there exists a different mixture of measures $\sP' = \sum_{j=1}^l b_j \delta_{\nu_j}$, with $l\le m$ for the $n$-identifiability case, such that
		\begin{eqnarray*}
			\sum_{i=1}^m a_i \mu_i^{\times n} &=& \sum_{j=1}^l b_j \nu_j^{\times n}.
		\end{eqnarray*}
		Let $A \in \sF^{\times q}$ be arbitrary, we have
		\begin{eqnarray*}
			\sum_{i=1}^m a_i \mu_i^{\times n}\left( A\times \Omega^{\times n-q} \right) &=& \sum_{j=1}^l b_j \nu_j^{\times n}\left( A\times \Omega^{\times n-q} \right)\\
			\Rightarrow \sum_{i=1}^m a_i \mu_i^{\times q}\left( A  \right) &=& \sum_{j=1}^l b_j \nu_j^{\times q}\left( A \right)
		\end{eqnarray*}
		and therefore $\sP$ is not $q$-identifiable/determined.
	\end{proof}

	\begin{proof}[Proof of Lemma \ref{lem:l2prod}]
		Example 2.6.11 in \cite{kadison83} states that for any two $\sigma$-finite measure spaces $\left( S,\mathscr{S}, m \right), \left( S',\mathscr{S}', m' \right)$ there exists a unitary operator $U: L^2\left( S,\mathscr{S}, m \right) \otimes L^2 \left( S',\mathscr{S'}, m' \right) \to L^2\left( S\times S', \mathscr{S}\times \mathscr{S'}, m\times m' \right)$ such that, for all $f,g$,
		\begin{eqnarray*}
			U(f\otimes g) = f(\cdot)g(\cdot).
		\end{eqnarray*}
		Because $\left( \Psi, \sG, \gamma \right)$ is a $\sigma$-finite measure space it follows that $\left( \Psi^{\times m}, \sG^{\times m}, \gamma^{\times m} \right)$ is a $\sigma$-finite measure space for all $m\in \mathbb{N}$. We will now proceed by induction. Clearly the lemma holds for $n=1$. Suppose the lemma holds for $n-1$. From the induction hypothesis we know that there exists a unitary transform $U_{n-1}: L^2\left( \Psi, \sG, \gamma \right)^{\otimes n-1} \to L^2 \left( \Psi^{\times n-1} ,\sG ^{\times n-1}  , \gamma^{\times n-1} \right)$ such that for all simple tensors $f_1\otimes\cdots \otimes f_{n-1} \in L^2\left( \Psi, \sG, \gamma \right)^{\otimes n-1}$ we have $U_{n-1}\left( f_1\otimes\cdots \otimes f_{n-1} \right) = f_1(\cdot)\cdots f_{n-1}\left( \cdot \right)$. Combining $U_{n-1}$ with the identity map via Lemma \ref{lem:unitprod} we can construct a unitary operator $T_n: L^2\left( \Psi, \sG, \gamma \right)^{\otimes n-1} \otimes L^2\left( \Psi, \sG, \gamma \right) \to L^2 \left( \Psi^{\times n-1} ,\sG ^{\times n-1}  , \gamma^{\times n-1} \right) \otimes L^2\left( \Psi, \sG, \gamma \right)$, which maps $f_1\otimes\cdots\otimes f_{n-1}  \otimes f_n \mapsto f_1(\cdot)\cdots f_{n-1}(\cdot) \otimes f_n$.

		From the aforementioned example there exists a unitary transform $K_n:L^2\left( \Psi^{\times n-1},\sG^{\times n-1}, \gamma^{\times n-1} \right)\otimes L^2\left( \Psi,\sG, \gamma \right)\to L^2 \left( \Psi^{\times n-1} \times \Psi,\sG ^{\times n-1} \times \sG , \gamma^{\times n-1}\times \gamma\right)$ which maps simple tensors $g\otimes g' \in L^2\left( \Psi^{\times n-1},\sG^{\times n-1}, \gamma^{\times n-1} \right)\otimes L^2\left( \Psi,\sG, \gamma \right)$ as $K_n\left( g\otimes g' \right) = g(\cdot) g'(\cdot)$. Defining $U_n(\cdot)= K_n\left( T_n \left( \cdot \right) \right)$ yields our desired unitary transform.
	\end{proof}

	\begin{proof}[Proof of Lemma \ref{lem:unitprod}]
		Lemma \ref{lem:hsprod} states that there exists a continuous linear operator $\tilde{U}:H_1 \otimes \cdots \otimes H_n \to H_1' \otimes \cdots \otimes H_n'$ such that $\tilde{U}\left( h_1 \otimes\cdots \otimes h_n \right) = U_1(h_1) \otimes \cdots \otimes U_n(h_n)$ for all $h_1 \in H_1 ,\cdots, h_n \in H_n$. Let $\widehat{H}$ be the set of simple tensors in $H_1 \otimes \cdots \otimes H_n$ and $\widehat{H}'$ be the set of simple tensors in $H_1'\otimes \cdots \otimes H_n'$. Because $U_i$ is surjective for all $i$, clearly $\tilde{U}(\widehat{H}) = \widehat{H}'$. The linearity of $\tilde{U}$ implies that $\tilde{U}(\spn(\widehat{H}))= \spn(\widehat{H}')$. Because $\spn(\widehat{H}')$ is dense in $H_1'\otimes \cdots \otimes H_n'$ the continuity of $\tilde{U}$ implies that $\tilde{U}(H_1\otimes\cdots \otimes H_n) = H_1'\otimes \cdots \otimes H_n'$ so $\tilde{U}$ is surjective. All that remains to be shown is that $\tilde{U}$ preserves the inner product (see Theorem 4.18 in \cite{introhilb}). By the continuity of inner product we need only show that $\l<h, g\r>=\l<\tilde{U}(h), \tilde{U}(g)\r>$ for $h,g \in \spn(\widehat{H})$. With this in mind let $h_1,\ldots, h_N,g_1,\ldots,g_M$ be simple tensors in $H_1\otimes \cdots \otimes H_n$. We have the following
		\begin{eqnarray*}
			\l<\tilde{U}\l(\sum_{i=1}^N h_i\r),\tilde{U}\l(\sum_{j=1}^M g_j\r) \r>
			&=& \l<\sum_{i=1}^N \tilde{U}\l(h_i\r),\sum_{j=1}^M \tilde{U}\l(g_j\r) \r>\\
		 &=& \sum_{i=1}^N\sum_{j=1}^M\l< \tilde{U}\l(h_i\r), \tilde{U}\l(g_j\r) \r>\\
		 &=& \sum_{i=1}^N\sum_{j=1}^M\l< h_i, g_j \r>\\
		 &=& \l< \sum_{i=1}^Nh_i, \sum_{j=1}^M g_j \r>.
		\end{eqnarray*}
		We have now shown that $\tilde{U}$ is unitary which completes our proof.
	\end{proof}
	\begin{proof}[Proof of Lemma \ref{lem:linind}]
		We will proceed by induction. For $n=2$ the lemma clearly holds. Suppose the lemma holds for $n-1$ and let $h_1,\ldots, h_n$ satisfy the assumptions in the lemma statement. Let $\alpha_1,\ldots, \alpha_n$ satisfy
		\begin{eqnarray}
			\sum_{i=1}^n \alpha_i h_i^{\otimes n-1}  = 0. \label{lisum}
		\end{eqnarray}
		To finish the proof we will show that $\alpha_1$ must be zero which can be generalized to any $\alpha_i$.
		Applying Lemma \ref{lem:hstens} to (\ref{lisum}) we get
		\begin{eqnarray}
			\sum_{i=1}^n \alpha_i h_i^{\otimes n-2}  \l<h_i, \cdot \r>  = 0. \label{lioper}
		\end{eqnarray}
		Because $h_1$ and $h_n$ are linearly independent we can choose $z$ such that $\l<h_1,z\r> \neq 0$ and $z\perp h_n$. Plugging $z$ into (\ref{lioper}) yields
		\begin{eqnarray*}
			\sum_{i=1}^{n-1} \alpha_i h_i^{\otimes n-2}\l<h_i, z \r>  = 0 
		\end{eqnarray*}
		and therefore $\alpha_1=0$ by the inductive hypothesis.
	\end{proof}
	\begin{proof}[Proof of Lemma \ref{lem:tensrank}]
		Let $\dim\left( \spn\left( h_1,\ldots,h_m \right) \right)= l$ and let $h = \sum_{i=1}^m h_i^{\otimes 2}$. Without loss of generality assume that $h_1,\ldots,h_l$ are linearly independent and nonzero. From Lemma \ref{lem:hstens} there exists a unitary transform  $U:H\otimes H \to \hs\left( H,H \right)$ which, for any simple tensor $x\otimes y$, we have $U(x\otimes y) = x\l<y,\cdot\r>$.

		First we will show that the rank is greater than or equal to $l$ by contradiction. Suppose that $g = \sum_{i=1}^{l'}x_i \otimes y_i = h$ with $l'<l$. Since $l'<l$ there must exist some $j$ such that $h_j \notin \spn\left( x_1,\ldots,x_{l'} \right)$. Let $z\perp x_1,\ldots,x_{l'}$ and $z \not\perp h_j$. Now we have
		\begin{eqnarray*}
			\l<z\otimes z, h\r> = \sum_{i=1}^m \l<z,h_i\r>^2 \ge \l<z,h_j\r>^2 > 0,
		\end{eqnarray*}
		but 
		\begin{eqnarray*}
			\l<z\otimes z, g\r> = \sum_{i=1}^{l'} \l<z,x_i\r> \l<z, y_i\r> = 0,
		\end{eqnarray*}
		a contradiction.

		For the other direction, observe that $U(h)$ is a compact Hermitian operator and thus admits an spectral decomposition (\cite{introhilb} Theorem 8.15). From this we have that $U(h) = \sum_{i=1}^m h_i \l<h_i,\cdot\r> =  \sum_{i=1}^\infty \lambda_i \l<\psi_i, \cdot\r> \psi_i$  with $\left( \psi_i \right)_{i=1}^\infty$ orthonormal and $\lambda_i\ge 0$ for all $i$ since $U(h)$ is PSD. Clearly the dimension of the span of $U\left( h \right) $ is less than or equal to $ l$ and thus this decomposition has exactly $l$ nonzero terms. From this we can let $U(h) = \sum_{i=1}^l \lambda_i \l<\psi_i, \cdot\r> \psi_i$ and applying $U^{-1}$ we have that $h=\sum_{i=1}^l \lambda_i \psi_i^{\otimes 2}$. From this it follows that the rank of $h$ is less than or equal to $l$ and we are done.
	\end{proof}
	\begin{proof}[Proof of Lemma \ref{lem:noidentdown}]
		The lemma is obvious when $n=n'$. Assume that $n'<n$. Let $A \in \sG^{\times n'}$ be arbitrary. We have that 
		\begin{eqnarray*}
			\sum_{i=1}^m a_i \gamma_i^{\times n}\left( A\times \Psi^{\times n-n'} \right) &=&  \sum_{j=1}^l b_j \pi_j^{\times n}\left( A\times \Psi^{\times n-n'} \right)\\
			\Rightarrow \sum_{i=1}^m a_i \gamma_i^{\times 'n}\left( A\right) \gamma_i^{\times n-n'}\left( \Psi^{\times n-n'} \right) &=&  \sum_{j=1}^l b_j \pi_j^{\times n'}\left( A\right) \pi_j^{\times n- n'}\left( \Psi^{\times n-n'} \right)\\
			\Rightarrow \sum_{i=1}^m a_i \gamma_i^{\times n'}\left( A\right) &=&  \sum_{j=1}^l b_j \pi_j^{\times n'}\left( A\right).
		\end{eqnarray*}
		Since $A$ was chosen arbitrarily we have that $\sum_{i=1}^m a_i \gamma_i^{\times n'} =  \sum_{j=1}^l b_j \pi_j^{\times n'}$.
	\end{proof}

	\begin{proof}[Proof of Lemma \ref{lem:himbed}]
		Let $\pi= \sum_{i=1}^n \gamma_i$. Because $\pi$ is $\sigma$-finite for all $i$ we can define $f_i = \frac{d\gamma_i}{d\pi}$, where the derivatives are Radon-Nikodym derivatives. Let $f_k$ be arbitrary. We will first show that $f_k\le1$ $\pi$-almost everywhere. Suppose there exists a non $\pi$-null set $A \in \sG$ such that $f_i(A) >1$. Then we would have
		\begin{eqnarray*}
			\gamma_k\left( A \right)
			&=& \int_A f_k d\pi	\\
		 &>& \int_A 1 d\pi\\
		 &=& \sum_{i=1}^n \gamma_i(A)\\
		 &\ge& \gamma_k(A)
		\end{eqnarray*}
		a contradiction. From this we have
		\begin{eqnarray*}
			\int f_k^2 d\pi
			&\le& \int 1 d\pi\\
		 &\le& \sum_{i=1}^n \gamma_i(\Psi)\\
		 &<& \infty.
		\end{eqnarray*}
		From our construction it is clear that $f_i \ge 0$ $\xi$-almost everywhere so we can assert $f_i \ge 0$ without issue.
	\end{proof}

	\begin{proof}[Proof of Lemma \ref{lem:radprod}]
		The fact that $f$ is non-negative and integrable implies that the map $S \mapsto \int_S f^{\times n}d\pi^{\times n}$ is a bounded measure on $\l(\Psi^{\times n}, \sG^{\times n}\r)$ (see \cite{folland99} Exercise 2.12). 

		Let $R= R_1 \times\cdots\times R_n$ be a rectangle in $\sG^{\times n}$. Let $\mathds{1}_S$ be the indicator function for a set $S$. Integrating over $R$ and using Tonelli's theorem we get
		\begin{eqnarray*}
			\int_R f^{\times n} d \pi^{\times n}
			&=& \int \mathds{1}_Rf^{\times n}d \pi^{\times n}\\
		 &=& \int \l(\prod_{i=1}^n \mathds{1}_{R_i}(x_i)\r)\l(\prod_{j=1}^n f(x_j)\r)d \pi^{\times n}\left( x_1,\ldots,x_n \right)\\
		 &=& \int\cdots\int \l(\prod_{i=1}^n \mathds{1}_{R_i}(x_i)\r)\l(\prod_{j=1}^n f(x_j)\r)d \pi(x_1)\cdots d\pi(x_n)\\
		 &=& \int\cdots\int \l(\prod_{i=1}^n \mathds{1}_{R_i}(x_i) f(x_i)\r)d \pi(x_1)\cdots d\pi(x_n)\\
		 &=&  \prod_{i=1}^n\l(\int \mathds{1}_{R_i}(x_i) f(x_i)d \pi(x_i)\r)\\
		 &=&  \prod_{i=1}^n\gamma(R_i)\\
		 &=&  \gamma^{\times n}(R).
		\end{eqnarray*}
		Any product probability measure is uniquely determined by its measure over the rectangles (this is a consequence of Lemma 1.17 in \cite{fomp} and the definition of product $\sigma$-algebra) therefore, for all $B\in \sG^{\times n}$,
		\begin{eqnarray*}
			\gamma^{\times n}\left( B \right) = \int_B f^{\times n} d\pi^{\times n}.
		\end{eqnarray*}
	\end{proof}
	\section{Spectral Algorithm for Linearly Independent Components}\label{appx:spectral}
	Let $p_1,\ldots, p_m \in L^{2}\left( \Omega,\sF, \xi \right)$ be linearly independent pdfs with distinct norms. Their associated mixture proportions are $w_1,\ldots,w_m$. With four samples per random group we will have access to the tensors
	\begin{eqnarray}\label{eqn:appbigtensor}
		\sum_{i=1}^m w_i p_i^{\otimes 4}
	\end{eqnarray}
	and
	\begin{eqnarray} \label{eqn:appsmalltensor}
		\sum_{i=1}^m w_i p_i^{\otimes 2}.
	\end{eqnarray}

	We can transform the tensor in (\ref{eqn:appsmalltensor}) to an operator
	\begin{eqnarray*}
		C &\triangleq&  \sum_{i=1}^m w_i p_i\l<p_i,\cdot\r>\\
    &=& \sum_{i=1}^m \sqrt{w_i} p_i\l<\sqrt{w_i}p_i,\cdot\r>.
	\end{eqnarray*}
	Letting $W = \sqrt{C^\dagger}$ we have that $W \sqrt{w_1} p_1,\ldots,W \sqrt{w_m} p_m$ are orthonormal. Applying $I\otimes W \otimes I \otimes W$ to the tensor in (\ref{eqn:appbigtensor}) we can construct the tensor
	\begin{eqnarray*}
		\sum_{i=1}^m w_i p_i \otimes Wp_i \otimes p_i \otimes Wp_i = \sum_{i=1}^m  p_i \otimes W\sqrt{w_i}p_i \otimes p_i \otimes W\sqrt{w_i}p_i.
	\end{eqnarray*}
	which can be transformed into the operator
	\begin{eqnarray} \label{eqn:appbigop}
		\sum_{i=1}^m  p_i \otimes W\sqrt{w_i}p_i \l< p_i \otimes W\sqrt{w_i}p_i,\cdot\r>.
	\end{eqnarray}

	Note that for $i\neq j$ we have
	\begin{eqnarray*}
		\l<p_i\otimes W \sqrt{w_i}p_i,p_j\otimes W \sqrt{w_j}p_j\r> = \l<p_i,p_j\r> \l<W \sqrt{w_i} p_i, W \sqrt{w_j}\r> = 0.
	\end{eqnarray*}
	We also have that, for all $i$
	\begin{eqnarray*}
		\l\|p_i\otimes W\sqrt{w_i} p_i\r\|
		&=& \sqrt{\l<p_i\otimes W\sqrt{w_i} p_i,p_i\otimes W\sqrt{w_i} p_i\r>}\\
  &=& \sqrt{\l<p_i , p_i\r>\l<W\sqrt{w_i}p_i, W\sqrt{w_i}p_i\r>}\\
  &=& \sqrt{\l<p_i , p_i\r>}\\
  &=& \l\|p_i\r\|
	\end{eqnarray*}
	and thus the tensors $p_1\otimes W\sqrt{w_1}p_1, \ldots,p_m\otimes W\sqrt{w_m}p_m$ have distinct norms. Because of this the spectral decomposition of the operator in (\ref{eqn:appbigop}) will yield the eigenvectors $p_1 \otimes W\sqrt{w_1}p_1, \ldots,p_m \otimes W\sqrt{w_m}p_m$. Then, using the techniques from Appendix \ref{appx:proofs}, we can recover the mixture components and mixture proportions.
	\bibliographystyle{plain}
	\bibliography{rvdm}

	\end{document}